\theoremstyle{plain}
\newtheorem{theorem}{Theorem}[section]
\theoremstyle{definition}
\newtheorem{definition}[theorem]{Definition}
\theoremstyle{remark}
\newtheorem{remark}[theorem]{Remark}
\def\eqref#1{equation~\ref{#1}}
\def\1{\bm{1}}
\def\ra{{\textnormal{a}}}
\def\rc{{\textnormal{c}}}
\def\ro{{\textnormal{o}}}
\def\rs{{\textnormal{s}}}
\def\rva{{\mathbf{a}}}
\def\vtheta{{\bm{\theta}}}
\def\va{{\bm{a}}}
\def\vb{{\bm{b}}}
\def\vc{{\bm{c}}}
\def\vg{{\bm{g}}}
\def\vp{{\bm{p}}}
\def\vq{{\bm{q}}}
\def\vr{{\bm{r}}}
\def\vs{{\bm{s}}}
\def\vv{{\bm{v}}}
\def\vx{{\bm{x}}}
\def\mA{{\bm{A}}}
\def\mB{{\bm{B}}}
\def\mH{{\bm{H}}}
\def\mS{{\bm{S}}}
\DeclareMathAlphabet{\mathsfit}{\encodingdefault}{\sfdefault}{m}{sl}
\SetMathAlphabet{\mathsfit}{bold}{\encodingdefault}{\sfdefault}{bx}{n}
\newcommand{\E}{\mathbb{E}}
\DeclareMathOperator*{\argmax}{arg\,max}
\newcommand{\shangding}[1]{{ \color{magenta}[shangding: #1]}}
\DeclareMathAlphabet\mathbfcal{OMS}{cmsy}{b}{n}
\newenvironment{smalleralign}[1][\small]
 {\par\nopagebreak\leavevmode\vspace*{-\baselineskip}%
  \skip0=\abovedisplayskip
  #1%
  \def\maketag@@@##1{\hbox{\m@th\normalfont\normalsize##1}}%
  \abovedisplayskip=\skip0
  \align}
 {\endalign\ignorespacesafterend}
\begin{document}

\twocolumn[
\icmltitle{Multi-Agent Constrained Policy Optimisation}



\icmlsetsymbol{equal}{*}

\begin{icmlauthorlist}
\icmlauthor{Shangding Gu}{equal,yyy}
\icmlauthor{Jakub Grudzien Kuba}{equal,comp}
\icmlauthor{Munning Wen}{sch}
\icmlauthor{Ruiqing Chen}{shtech}
\icmlauthor{Ziyan Wang}{ucl}
\icmlauthor{Zheng Tian}{shtech}
\icmlauthor{Jun Wang}{ucl}
\icmlauthor{Alois Knoll}{yyy}
\icmlauthor{Yaodong Yang}{pku}
\end{icmlauthorlist}

\icmlaffiliation{yyy}{Technical University of Munich}
\icmlaffiliation{comp}{University of Oxford}
\icmlaffiliation{sch}{Shanghai Jiao Tong University}
\icmlaffiliation{shtech}{ShanghaiTech University}
\icmlaffiliation{ucl}{University College London}
\icmlaffiliation{pku}{Institute for AI, Peking University \& BIGAI}

\icmlcorrespondingauthor{Yaodong Yang}{yaodong.yang@pku.edu.cn}

\icmlkeywords{Machine Learning, ICML}

\vskip 0.3in
]



\printAffiliationsAndNotice{\icmlEqualContribution} 

\begin{abstract}
Developing intelligent agents that satisfy safety constraints is becoming increasingly important for many real-world applications. In multi-agent reinforcement learning (MARL) settings, policy optimisation with safety awareness is particularly challenging because each individual agent has to not only meet its own safety constraints, but also consider those of others so that their joint behaviours are guaranteed safe.    
Despite its importance, the research problem of safe multi-agent learning  has not been rigorously studied, nor a shareable testing environment or benchmarks.   
To fill these gaps, 
in this work, we formulate the safe MARL problem as a constrained Markov game and solve it with policy optimisation methods.
We propose two algorithms, \textsl{Multi-Agent Constrained Policy Optimisation} (MACPO) and \textsl{MAPPO-Lagrangian}, which leverage the theories from both  constrained policy optimisation and multi-agent trust region learning. Crucially, our methods enjoy theoretical guarantees of  both  monotonic improvement in reward and satisfaction of safety constraints  at every iteration. 
To examine the effectiveness of our methods, we develop the benchmark suites of \textsl{Safe Multi-Agent MuJoCo}  and \textsl{Safe Multi-Agent Robosuite} that involve a variety of  MARL baselines.   Experimental results justify that MACPO and MAPPO-Lagrangian can consistently satisfy safety constraints, meanwhile achieving comparable performance in rewards. Videos and code are released at 
the link~\footnote{ \url{https://github.com/chauncygu/Multi-Agent-Constrained-Policy-Optimisation.git}}.
\end{abstract}



\vspace{-15pt}
\section{Introduction}
In recent years,  reinforcement learning (RL) techniques  have achieved remarkable successes on a variety of  complex tasks \citep{silver2016mastering,silver2017mastering, vinyals2019grandmaster,yang2017study,zhou2020smarts}.  
Powered by deep neural networks, deep RL enables learning sophisticated behaviours. 
On the other hand, deploying neural networks turns the optimisation procedure from policy space to  parameter space; this enables gradient-based methods to be applied \citep{sutton1999policy, lillicrap2015continuous, schulman2017proximal}. For policy gradient methods, at every iteration, the parameters of a policy network are updated in the direction of the gradient that maximises return. 

However, policies that are purely optimised for reward maximisation are rarely applicable to real-world problems. 
In many  applications, an agent is often required not to 
visit certain states or take certain actions, which are thought of as ``unsafe'' either for itself or for other elements in the background \citep{moldovan2012safe, achiam2017constrained}.
For instance, a robot carrying materials
in a warehouse should not damage its parts while delivering an item to a shelf, nor should a self-driving car cross on the red light while rushing towards its destination \citep{shalev2016safe}.
To tackle these issues, \textsl{Safe RL} \citep{moldovan2012safe, garcia2015comprehensive} is  proposed, aiming to develop algorithms that learn policies that satisfy safety constraints. Despite the additional requirement of safety on solutions, algorithms with convergence guarantees have been proposed \citep{xu2021crpo, wei2021provably}.

Developing safe policies for multi-agent systems is  a challenging task. 
Part of the difficulty comes from solving multi-agent reinforcement learning (MARL) problems itself \citep{deng2021complexity, kuba2021settling}; more importantly, tackling safety in MARL is hard  because  each individual agent has to not only consider its own safety constraints, which  already may conflict its  reward maximisation, but also consider the safety constraints of others so that their joint behaviour is guaranteed to be safe. 
As a result, there are very few solutions that offer effective learning algorithms for safe MARL problems. 

We believe agents'  coordinations is critical to guarantee a safe multi-agent system, since agents'  joint  actions can still cause hazards  even though each individual one is  locally safe \footnote{Think about two cars that both decide to cross an unprotected junction without being aware of the existence of the other car.}.   
In fact, many of the existing methods focus on learning to cooperate \citep{yang2020overview}. 
 For example, \citet{rashid2018qmix} and \citet{yang2020multi} adopt greedy maximisation  on the local component of a monotonic joint value function, \citet{foerster2018counterfactual} estimates the policy gradient
based on the counterfactual value from a joint critic, and \citet{yu2021surprising} also employs a joint critic for proximal policy updates.   
Despite their effectiveness on cooperative tasks, it is  unclear yet how to directly incorporate safety constraints into these solution frameworks. 
Consequently, how to coordinate agents  towards reward maximisation meanwhile satisfying  safety constraints remains an unsolved  problem. 

The goal of this paper is to increase  practicality of MARL algorithms through endowing them with safety awareness. For this purpose, we introduce a general framework to formulate safe MARL problems, and solve them through multi-agent policy optimisation methods. 
Our solutions manage to extend the constrained policy optimisation technique \citep{achiam2017constrained} to multi-agent settings. Crucially,   
the resulting  algorithm attains properties of both monotonic improvement guarantee and constraints satisfaction guarantee at every iteration during training. 
Our key contributions are  two safe MARL algorithms: MACPO and MAPPO-Lagrangian. 
Additionally, we also develop the first safe MARL benchmark suite within the MuJoCo and Robosuite environments and implement a variety of MARL baselines.  
We evaluate MACPO and MAPPO-Lagrangian on a series of multi-agent control tasks, and results clearly confirm the effectiveness of our solutions both in terms of constraints satisfaction and reward maximisation. 
To our best knowledge, MACPO and MAPPO-Lagrangian are the first safety-aware model-free MARL algorithms that work effectively in the challenging continuous control tasks such as  MuJoCo and Robosuite.   


\section{Related Work}
Considering safety in the development of AI is a long-standing topic \citep{amodei2016concrete}. 
When it comes to safe reinforcement learning \citep{garcia2015comprehensive}, a commonly used  framework is  \textsl{Constrained Markov Decision Processes} (CMDPs) \citep{altman1999constrained}. In a CMDP, at every step, in addition to the reward, the environment emits  costs associated with certain constraints. As a result, the learning agent must  try to satisfy those   constraints   while maximising the total reward.  
In general, the cost from the environment can be thought of as a measure of safety. Under the framework of CMDP,  a safe policy is the one that explores the environment safely by keeping the total costs under certain thresholds. To tackle the learning problem in CMDPs, \citet{achiam2017constrained} introduced \textsl{Constrained Policy Optimisation} (CPO), which updates agent's policy under the trust region constraint \citep{schulman2015trust} to maximise surrogate return while obeying surrogate cost constraints. 
However, solving a constrained optimisation at every iteration of CPO can be 
cumbersome for implementation. An alternative solution is to apply primal-dual methods,  giving rise to methods like TRPO-Lagrangian and PPO-Lagrangian \citep{ray2019benchmarking}. 
Although these methods achieve impressive performance in terms of safety, the performance  in terms of reward is poor \citep{ray2019benchmarking}. 
Another class of algorithms that solves CMDPs is by  \cite{chow2018lyapunov,chow2019lyapunov}; these algorithms leverage the theoretical property of the Lyapunov functions and propose safe value iteration and policy gradient  procedures. In contrast to CPO, \citet{chow2018lyapunov,chow2019lyapunov} can  work with off-policy methods and can be trained end-to-end with no need for line search. 

 
Safe multi-agent learning is an emerging research domain. Despite its importance \citep{shalev2016safe},  there are few solutions that work with MARL in a model-free setting. 
CMIX~\citep{liu2021cmix} extends QMIX~\citep{rashid2018qmix} by amending the reward function to take peak constraint violations into account  to account for multi-objective constraint, yet this cannot deliver rigorous safety guarantee  during training.  
In fact, the majority of methods are designed for robotics learning. For example, the technique of   barrier certificates   \citep{borrmann2015control, ames2016control,qin2020learning} or model predictive shielding \citep{zhang2019mamps} from  control theory is used to model safety. 
These methods, however, are specifically derived for  robotics applications; they  either are  supervised learning based approaches, or require specific  assumptions on the state space and environment dynamics. 
Moreover, due to the lack of a benchmark suite for safe MARL algorithms,  
the generalisation ability of those methods is unclear. 

The most related work to ours is Safe Dec-PG \citep{lu2021decentralized} where they used the primal-dual  framework to find the saddle point between maximising reward and minimising cost. 
In particular, they proposed a  decentralised  policy descent-ascent method through a consensus network. 
However, reaching a consensus equivalently  imposes an extra constraint of parameter sharing among neighbouring agents, which could yield suboptimal solutions \citep{kuba2021trust}. 
Furthermore, multi-agent policy gradient methods can suffer from  high variance \citep{ kuba2021settling}. In contrast, our methods  employ trust region policy optimisation and do not assume any parameter sharing.  



HATRPO  \citep{kuba2021trust} introduced the first multi-agent trust region  method that  enjoys theoretically-justified monotonic improvement guarantee. 
Its key idea is to make agents follow a sequential policy update scheme so that the expected joint advantage will always be positive, thus increasing reward.  In this work, 
we show how to further develop this theory and derive a protocol which, in addition to the monotonic improvement, also guarantees to satisfy the safety constraint at every iteration during learning.  
The resulting algorithm (Algorithm \ref{algorithm:theoretical-safe-matrpo}) successfully attains theoretical guarantees of both monotonic improvement in reward and satisfaction of safety constraints.

\section{Problem Formulation}
\label{sec:problem-formulation}

We formulate the  safe MARL problem as a \textsl{constrained Markov game} $\langle \mathcal{N}, \mathcal{S}, \boldsymbol{\mathcal{A}}, \mathrm{p}, \rho^0, \gamma, R, \boldsymbol{C}, \boldsymbol{c}\rangle$. Here, $\mathcal{N}= \{1, \dots, n\}$ is the set of agents, $\mathcal{S}$ is the state space, $\boldsymbol{\mathcal{A}} = \prod_{i=1}^{n}\mathcal{A}^i$ is the product of the agents' action spaces, known as the joint action space, $\mathrm{p}: \mathcal{S} \times \boldsymbol{\mathcal{A}} \times \mathcal{S} \rightarrow \mathbb{R}$ is the probabilistic transition function, $\rho^0$ is the initial state distribution, $\gamma \in [0,1)$ is the discount factor, $R:\mathcal{S}\times\boldsymbol{\mathcal{A}}\rightarrow\mathbb{R}$ is the joint reward function, { $\boldsymbol{C} = \{C^i_j\}^{i\in\mathcal{N}}_{1 \leq j\leq m^i}$} is the set of sets of cost functions (every agent $i$ has $m^i$ cost functions) of the form $C^i_j:\mathcal{S}\times\mathcal{A}^i\rightarrow \mathbb{R}$, and finally the set of corresponding cost-constraining values is given by $\boldsymbol{c} = \{c^i_j\}^{i\in\mathcal{N}}_{1 \leq j\leq m^i}$. At time step $t$, the agents are in a state $\rs_t$, and every agent $i$ takes an action $\ra^i_t$ according to its policy $\pi^i(\ra^i|\rs_t)$.  Together with other agents' actions, it gives a joint action $\rva_t = (\ra^1_t, \dots, \ra^n_t)$ and the joint policy $\boldsymbol{\pi}(\rva|\rs) = \prod_{i=1}^{n}\pi^i(\ra^i|\rs)$. The agents receive the reward $R(\rs_t, \rva_t)$, meanwhile each agent $i$ pays the costs $C^i_j(\rs_t, \ra^i_t)$, $\forall j=1, \dots, m^i$. The environment then transits to a new state $\rs_{t+1} \sim \mathrm{p}(\cdot|\rs_t, \rva_t)$. 

In this paper, we consider a \textsl{fully-cooperative} setting where all agents share the same reward function, aiming to maximise the expected total reward of
\begin{align}
    J(\boldsymbol{\pi}) \triangleq \E_{\rs_0\sim\rho^0, \rva_{0:\infty}\sim\boldsymbol{\pi}, \rs_{1:\infty} \sim \mathrm{p}}\Big[ \sum\limits_{t=0}^{\infty}\gamma^t R(\rs_t, \rva_t) \Big], \nonumber
\end{align}
meanwhile trying to satisfy every agent $i$'s safety constraints, 
\begin{align}
    \label{eq:safety-objective}
    J^i_j(\boldsymbol{\pi}) & \triangleq
    \E_{\rs_0\sim\rho^0, \rva_{0:\infty}\sim\boldsymbol{\pi}, \rs_{1:\infty} \sim \mathrm{p}}\Big[ \sum\limits_{t=0}^{\infty}\gamma^t C^i_j(\rs_t, \ra^i_t) \Big] \leq c^i_j, \ \ \ \ \  \nonumber \\ 
    & \quad \quad \quad \quad  \quad  \quad \quad \forall j=1, \dots, m^i.
\end{align}

We define the state-action value and the state-value functions in terms of reward as
{\small
\begin{align}
Q_{\boldsymbol{\pi}}(s, \va) & \triangleq 
\E_{\rs_{1:\infty} \sim \mathrm{p}, \rva_{1:\infty}\sim\boldsymbol{\pi}}\Big[ \sum\limits_{t=0}^{\infty}\gamma^t R(\rs_t, \rva_t) \big| \rs_0 = s, \rva_0 = \va \Big], \ \ \ \ \nonumber
\\
&  \quad \text{and}  \quad  \quad 
V_{\boldsymbol{\pi}}(s) \triangleq \E_{\rva\sim\boldsymbol{\pi}}\big[ Q_{\boldsymbol{\pi}}(s, \rva) \big]. \nonumber
\end{align} }
The joint policies $\boldsymbol{\pi}$ that satisfy the Inequality (\ref{eq:safety-objective}) are referred to as \textbf{feasible}.
Notably,  in the above formulation, although the action $\ra_t^i$ of agent $i$ does not directly influence the costs $\{C^k_j(\rs_t, \ra_t^k)\}_{j=1}^{m^k}$ of other agents $k\neq i$, the action $\ra^i_t$ will implicitly influence their total costs due to the dependence on the next state $\rs_{t+1}$  \footnote{We believe that this formulation  realistically describes  multi-agent interactions in the real-world; an action of an agent has an instantaneous effect on the system only locally, but the rest of agents may suffer from its consequences at later stages. For example, consider a car that crosses on the red light, although other cars may not be at risk of riding into pedestrians immediately, the induced  traffic  may cause hazards  soon later.}. 
For the $j^{\text{th}}$ cost function of agent $i$, we define the $j^{\text{th}}$ state-action cost value function and the state cost value function as 
{\small
\begin{align}
    &Q^{i}_{j, \boldsymbol{\pi}}(s, a^i) \triangleq \nonumber \\&
   \quad \  \E_{\rva^{-i}\sim\boldsymbol{\pi}^{-i}, \rs_{1:\infty}\sim\mathrm{p}, \rva_{1:\infty}\sim\boldsymbol{\pi}}
    \Big[ \sum_{t=0}^{\infty}\gamma^t C^i_j(\rs_t, \ra^i_t) \ \big|
\rs_0 = s, \ra^i_0 = a^i \Big], \nonumber\\
    & V^{i}_{j, \boldsymbol{\pi}}(s) \triangleq 
    \E_{\rva\sim\boldsymbol{\pi}, \rs_{1:\infty}\sim\mathrm{p}, \rva_{1:\infty}\sim\boldsymbol{\pi}}
    \Big[ \sum_{t=0}^{\infty}\gamma^t C^i_j(\rs_t, \ra^i_t) \ \big|
    \ \rs_0 = s \Big]. \ \ \ \  \nonumber
\end{align}}
Notably,  the cost value functions $Q^i_{j, \boldsymbol{\pi}}$ and  $V^i_{j, \boldsymbol{\pi}}$, although  similar to traditional  $Q_{\boldsymbol{\pi}}$ and $V_{\boldsymbol{\pi}}$, involve extra indices $i$ and $j$; the superscript $i$ denotes an agent, and the subscript $j$ denotes its $j^{\text{th}}$ cost.

Throughout this work,  we pay a close attention  to the contribution to performance from  different subsets of agents, therefore, we introduce the following notations. We denote an arbitrary subset $\{i_1, \dots, i_h\}$ of agents as $i_{1:h}$; we write $-i_{1:h}$ to refer to its complement. 
Given the agent subset $i_{1:h}$, we define the  \textsl{multi-agent state-action value function}:  
\begin{align}
    Q^{i_{1:h}}_{\boldsymbol{\pi}}(s, \va^{i_{1:h}}) \triangleq 
    \E_{\rva^{-i_{1:h}}\sim\boldsymbol{\pi}^{-i_{1:h}}}\left[ Q_{\boldsymbol{\pi}}(\rs, \va^{i_{1:h}}, \rva^{-i_{1:h}}) \right]. 
    \nonumber
\end{align}

On top of it, for disjoint sets $j_{1: k}$ and $i_{1: m}$, the \textsl{multi-agent advantage function} \footnote{We would like to highlight that these multi-agent functions of   $Q_{\boldsymbol{\pi}}^{i_{1:h}}$ and $A_{\boldsymbol{\pi}}^{i_{1:h}}$, although involve agents in superscripts, describe values \textsl{w.r.t} the reward rather than costs since they do not involve cost subscripts.} is defined as follows, 
\begin{align}
    \label{eq:maad-definition}
   &  A^{i_{1:h}}_{\boldsymbol{\pi}}  \big(s,   \va^{j_{1:k}},  \va^{i_{1:h}}\big)  
    \triangleq \nonumber \\
    & \qquad \qquad  
    Q^{j_{1:k}, i_{1:h}}_{\boldsymbol{\pi}}\big(s, \va^{j_{1:k}}, \va^{i_{1:h}}\big) - 
    Q^{j_{1:k}}_{\boldsymbol{\pi}}\big(s, \va^{j_{1:k}}\big). \nonumber
\end{align} 

An interesting and important fact about the above multi-agent advantage function is that the  advantage $A^{i_{1:h}}_{\boldsymbol{\pi}}$ can be written as a sum of sequentially-unfolding multi-agent advantages of individual agents with no need for any assumptions on the joint value function (e.g., VDN \cite{sunehag2018value} and QMIX \cite{rashid2018qmix}), that is,  
\begin{restatable}[Multi-Agent Advantage Decomposition,  \cite{kuba2021settling}]{lemma}{maadlemma}
\label{lemma:maad}
For any state $s\in\mathcal{S}$, subset of agents $i_{1:h}\subseteq \mathcal{N}$, and joint action $\va^{i_{1:h}}$, the following identity holds
\begin{align}
    A_{\boldsymbol{\pi}}^{i_{1:h}}\big(s, \va^{i_{1:h}}\big) = \sum_{j=1}^{h}A_{\boldsymbol{\pi}}^{i_j}\big(s, \va^{i_{1:j-1}}, a^{i_j}\big).\nonumber
\end{align}
\vspace{-15pt}
\end{restatable}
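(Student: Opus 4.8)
The plan is to recognise that the claimed identity is, at heart, a telescoping sum, so the entire argument reduces to unfolding the definitions of the multi-agent advantage and multi-agent state-action value functions and then collapsing the resulting telescope. No probabilistic manipulation of the expectations will be required; everything follows from the algebra of the defining differences.

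First I would expand a single summand on the right-hand side. By the definition of the multi-agent advantage function, applied with preceding set $j_{1:k} = i_{1:j-1}$ and the singleton $\{i_j\}$ playing the role of the evaluated agent, each term becomes
\begin{align}
A_{\boldsymbol{\pi}}^{i_j}\big(s, \va^{i_{1:j-1}}, a^{i_j}\big) = Q^{i_{1:j}}_{\boldsymbol{\pi}}\big(s, \va^{i_{1:j}}\big) - Q^{i_{1:j-1}}_{\boldsymbol{\pi}}\big(s, \va^{i_{1:j-1}}\big). \nonumber
\end{align}
Next I would sum this over $j = 1, \dots, h$. Since consecutive summands share the factor $Q^{i_{1:j}}_{\boldsymbol{\pi}}$ (appearing positively at index $j$ and negatively at index $j+1$), the sum telescopes to
\begin{align}
\sum_{j=1}^{h} A_{\boldsymbol{\pi}}^{i_j}\big(s, \va^{i_{1:j-1}}, a^{i_j}\big) = Q^{i_{1:h}}_{\boldsymbol{\pi}}\big(s, \va^{i_{1:h}}\big) - Q^{i_{1:0}}_{\boldsymbol{\pi}}(s). \nonumber
\end{align}

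The last step is to pin down the two boundary conventions. When the agent superscript is empty (the lower boundary at $j=1$), the multi-agent state-action value function reduces, by its definition, to $\E_{\rva\sim\boldsymbol{\pi}}\big[Q_{\boldsymbol{\pi}}(s, \rva)\big] = V_{\boldsymbol{\pi}}(s)$; hence $Q^{i_{1:0}}_{\boldsymbol{\pi}}(s) = V_{\boldsymbol{\pi}}(s)$. Symmetrically, the left-hand side $A_{\boldsymbol{\pi}}^{i_{1:h}}(s, \va^{i_{1:h}})$ is, by the advantage definition with an empty preceding set, exactly $Q^{i_{1:h}}_{\boldsymbol{\pi}}(s, \va^{i_{1:h}}) - V_{\boldsymbol{\pi}}(s)$. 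Matching these two expressions closes the identity.

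I expect the only real subtlety --- rather than a genuine obstacle --- to be the careful bookkeeping of the empty-set conventions: verifying that $Q^{i_{1:j-1}}_{\boldsymbol{\pi}}$ at $j=1$ and the advantage with empty preceding set both collapse to $V_{\boldsymbol{\pi}}$ consistently, and that the disjointness hypothesis required by the advantage definition holds at every step (here $i_{1:j-1}$ and $\{i_j\}$ are disjoint because $i_{1:h}$ indexes distinct agents). Once these conventions are fixed, the telescoping is immediate and the proof is complete.
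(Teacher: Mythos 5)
Your proof is correct and is essentially the paper's own argument: both reduce the identity to a telescoping sum of $Q^{i_{1:j}}_{\boldsymbol{\pi}} - Q^{i_{1:j-1}}_{\boldsymbol{\pi}}$ terms, with the empty-superscript convention $Q^{i_{1:0}}_{\boldsymbol{\pi}}(s) = V_{\boldsymbol{\pi}}(s)$ closing the boundary. The only cosmetic difference is direction — the paper expands the left-hand side into the telescope while you collapse the right-hand side — which changes nothing of substance.
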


\section{Multi-Agent Constrained Policy Optimisation}
In this section, we first present a theoretically-justified safe multi-agent policy iteration procedure, which leverages multi-agent trust region learning and constrained policy optimisation to solve constrained Markov games. Based on this, we propose two practical deep MARL algorithms, enabling optimising  neural-network based policies that satisfy safety constraints.  Throughout this work, we refer the symbols $\boldsymbol{\pi}$ and $\boldsymbol{\bar{\pi}}$ to be the ``current" and the ``new" joint policies, respectively.

\subsection{Multi-Agent Trust Region Learning With Constraints}
The first multi-agent trust region method that enjoys theoretically-justified monotonic improvement guarantee was first introduced by \citet{kuba2021trust}.  
Specifically, it is built on the multi-agent advantage decomposition in Lemma \ref{lemma:maad}, and the ``surrogate'' return given as follows. 
 
\begin{definition}
    \label{definition:localsurrogate}
     Let $\boldsymbol{\pi}$ be a joint policy, $\boldsymbol{\bar{\pi}}^{i_{1:h-1}}$ be some other joint policy of agents $i_{1:h-1}$, and $\hat{\pi}^{i_{h}}$ be a policy of agent $i_{h}$. Then we define
     \begin{align}
        & L^{i_{1:h}}_{\boldsymbol{\pi}}\left(  \boldsymbol{\bar{\pi}}^{i_{1:h-1}},  \hat{\pi}^{i_{h}} \right) 
         \triangleq 
         \nonumber \\ & \qquad 
         \E_{\rs \sim \rho_{\boldsymbol{\pi}}, \rva^{i_{1:h-1}}\sim\boldsymbol{\bar{\pi}}^{i_{1:h-1}}, \ra^{i_h}\sim\hat{\pi}^{i_{h}}}
         \left[  A_{\boldsymbol{\pi}}^{i_{h}}\left(\rs, 
         \rva^{i_{1:h-1}}, \ra^{i_{h}}\right) \right].
         \nonumber
     \end{align}
\end{definition}

With the above definition, we can see that 
Lemma \ref{lemma:maad} allows for decomposing the joint surrogate return  {$L_{\boldsymbol{\pi}}(\boldsymbol{\bar{\pi}}) \triangleq \E_{\rs\sim\rho_{\boldsymbol{\pi}}, \rva\sim\boldsymbol{\bar{\pi}}}[ A_{\boldsymbol{\pi}}(\rs, \rva) ]$} into a sum over surrogates of $L^{i_{1:h}}_{\boldsymbol{\pi}}(\boldsymbol{\bar{\pi}}^{i_{1:h-1}}, \bar{\pi}^{i_h})$, for $h=1, \dots, n$. This  can be used to  justify  that if agents, with a joint policy $\boldsymbol{\pi}$, update their policies by following a \textsl{sequential update scheme}, that is, if each agent  in the subset $i_{1:h}$ sequentially solves the following optimisation problem: 
\begin{align}
    & \bar{\pi}^{i_h} = \max_{\hat{\pi}^{i_h}} L^{i_{1:h}}_{\boldsymbol{\pi}}\left( \boldsymbol{\bar{\pi}}^{i_{1:h-1}}, \hat{\pi}^{i_{h}} \right) 
    - \nu D_{\text{KL}}^{\text{max}}\left( \pi^{i_h}, \hat{\pi}^{i_h} \right), \nonumber\\
   & \quad  \quad  \quad  \quad \text{where} \ \ \nu = \frac{4\gamma \max_{s, \va}|A_{\boldsymbol{\pi}}(s, \va)|}{(1-\gamma)^2}, \ \ \ \nonumber\\ & \text{and} \ \ \   D^{\text{max}}_{\text{KL}}(\pi^{i_h}, \hat{\pi}^{i_h}) \triangleq \max_{s}D_{\text{KL}}(\pi^{i_h}(\cdot|s), \hat{\pi}^{i_h}(\cdot|s)),\nonumber
\end{align}
then the resulting joint policy $\boldsymbol{\bar{\pi}}$ will surely improve the expected return, i.e.,  $J(\boldsymbol{\bar{\pi}}) \geq J(\boldsymbol{\pi})$ (see the proof in  \citet[Lemma 2]{kuba2021trust}). We know that due to the penalty term $D_{\text{KL}}^{\text{max}}( \pi^{i_h}, \hat{\pi}^{i_h} )$, the new policy $\bar{\pi}^{i_h}$ will stay close (\textsl{w.r.t} max-KL distance) to $\pi^{i_h}$. 

For the safety constraints, we can extend Definition \ref{definition:localsurrogate} to incorporate the ``surrogate" cost, thus allowing us to study the cost functions in addition to the return.    
\begin{definition}
    \label{definition:localcostsurrogate}
     Let $\boldsymbol{\pi}$ be a joint policy, and $\bar{\pi}^i$ be some other policy of agent $i$. Then, for any of its costs of index $j\in\{1, \dots, m^i\}$, we define
     \begin{align}
         L^{i}_{j, \boldsymbol{\pi}}\left( \bar{\pi}^i \right) 
         =
         \E_{\rs \sim \rho_{\boldsymbol{\pi}}, \ra^i\sim\bar{\pi}^i}
         \left[  A^i_{j, \boldsymbol{\pi}}\left(\rs, \ra^i \right) \right].
         \nonumber
     \end{align}
\end{definition}

By generalising the result about the surrogate return in Equation (\ref{definition:localsurrogate}), we can derive how the expected costs change when the agents update their policies. Specifically, we provide  the following lemma. 
\begin{restatable}{lemma}{surrogatecostlemma}
\label{lemma:surrogate-cost}
Let $\boldsymbol{\pi}$ and $\bar{\boldsymbol{\pi}}$ be joint policies. Let $i\in\mathcal{N}$ be an agent, and $j\in\{1, \dots, m^i\}$ be an index of one of its costs. The following inequality holds
\begin{align}
    & J^i_j(\bar{\boldsymbol{\pi}}) \leq J^i_j(\boldsymbol{\pi})
    + L^i_{j, \boldsymbol{\pi}}\big(\bar{\pi}^i\big) + \nu^i_j\sum\limits_{h=1}^{n}D^{\text{max}}_{\text{KL}}\big( \pi^h, \bar{\pi}^h\big),\nonumber\\
    & \quad  \quad  \quad \text{where} \ \nu^i_j = \frac{4\gamma\max_{s, a^i}|A^i_{j, \boldsymbol{\pi}}(s, a^i)|}{(1-\gamma)^2}.
\end{align}
\end{restatable}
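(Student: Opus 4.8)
The plan is to treat the $j^{\text{th}}$ cost $C^i_j$ of agent $i$ as a scalar reward signal and replay the single-agent trust-region argument of \citet{schulman2015trust} and \citet{achiam2017constrained} for the \emph{joint} policy, and then exploit the product structure $\boldsymbol{\pi}(\cdot|s)=\prod_{h=1}^n\pi^h(\cdot|s)$ to convert the resulting joint divergence penalty into the stated sum over agents. First I would write the cost analogue of the performance-difference (Kakade--Langford) identity with the unnormalised discounted visitation $\rho_{\bar{\boldsymbol{\pi}}}$,
\[
J^i_j(\bar{\boldsymbol{\pi}}) - J^i_j(\boldsymbol{\pi}) = \E_{\rs\sim\rho_{\bar{\boldsymbol{\pi}}},\, \ra^i\sim\bar{\pi}^i}\!\big[A^i_{j,\boldsymbol{\pi}}(\rs,\ra^i)\big] + \Delta,
\]
where $A^i_{j,\boldsymbol{\pi}}(s,a^i)=Q^i_{j,\boldsymbol{\pi}}(s,a^i)-V^i_{j,\boldsymbol{\pi}}(s)$ is the marginalised cost advantage and $\Delta$ collects the one-step look-ahead discrepancy created by the other agents switching from $\boldsymbol{\pi}^{-i}$ to $\bar{\boldsymbol{\pi}}^{-i}$ inside $Q^i_{j,\boldsymbol{\pi}}$. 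Because the immediate cost $C^i_j(s,a^i)$ depends only on $\ra^i$, the leading term already collapses onto $\bar{\pi}^i$ alone, which is exactly the quantity inside $L^i_{j,\boldsymbol{\pi}}(\bar{\pi}^i)$ from Definition~\ref{definition:localcostsurrogate}.

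The surrogate $L^i_{j,\boldsymbol{\pi}}(\bar{\pi}^i)$ instead samples states from the \emph{old} visitation $\rho_{\boldsymbol{\pi}}$, so the true cost change differs from it through two effects driven entirely by how far $\bar{\boldsymbol{\pi}}$ has moved from $\boldsymbol{\pi}$: (i) the shift of $\rho_{\bar{\boldsymbol{\pi}}}$ away from $\rho_{\boldsymbol{\pi}}$, and (ii) the correction $\Delta$. Next I would bound this combined discrepancy by a coupling/telescoping argument identical in form to the reward case, controlling the visitation shift at each step by the per-state total-variation distance between the joint policies and pulling out the uniform bound $\max_{s,a^i}|A^i_{j,\boldsymbol{\pi}}(s,a^i)|$. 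This yields
\[
J^i_j(\bar{\boldsymbol{\pi}}) \leq J^i_j(\boldsymbol{\pi}) + L^i_{j,\boldsymbol{\pi}}(\bar{\pi}^i) + \frac{4\gamma\max_{s,a^i}|A^i_{j,\boldsymbol{\pi}}(s,a^i)|}{(1-\gamma)^2}\,\big(D^{\text{max}}_{\text{TV}}(\boldsymbol{\pi},\bar{\boldsymbol{\pi}})\big)^2,
\]
with $D^{\text{max}}_{\text{TV}}(\boldsymbol{\pi},\bar{\boldsymbol{\pi}})\triangleq\max_s D_{\text{TV}}\!\big(\boldsymbol{\pi}(\cdot|s),\bar{\boldsymbol{\pi}}(\cdot|s)\big)$, mirroring the penalised monotonic bound of \citet{schulman2015trust}.

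Finally I would convert and split the penalty. By Pinsker's inequality $\big(D^{\text{max}}_{\text{TV}}(\boldsymbol{\pi},\bar{\boldsymbol{\pi}})\big)^2\leq D^{\text{max}}_{\text{KL}}(\boldsymbol{\pi},\bar{\boldsymbol{\pi}})$, and since the joint policy factorises, the KL divergence is additive across agents, $D_{\text{KL}}\big(\boldsymbol{\pi}(\cdot|s)\,\|\,\bar{\boldsymbol{\pi}}(\cdot|s)\big)=\sum_{h=1}^n D_{\text{KL}}\big(\pi^h(\cdot|s)\,\|\,\bar{\pi}^h(\cdot|s)\big)$. Taking the max over $s$ and using $\max_s\sum_h\leq\sum_h\max_s$ gives $D^{\text{max}}_{\text{KL}}(\boldsymbol{\pi},\bar{\boldsymbol{\pi}})\leq\sum_{h=1}^n D^{\text{max}}_{\text{KL}}(\pi^h,\bar{\pi}^h)$. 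Substituting this into the previous display produces precisely the coefficient $\nu^i_j=\frac{4\gamma\max_{s,a^i}|A^i_{j,\boldsymbol{\pi}}(s,a^i)|}{(1-\gamma)^2}$ multiplying $\sum_{h=1}^n D^{\text{max}}_{\text{KL}}(\pi^h,\bar{\pi}^h)$, as claimed.

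The hard part will be the second step: keeping the \emph{marginalised} range $\max_{s,a^i}|A^i_{j,\boldsymbol{\pi}}|$ rather than the coarser joint range $\max_{s,\rva}|A^{C^i_j}_{\boldsymbol{\pi}}(s,\rva)|$ that a black-box application of the joint TRPO bound would charge. Recovering the marginalised quantities means viewing the update as agent $i$ moving inside the single-agent MDP induced by freezing $\boldsymbol{\pi}^{-i}$ --- where $A^i_{j,\boldsymbol{\pi}}$ and $V^i_{j,\boldsymbol{\pi}}$ are exactly the induced advantage and value --- and then absorbing the cross-agent correction $\Delta$ into the \emph{same} $\sum_h D^{\text{max}}_{\text{KL}}$ penalty. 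Verifying that $\Delta$, together with the visitation shift, is jointly dominated by $\nu^i_j\sum_{h=1}^n D^{\text{max}}_{\text{KL}}(\pi^h,\bar{\pi}^h)$ is the delicate bookkeeping on which the stated constant rests.
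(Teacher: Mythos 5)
Your proposal follows essentially the same route as the paper's proof: apply the TRPO-style penalised bound of \citet{schulman2015trust} to the cost signal $C^i_j$ under the joint policies, then convert the squared total-variation penalty to KL via Pinsker's inequality and split it across agents using the additivity of KL for product policies together with $\max_s\sum_h\leq\sum_h\max_s$. The only difference is that you explicitly flag the need to retain the marginalised range $\max_{s,a^i}|A^i_{j,\boldsymbol{\pi}}(s,a^i)|$ as a delicate step, whereas the paper simply asserts this constant by citing equations (41)--(45) of \citet{schulman2015trust}; your added care there is warranted but does not change the argument.
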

See proof in Appendix \ref{appendix:preliminaries}. 
The above lemma suggests that, as long as the distances between the policies $\pi^h$ and $\bar{\pi}^h$, $\forall h\in\mathcal{N}$, are sufficiently small, then the change in the $j^{\text{th}}$ cost of agent $i$, i.e., $J^i_j(\bar{\boldsymbol{\pi}}) - J^i_j(\boldsymbol{\pi})$,   is controlled by the surrogate $L_{j, \boldsymbol{\pi}}^i(\bar{\pi}^i)$. Importantly, this surrogate is independent of other agents' new policies. 
Hence, when the changes in policies of all agents are sufficiently small, each agent can learn a better policy $\bar{\pi}^{i}$ by only considering its own surrogate return and surrogate costs. To summarise, we provide the pseudocode in {Algorithm \ref{algorithm:theoretical-safe-matrpo}} 
that guarantees both safety constraints satisfaction and monotonic  improvement. 

In {Algorithm \ref{algorithm:theoretical-safe-matrpo}}, in addition to sequentially maximising agents'   surrogate returns, the agents must assure that their  surrogate costs stay below the corresponding safety thresholds. Meanwhile, they have to constrain their policy search to small local neighbourhoods (\textsl{w.r.t} max-KL distance). As such,  Algorithm \ref{algorithm:theoretical-safe-matrpo} demonstrates two desirable properties: reward performance improvement and satisfaction of safety constraints, which we justify in the  following theorem. 

\begin{restatable}{theorem}{thsafematrpo}
\label{theorem:theoretical-safe-matrpo}
If a sequence of joint policies $(\boldsymbol{\pi}_k)_{k=0}^{\infty}$ is obtained from Algorithm \ref{algorithm:theoretical-safe-matrpo}, then it has the monotonic improvement property, $J(\boldsymbol{\pi}_{k+1})\geq J(\boldsymbol{\pi}_k)$, as well as it satisfies the safety constraints, $J^i_j(\boldsymbol{\pi}_k)\leq c^i_j$, for all $k\in\mathbb{N}, i\in\mathcal{N}$, and $j\in\{1, \dots, m^i\}$.
\end{restatable}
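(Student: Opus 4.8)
The plan is to prove both properties simultaneously by induction on the iteration index $k$, with the crucial invariant being that every iterate $\boldsymbol{\pi}_k$ is feasible. The base case is immediate if we assume (as Algorithm \ref{algorithm:theoretical-safe-matrpo} requires) that the initial policy $\boldsymbol{\pi}_0$ is feasible. For the inductive step I would assume $J^i_j(\boldsymbol{\pi}_k)\leq c^i_j$ for all $i,j$ and then establish (i) that $\boldsymbol{\pi}_{k+1}$ remains feasible, and (ii) that $J(\boldsymbol{\pi}_{k+1})\geq J(\boldsymbol{\pi}_k)$.

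For the feasibility step (i), the workhorse is Lemma \ref{lemma:surrogate-cost}. Applying it with $\boldsymbol{\pi}=\boldsymbol{\pi}_k$ and $\bar{\boldsymbol{\pi}}=\boldsymbol{\pi}_{k+1}$ gives, for each agent $i$ and cost index $j$,
\begin{align}
J^i_j(\boldsymbol{\pi}_{k+1}) \leq J^i_j(\boldsymbol{\pi}_k) + L^i_{j,\boldsymbol{\pi}_k}\big(\pi^i_{k+1}\big) + \nu^i_j\sum_{h=1}^{n}D^{\text{max}}_{\text{KL}}\big(\pi^h_k,\pi^h_{k+1}\big). \nonumber
\end{align}
The surrogate cost constraint enforced inside Algorithm \ref{algorithm:theoretical-safe-matrpo} is designed precisely so that its right-hand side is at most $c^i_j$; combined with the inductive hypothesis, this yields $J^i_j(\boldsymbol{\pi}_{k+1})\leq c^i_j$, closing the feasibility invariant.

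For the monotonic-improvement step (ii), I would reuse the sequential-update machinery of \citet[Lemma 2]{kuba2021trust}, built on the advantage decomposition of Lemma \ref{lemma:maad}. The key observation making the constrained case go through is that leaving agent $i_h$'s policy unchanged, $\hat{\pi}^{i_h}=\pi^{i_h}_k$, is always a feasible candidate for its sub-problem: it contributes zero surrogate return, zero KL penalty, and---because the expected advantage under the acting policy vanishes and the KL terms are zero---it satisfies the surrogate cost constraint by the inductive hypothesis. Hence each agent's constrained maximiser attains an objective value at least $0$, so every summand $L^{i_{1:h}}_{\boldsymbol{\pi}_k}(\bar{\boldsymbol{\pi}}^{i_{1:h-1}},\bar{\pi}^{i_h}) - \nu D^{\text{max}}_{\text{KL}}(\pi^{i_h}_k,\bar{\pi}^{i_h})$ is non-negative. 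Summing over $h$ and invoking the decomposition identity reconstructs the joint surrogate, and the TRPO-style lower bound then gives $J(\boldsymbol{\pi}_{k+1})\geq J(\boldsymbol{\pi}_k)$.

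The step I expect to be the main obstacle is reconciling the \emph{global} KL sum $\sum_{h=1}^n D^{\text{max}}_{\text{KL}}(\pi^h_k,\pi^h_{k+1})$ appearing in Lemma \ref{lemma:surrogate-cost} with the \emph{sequential} nature of the updates: when agent $i$ solves its sub-problem, the policies of agents later in the ordering have not yet been updated, so their contribution to that sum is not yet determined. Making step (i) rigorous therefore requires the constraint in Algorithm \ref{algorithm:theoretical-safe-matrpo} to budget for the worst-case total KL movement of all agents (e.g.\ via per-agent trust-region radii whose sum is accounted for a priori), rather than only agent $i$'s own displacement. Verifying that this budgeted constraint both (a) keeps the current policy feasible---so that step (ii) still anchors at zero---and (b) dominates the right-hand side of Lemma \ref{lemma:surrogate-cost} is the delicate bookkeeping at the heart of the argument.
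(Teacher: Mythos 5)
Your proposal is correct and follows essentially the same route as the paper: induction on the feasibility invariant, Lemma~\ref{lemma:surrogate-cost} to certify that the surrogate-cost constraints in $\overline{\Pi}^{i_h}$ keep $\boldsymbol{\pi}_{k+1}$ feasible, and the observation that $\pi^{i_h}_k$ is itself a feasible candidate for each sub-problem so that every summand of the decomposed surrogate is non-negative, giving monotonic improvement via the TRPO bound and Lemma~\ref{lemma:maad}. The ``delicate bookkeeping'' you flag---budgeting the global KL sum across sequentially updated agents---is exactly what the paper carries out in Remark~\ref{remark:safe-updates} through the adaptive radii $\delta^{i_h}$, which subtract the KL already spent by agents $i_{1:h-1}$ and reserve room for agents $i_{h+1:n}$.
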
 

See proof in Appendix \ref{appendix:results-safe-matrpo}. 
The above theorem assures that agents that follow Algorithm \ref{algorithm:theoretical-safe-matrpo} will only explore safe policies; meanwhile, every new policy will be guaranteed to result in performance improvement. These two properties hold under the conditions that only restrictive policy updates are made; this is due to the KL-penalty term in every agent's objective (i.e., $ \nu D_{\text{KL}}^{\text{max}}(\pi^{i_{h}}_{k}, \pi^{i_{h}})$), as well as the constraints on cost surrogates  (i.e., the conditions in $\overline{\Pi}^{i_h}$).  
In practice, it can be intractable to evaluate $D_{\text{KL}}\big(\pi^{i_h}_k(\cdot|s), \pi^{i_h}(\cdot|s)\big)$ at every state in order to compute $D_{\text{KL}}^{\text{max}}(\pi^{i_h}_k, \pi^{i_h})$. 
In the following subsections, we describe how we can approximate  Algorithm \ref{algorithm:theoretical-safe-matrpo} in the case of parameterised policies, similar to TRPO/PPO implementations \citep{schulman2015trust, schulman2017proximal}. 

\subsection{MACPO: Multi-Agent Constrained Policy Optimisation} 
Here we focus on the practical settings where large state and action spaces  prevent agents from
designating policies $\pi^i(\cdot|s)$ for each state  separately.  To handle this, we parameterise each agent's $\pi^{i}_{\theta^{i}}$ by a neural network $\theta^{i}$. Correspondingly, the joint policies $\boldsymbol{\pi}_{\vtheta}$ are  parametrised by $\vtheta = (\theta^1, \dots, \theta^n)$. 

Let's recall that at every iteration of Algorithm \ref{algorithm:theoretical-safe-matrpo}, every agent $i_h$ maximises its surrogate return  with a KL-penalty, subject to surrogate cost constraint. Yet, direct computation of the max-KL constraint is intractable in practical settings, as it would require computation of KL-divergence at every single state.  
Instead, one can relax it by adopting a form of expected KL-constraint $\overline{D}_{\text{KL}}(\pi^{i_h}_k, \pi^{i_h})\leq \delta$ where $\overline{D}_{\text{KL}}(\pi^{i_h}_k, \pi^{i_h}) \triangleq \E_{\rs\sim\rho_{\boldsymbol{\pi}_k}}\big[ D_{\text{KL}}(\pi^{i_h}_k(\cdot|\rs), \pi^{i_h}(\cdot|\rs)) \big]$. Such an expectation can be approximated by stochastic sampling. As a result, the optimisation problem solved by agent $i_h$ can be  written as 
\begin{align}
\label{eq:goal-function-with-parameterized-policy}
& \theta^{i_h}_{k+1} = \argmax_{\theta^{i_h}} \nonumber \\& \quad \E_{\rs\sim\rho_{\boldsymbol{\pi}_{\vtheta_k} }, \rva^{i_{1:h-1}}\sim\boldsymbol{\pi}^{i_{1:h-1}}_{\vtheta^{i_{1:h-1}}_{k+1}}, \ra^{i_h}\sim\pi^{i_h}_{\theta^{i_h}} }\left[ 
A^{i_h}_{\boldsymbol{\pi}_{\vtheta_k}}\left(\rs,
\rva^{i_{1:h-1}}, \ra^{i_h} \right)\right]
\nonumber\\
&   \text { s.t. }   \ 
J^{i_h}_j\left( \boldsymbol{\pi}_{\vtheta_k}\right)+ 
\E_{\rs\sim\rho_{\boldsymbol{\pi}_{\vtheta_k}}, \ra^{i_h}\sim\pi^{i_h}_{\theta^{i_h}_{k}} }\left[ 
A^{i_h}_{j, \boldsymbol{\pi}_{\vtheta_k}}\left(\rs, \ra^{i_h} \right)\right]
\leq c^{i_h}_j, \ \nonumber\\
& \qquad  \forall j\in\{1, \dots, m^{i_h}\}, \ \  \text{and} \ \  \ \ \overline{D}_{\text{KL}}\big(\pi^{i_h}_k, \pi^{i_h}\big) \leq \delta.
\end{align}


We can further approximate Equation (\ref{eq:goal-function-with-parameterized-policy})  by  Taylor expansion of the optimisation objective and cost constraints up to the first order, and the KL-divergence up to the second order. 
Consequently, the optimisation problem can be written as 
\begin{align}
\label{eq:parameterized-policy-equation}
&  \theta^{i_h}_{k+1}=\arg \max _{\theta^{i_h}}  \big(\vg^{i_h}\big)^{T}\left(\theta^{i_h}-\theta^{i_h}_{k}\right) \nonumber\\
&   \text { s.t. } \ \  d_{j}^{i_h}+ \big(\vb_{j}^{i_h}\big)^T\left(\theta^{i_h}-\theta^{i_h}_{k}\right)  \leq 0,  \quad j=1, \ldots, m \nonumber\\
&   \text{and} \  \  \ \frac{1}{2}\left(\theta^{i_h}-\theta^{i_h}_{k}\right)^{T} \mH^{i_h}\left(\theta^{i_h}-\theta^{i_h}_{k}\right) \leq \delta,
\end{align}
where $\vg^{i_h}$ is the gradient of the  objective  of agent $i_h$ in Equation (\ref{eq:goal-function-with-parameterized-policy}), $d_{j}^{i} = J_{j}^{i}(\boldsymbol{\pi}_{\vtheta_k}) - c_{j}^{i}$, and $\mH^{i_h}=\nabla^2_{\theta^{i_h}}\overline{D}_{\text{KL}}(\pi^{i_h}_{\theta^{i_h}_k}, \pi^{i_h})\big|_{\theta^{i_h} = \theta^{i_h}_k }$ is the Hessian of the average KL divergence of agent $i_h$, and $\vb_{j}^{i_h}$ is the gradient of agent of the $j^{\text{th}}$ constraint of agent $i_h$.

\begin{algorithm}[t!]
\caption{Safe Multi-Agent Policy Iteration with Monotonic Improvement Property}
\label{algorithm:theoretical-safe-matrpo}
\begin{algorithmic}[1]
\STATE Initialise a  joint policy {\small $\boldsymbol{\pi}_{0} = (\pi^{1}_{0}, \dots, \pi^{n}_{0})$}. {\color{gray}{// initial unsafe policies are allowed, see discussion in experiments.}}
\FOR{$k=0, 1, \dots$}
    \STATE Compute the advantage functions {\small $A_{\boldsymbol{\pi}_{k}}(s, \va)$} and {\small $A^i_{j, \boldsymbol{\pi}_k}(s, a^i)$}, for all state-(joint)action pairs {\small $(s, \va)$},  agents {\small $i$}, and constraints {\small $j\in\{1, \dots, m^i\}$}.
    \STATE Compute {\small $\forall i\in\mathcal{N}, j=1, \dots, m^i$}\\ {\small $\nu= \frac{4\gamma \max_{s, \va}|A_{\boldsymbol{\pi}_k}(s, \va)|}{(1-\gamma)^2}, \nu^i_j = \frac{4\gamma \max_{s, a^i}|A^i_{j, \boldsymbol{\pi}_k}(s, a^i)| }{(1-\gamma)^2 }$}.
    \STATE Draw a permutaion $i_{1:n}$ of agents at random.
    \FOR{$h=1:n$}
        \STATE  \textsl{{\color{gray}{// see Appendix \ref{appendix:results-safe-matrpo} for the setup of $\delta^{i_h}$.}}} \\  Compute the radius of the KL-constraint $\delta^{i_h}$   
        \STATE Make an update {\small $\pi^{i_{h}}_{k+1} =$}\\ {\small$ \argmax_{\pi^{i_{h}}\in \overline{\Pi}^{i_h}}\left[ 
        L^{i_{1:h}}_{\boldsymbol{\pi}_k}\left(\pi^{i_{1:h-1}}_{k+1}, \pi^{i_{h}}\right) - \nu D_{\text{KL}}^{\text{max}}\Big(\pi^{i_{h}}_{k}, \pi^{i_{h}}\Big)\right]$},\\
        \text{where {\small $\overline{\Pi}^{i_h}$} is a subset of safe policies of agent $i_h$,} 
        \begin{smalleralign}
           &\overline{\Pi}^{i_h} = \Big\{\pi^{i_h}\in\Pi^{i_h} \ \big| \ D_{\text{KL}}^{\text{max}}(\pi^{i_h}_k, \pi^{i_h}) \leq
            \delta^{i_h}, \ \text{and} \nonumber\\
            &  \ \ \ \ \ \ \ \ \ \ \ \ \  J^{i_h}_j (\boldsymbol{\pi}_k) + L^{i_h}_{j, \boldsymbol{\pi}_k}(\pi^{i_h}) + \nu^{i_h}_j D^{\text{max}}_{\text{KL}}(\pi^{i_h}_k, \pi^{i_h}) \nonumber \\
              & \ \ \ \ \ \ \ \ \ \ \ \ 
           \leq c^{i_h}_j - \sum_{l=1}^{h-1}\nu^{i_l}_j D^{\text{max}}_{\text{KL}}(\pi^{i_l}_k, \pi^{i_l}) , \forall j=1, \dots, m^{i_h} \Big\}.\nonumber
        \end{smalleralign}
    \ENDFOR
\ENDFOR
\end{algorithmic}
\end{algorithm}

Similar to \citet{chow2017risk} and \citet{achiam2017constrained}, one can take a primal-dual optimisation approach to solve 
the linear quadratic optimisation in Equation  (\ref{eq:parameterized-policy-equation}). Specifically, the dual form can be written as:  
\begin{align}
    \label{eq:dual-parameterized-policy-equation}
    & \max_{\lambda^{i_h} \geq 0, \textbf{v}^{i_h} \succeq 0} 
    \frac{-1}{2\lambda^{i_h}}\left[
    (\vg^{i_h})^T (\mH^{i_h})^{-1}\vg^{i_h} - 2(\vr^{i_h})^T\textbf{v}^{i_h} +
    \notag\right. 
\\ &
\phantom{=\;\;}
                \left.  
   \qquad \qquad \qquad \qquad
    (\textbf{v}^{i_h})^T \mS^{i_h}
    \right] +  (\textbf{v}^{i_h})^T\vc^{i_h} - \frac{\lambda^{i_h} \delta}{2}, \nonumber\\
   & \quad \text{where} \ \  \vr^{i_h} \triangleq (\vg^{i_h})^T (\mH^{i_h})^{-1} \mB^{i_h}, \mB^{i_h} = \left[ \vb^{i_h}_1, \dots, \vb^{i_h}_m  \right] \ \nonumber\\ & \quad \text{and}
    \  \mS^{i_h} \triangleq (\mB^{i_h})^T (\mH^{i_h})^{-1} \mB^{i_h}.
\end{align}

Given the solution to the dual form in Equation  (\ref{eq:dual-parameterized-policy-equation}), i.e., $\lambda^{i_h}_*$ and $\textbf{v}^{i_h}_*$, the solution to the primal problem in Equation (\ref{eq:parameterized-policy-equation}) can thus be  written by
\begin{align}
    \theta^{i_h}_* = \theta^{i_h}_k + \frac{1}{\lambda^{i_h}_*}\big(\mH^{i_h}\big)^{-1}\big(\vg^{i_h} - \mB^{i_h}\textbf{v}^{i_h}_*\big).\nonumber
\end{align} 

In practice, we use backtracking line search starting at $1/\lambda^{i_h}_*$ to choose the step size of the above update.
Furthermore, we note that the optimisation step in Equation
(\ref{eq:parameterized-policy-equation}) is an approximation to the original problem from Equation (\ref{eq:goal-function-with-parameterized-policy}); therefore, it is possible that  an infeasible policy $\pi_{\theta^{i_h}_{k+1}}$ will be generated. Fortunately, as the policy optimisation takes place in the trust region of $\pi^{i_h}_{\theta^{i_h}_k}$, the size of update is small, and a feasible policy can be easily recovered. In particular, for problems with one safety constraint, i.e., $m^{i_h} = 1$, one can recover a feasible policy by applying a TRPO step  on the cost  surrogate, written as 
\begin{align}
    \label{eq:no-proof-recover-policy-from-unfeasible-point}
    \theta^{i_h}_{k+1} = \theta^{i_h}_{k}-\alpha^j\sqrt{\frac{2 \delta}{{\vb^{i_h}}^{T} (\mH^{i_h})^{-1} \vb^{i_h}}} \big(\mH^{i_h}\big)^{-1} \vb^{i_h}
\end{align}
where $\alpha^j$ is adjusted through backtracking line search. To put it together, we refer to this algorithm as \textsl{MACPO}, and provide its pseudocode in Appendix \ref{appendix:mactrpo}.

\subsection{MAPPO-Lagrangian}
In addition to MACPO, one can use Lagrangian multipliers in place of optimisation with linear and quadratic constraints to solve  Equation (\ref{eq:goal-function-with-parameterized-policy}). 
The Lagrangian method is simple to implement, and it does not require computations of the Hessian $\mH^{i_h}$ whose size grows quadratically with the dimension of the parameter vector $\theta^{i_h}$.   

Let us briefly recall the optimisation procedure with a Lagrangian multiplier. Suppose that our goal is to maximise a bounded real-valued function $f(x)$ under a constraint $g(x)$;  $\max_{x} f(x), \text{s.t.} \ g(x)\leq 0$. Consider a scalar variable $\lambda$ and an alternative optimisation problem, given by 
\begin{align}
    \label{eq:general-lagrangian}
    \max_{x}\min_{\lambda\geq 0} f(x) - \lambda g(x).
\end{align}
Suppose that $x_+$ satisfies $g(x_+) > 0$. This immediately implies that $-\lambda g(x_+) \to -\infty$, as $\lambda\to +\infty$, and so Equation (\ref{eq:general-lagrangian}) equals $-\infty$ for $x=x_+$. On the other hand, if $x_-$ satisfies $g(x_-)\leq 0$, we have that $-\lambda g(x_-) \geq 0$, with equality only for $\lambda = 0$. In that case, the optimisation objective's value equals $f(x_-)>-\infty$. Hence, the only candidate solutions to the problem are those $x$ that satisfy the constraint $g(x)\leq 0$, and the  objective  matches with $f(x)$. 

We can employ the above trick to the constrained optimisation problem from Equation (\ref{eq:goal-function-with-parameterized-policy}) by subsuming the cost constraints into the optimisation objective with Lagrangian multipliers. As such, agent $i_h$ computes $\bar{\lambda}^{i_h}_{1:m^{i_h}}$ and $\theta^{i_h}_{k+1}$ to solve the following min-max optimisation problem
{\small
\begin{align}
    \label{eq:simplified-lagrangian}
    &\min_{\lambda^{i_h}_{1:m^{i_h}} \geq 0}\max_{\theta^{i_h}} \nonumber \\& \quad  \Bigg[
    \E_{\rs\sim\rho_{\boldsymbol{\pi}_{\vtheta_k}}, \rva^{i_{1:h-1}}\sim\boldsymbol{\pi}^{i_{1:h-1}}_{\vtheta^{i_{1:h-1}}_{k+1}}, \ra^{i_h}\sim\pi^{i_h}_{\theta^{i_h}}}\left[ A_{\boldsymbol{\pi}_{\vtheta_k}}^{i_h}\left( \rs, 
    \rva^{i_{1:h-1}}, 
               \ra^{i_h} \right) \right]
    \nonumber \\ & \quad  
    - \sum\limits_{u=1}^{m^{i_h}}\lambda^{i_h}_u\left(
    \E_{\rs\sim\rho_{\boldsymbol{\pi}_{\vtheta_k}},  \ra^{i_h}\sim\pi^{i_h}_{\theta^{i_h}}}\left[ A^{i_h}_{u, \boldsymbol{\pi}_{\vtheta_k}}\left( \rs, \ra^{i_h} \right) \right]
    +d^{i_h}_u\right)\Bigg], \nonumber\\
    &   \quad  \quad  \quad  \quad  \quad  \quad  \quad  \text{s.t.} \ \overline{D}_{\text{KL}}\Big( \pi^{i_h}_{\theta^{i_h}_k}, \pi^{i_h}_{\theta^{i_h}} \Big)\leq \delta.
\end{align}
}
Although the objective from Equation (\ref{eq:simplified-lagrangian}) is affine in the Lagrangian multipliers $\lambda^{i_h}_u$ ($u= 1, \dots, m^{i_h}$), which enables  gradient-based optimisation solutions,   computing the KL-divergence constraint still complicates the overall process. 
To handle this, one can further simplify it by adopting the \textsl{PPO-clip} objective \citep{schulman2017proximal}, which enables replacing the KL-divergence constraint with the \textsl{clip} operator, and update the policy parameter with first-order methods. We do so by defining
\begin{align}
    &A^{i_h, (\lambda)}_{\boldsymbol{\pi}_{\vtheta_k}}\left(s, \va^{i_{1:h-1}}, a^{i_h}\right) \triangleq  \nonumber \\ 
    &\quad A_{\boldsymbol{\pi}_{\vtheta_k}}^{i_h}\left( s, \va^{i_{1:h-1}}, a^{i_h} \right)
     - 
    \sum\limits_{u=1}^{m^{i_h}} \lambda^{i_h}_u \left( A^{i_h}_{u, \boldsymbol{\pi}_{\vtheta_k}}\left( s, a^{i_h} \right) 
    +d^{i_h}_u \right), \nonumber
\end{align}
and rewriting the Equation (\ref{eq:simplified-lagrangian}) as 
\begin{align}
     \label{eq:very-simplified-lagrangian}
    & \min_{\lambda^{i_h}_{1:m^{i_h}} \geq 0}\max_{\theta^{i_h}}
    \E_{\rs\sim\rho_{\boldsymbol{\pi}_{\vtheta_k}}, \rva^{i_{1:h-1}}\sim\boldsymbol{\pi}^{i_{1:h-1}}_{\vtheta^{i_{1:h-1}}_{k+1}}, \ra^{i_h}\sim\pi^{i_h}_{\theta^{i_h}}}
    \nonumber \\& \qquad \qquad \qquad \qquad \qquad \qquad 
    \left[ A_{\boldsymbol{\pi}_{\vtheta_k}}^{i_h, (\lambda)}\left( \rs,
   \rva^{i_{1:h-1}}, \ra^{i_h} \right) \right],\nonumber\\ 
    &  \quad  \quad  \quad   \quad  \quad  \quad   \text{s.t.} \ \overline{D}_{\text{KL}}\big( \pi^{i_h}_{\theta^{i_h}_k}, \pi^{i_h}_{\theta^{i_h}} \big)\leq \delta.
\end{align}

The objective in Equation (\ref{eq:very-simplified-lagrangian}) takes a form of an expectation with quadratic constraint on the policy. Up to the error of approximation of KL-constraint with the \textsl{clip} operator, it can be equivalently transformed into an optimisation of a clipping objective. Finally, the objective takes the  form of  
{ \begin{align}
     \label{eq:simplified-lagrangian-clip}
    &\E_{\rs\sim\rho_{\boldsymbol{\pi}_{\vtheta_k}}, \rva^{i_{1:h-1}}\sim\boldsymbol{\pi}^{i_{1:h-1}}_{\vtheta^{i_{1:h-1}}_{k+1}}, \ra^{i_h}\sim\pi^{i_h}_{\theta^{i_h}_k}}\Bigg[ \notag \\ & \ \ \  \text{min}\Bigg( \frac{\pi^{i_h}_{\theta^{i_h}}(\ra^{i_h}|\rs)}{\pi^{i_h}_{\theta^{i_h}_k}(\ra^{i_h}|\rs)} A_{\boldsymbol{\pi}_{\vtheta_k}}^{i_h, (\lambda)}\left( \rs, 
                  \rva^{i_{1:h-1}}, \ra^{i_h} \right), \notag \\ & \qquad \ \ \   \  \text{clip}\bigg( \frac{\pi^{i_h}_{\theta^{i_h}}(\ra^{i_h}|\rs)}{\pi^{i_h}_{\theta^{i_h}_k}(\ra^{i_h}|\rs)}, 1\pm\epsilon\bigg)A_{\boldsymbol{\pi}_{\vtheta_k}}^{i_h, (\lambda)}\left( \rs, \rva^{i_{1:h-1}}, \ra^{i_h} \right)
    \Bigg)\Bigg].
\end{align}} 

The above clip operator replaces the policy ratio with $1-\epsilon$, or $1+\epsilon$, depending on whether its value is below or above the threshold interval. As such, agent $i_h$ can learn within its trust region by updating $\theta^{i_h}$ to maximise Equation (\ref{eq:simplified-lagrangian-clip}), while the Lagrangian multipliers are updated towards the direction opposite to their gradients of Equation (\ref{eq:simplified-lagrangian}), which can be computed analytically. We refer to this algorithm as \textsl{MAPPO-Lagrangian}, and give a detailed pseudocode of it in Appendix \ref{appendix:mappo-lagrangian} due to space limit.

\begin{figure*}[t!]
 \centering
 \vspace{0pt}
\subcaptionbox{}
{
\includegraphics[width=0.22\linewidth]{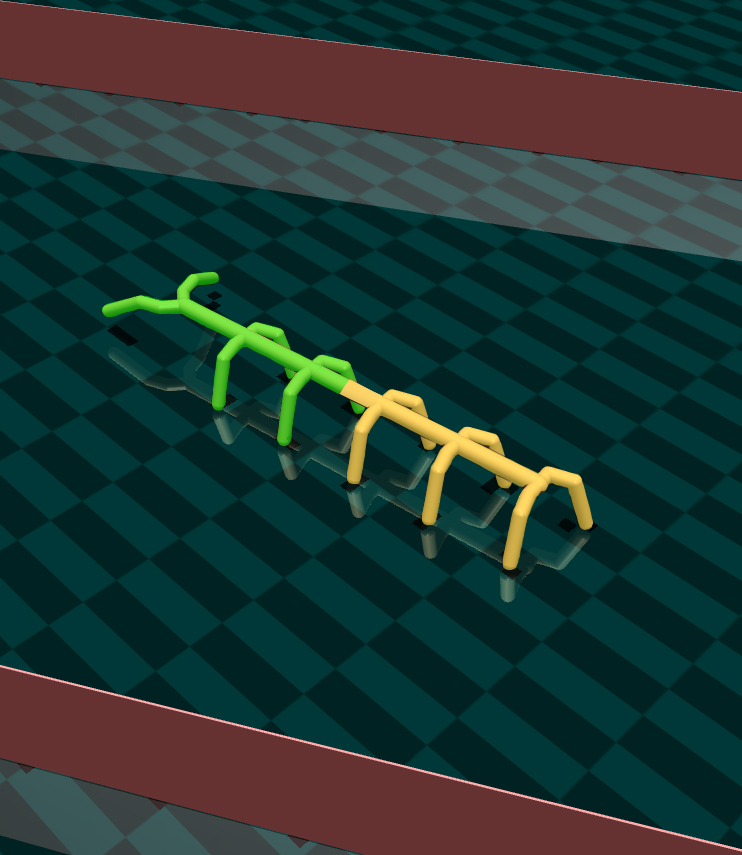}
}\quad \quad \quad
\subcaptionbox{}
{
\includegraphics[width=0.24\linewidth]{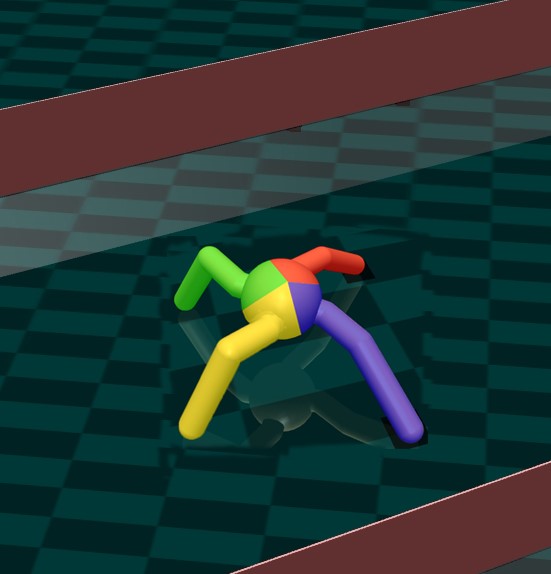}
}\quad \quad \quad
\subcaptionbox{}
{
\includegraphics[width=0.28\linewidth]{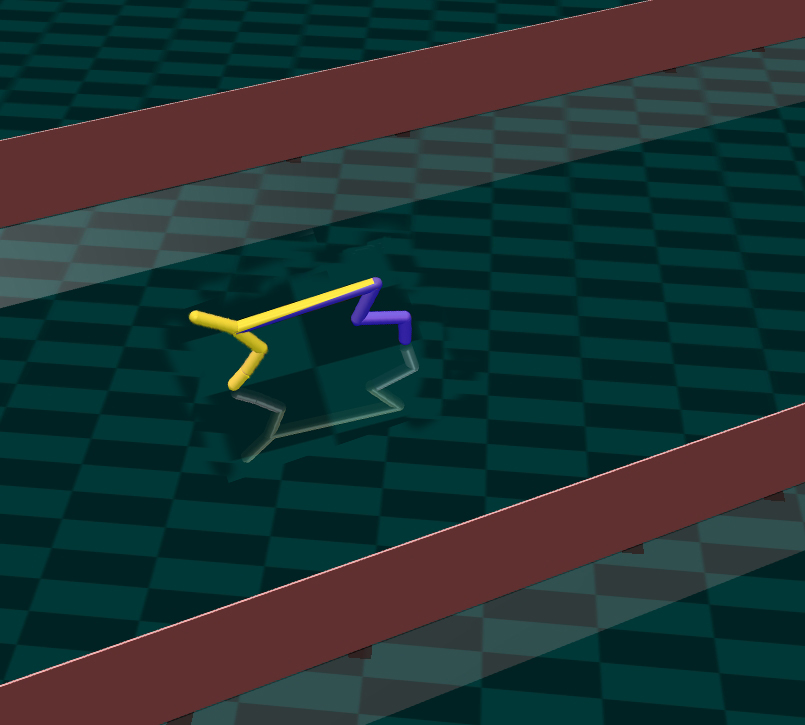}
}
 	\caption{\normalsize Example tasks in SMAMuJoCo Environment. (a): Safe 2x3-ManyAgent Ant, (b): Safe 4x2-Ant, (c): Safe 2x3-HalfCheetah. Body parts of different colours are controlled by different agents. Agents jointly learn to manipulate the robot, while avoiding crashing into unsafe red areas. 
 	} 
 	\label{fig:Safety-Ant-Environment}
 \end{figure*}

 \begin{figure*}[t!]
 \centering
 \vspace{0pt}
\subcaptionbox{ 2x3-Agent (1st column), 3x2-Agent (2nd column), 6x1-Agent (3rd column), 2x4d-Agent (4th column)}[1.\linewidth]
{
\includegraphics[width=0.24\linewidth]{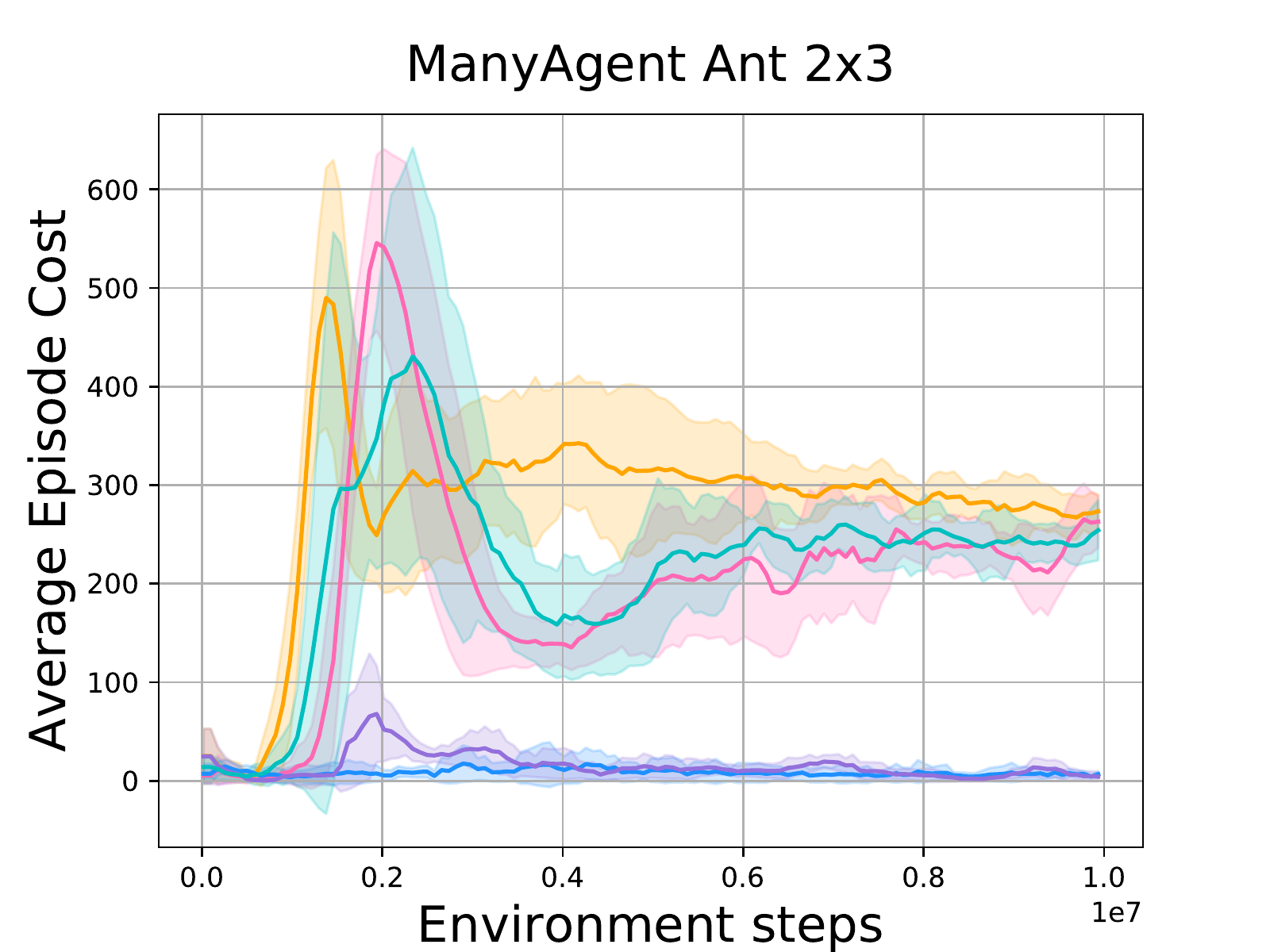}
\includegraphics[width=0.24\linewidth]{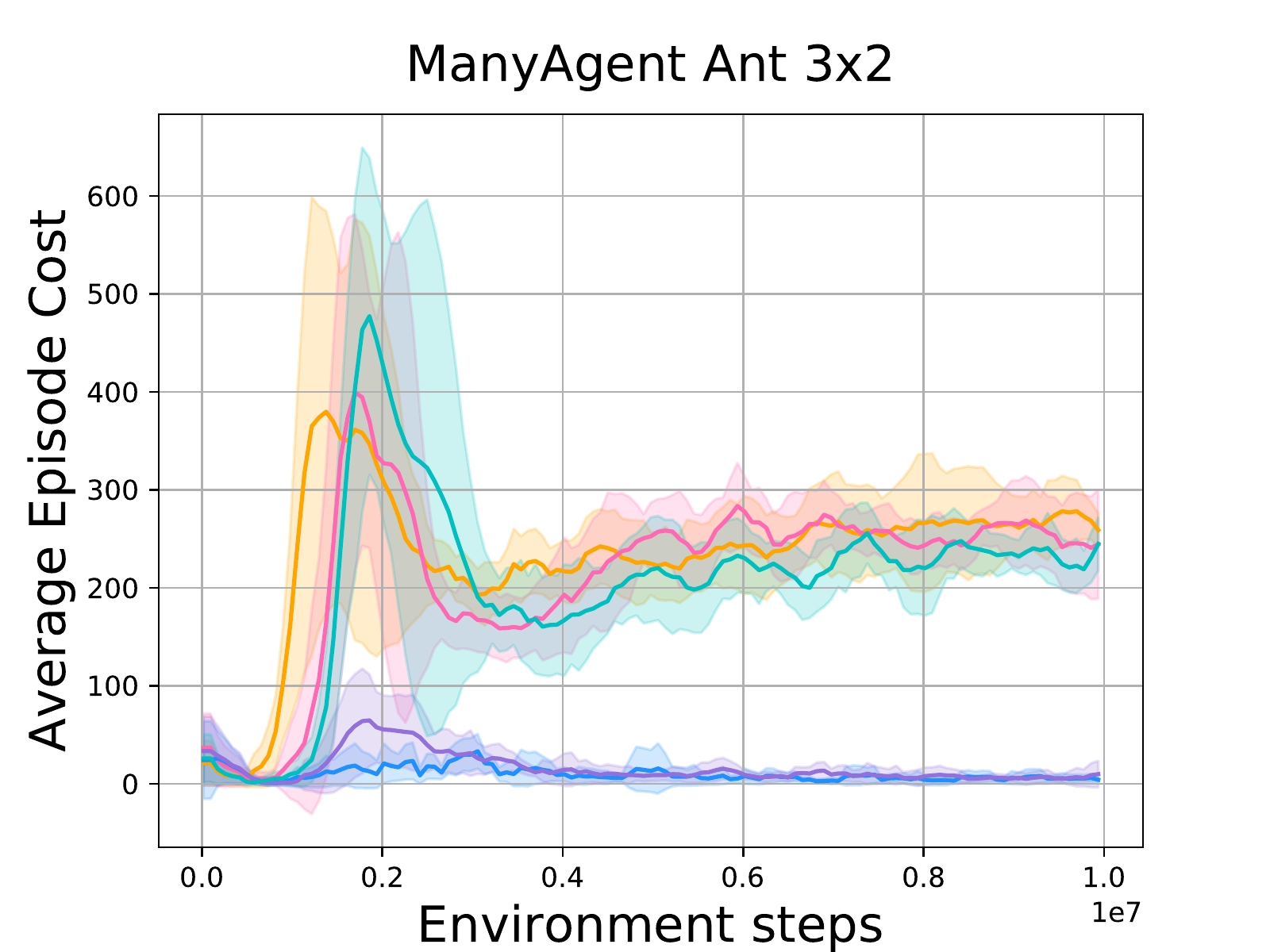}
\includegraphics[width=0.24\linewidth]{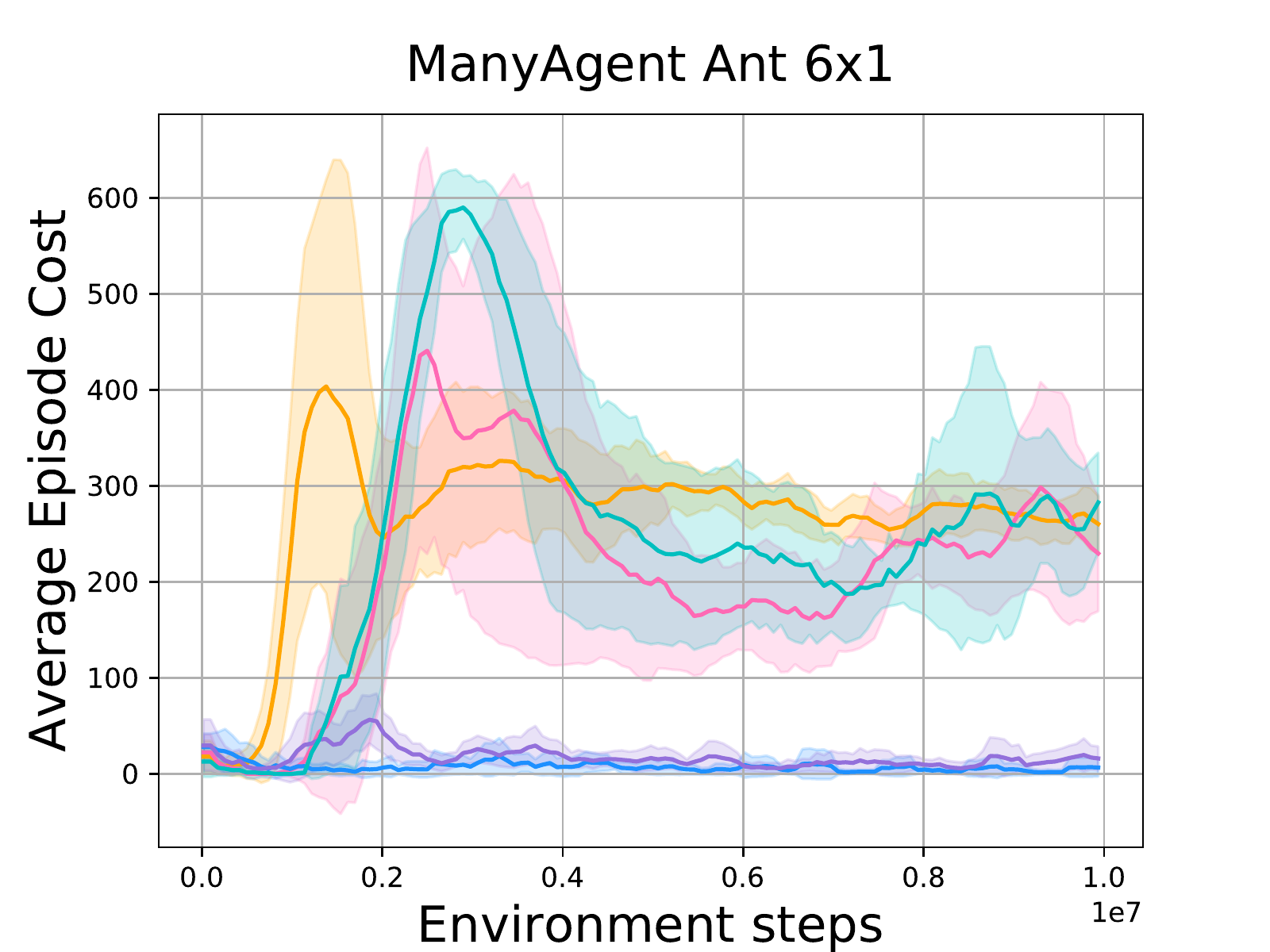}
\includegraphics[width=0.24\linewidth]{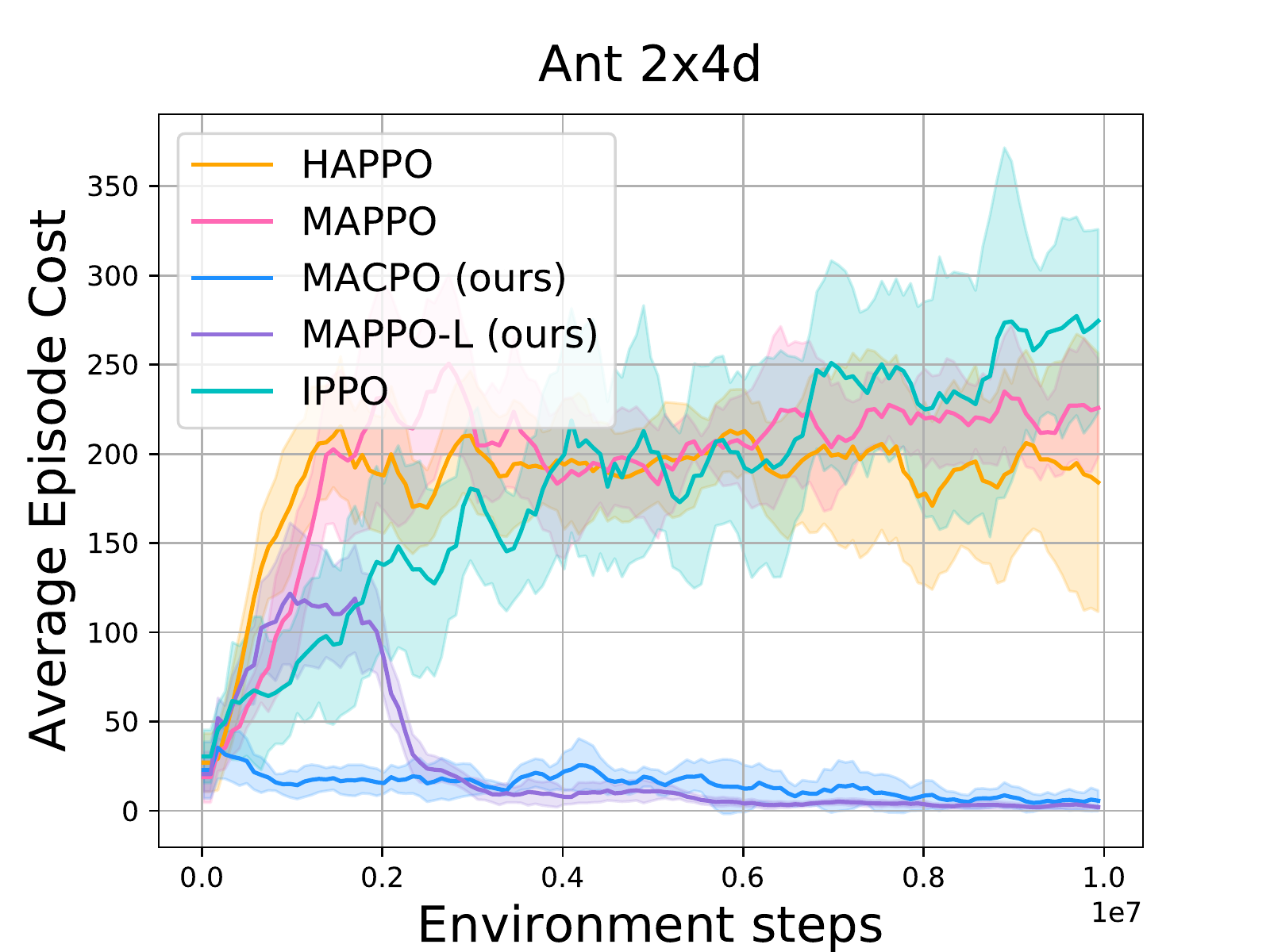}

\includegraphics[width=0.24\linewidth]{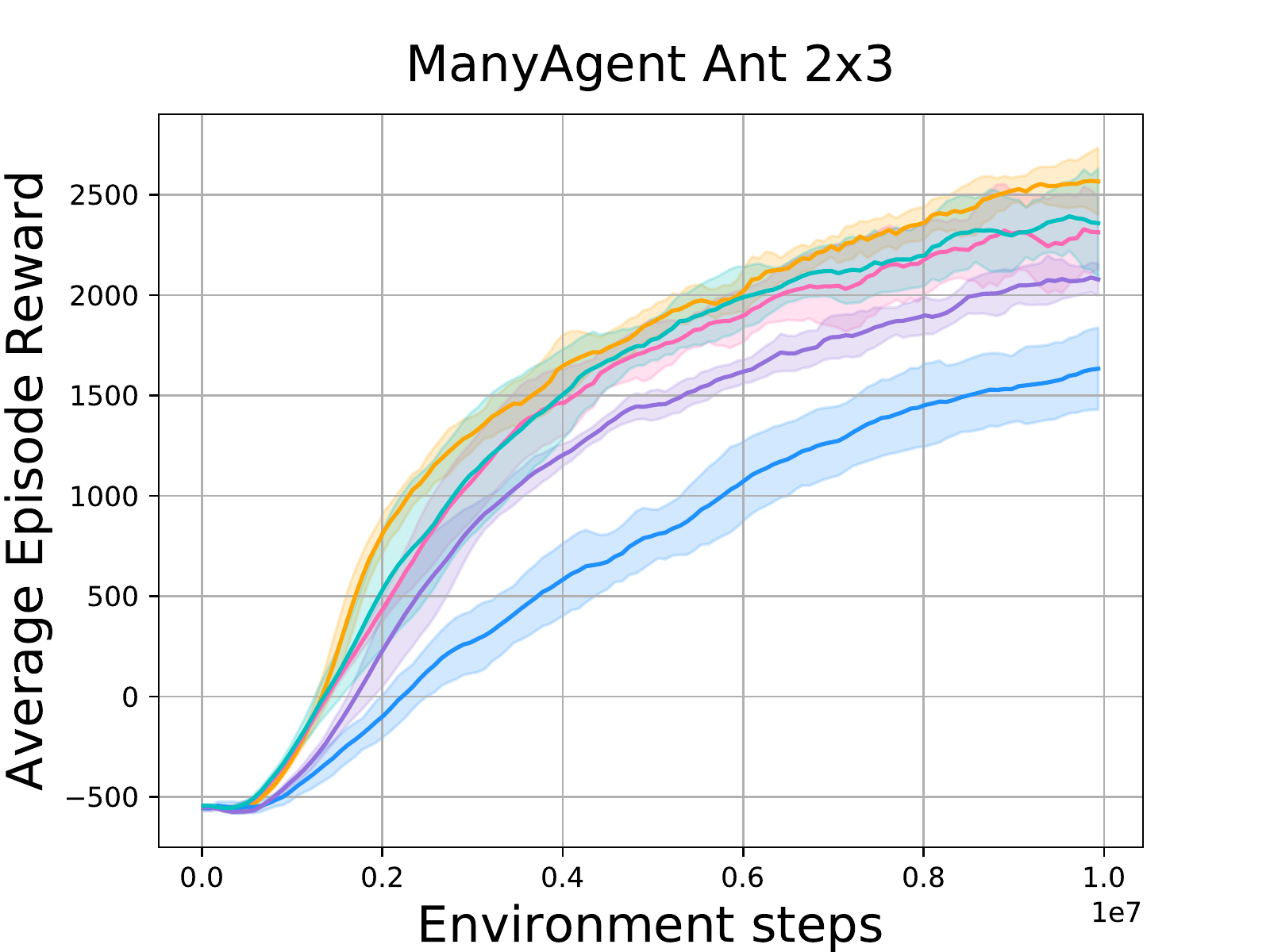}
\includegraphics[width=0.24\linewidth]{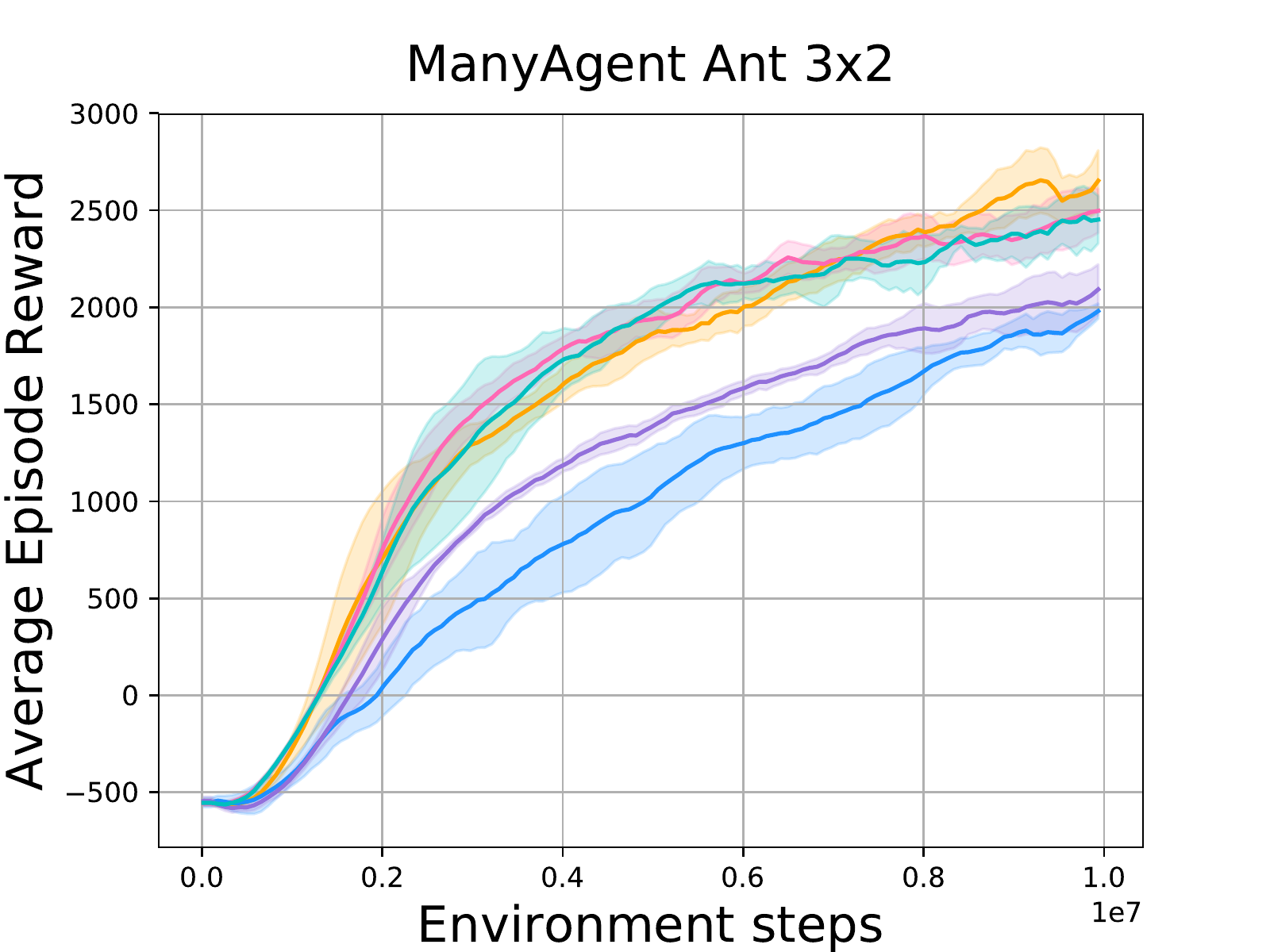}
\includegraphics[width=0.24\linewidth]{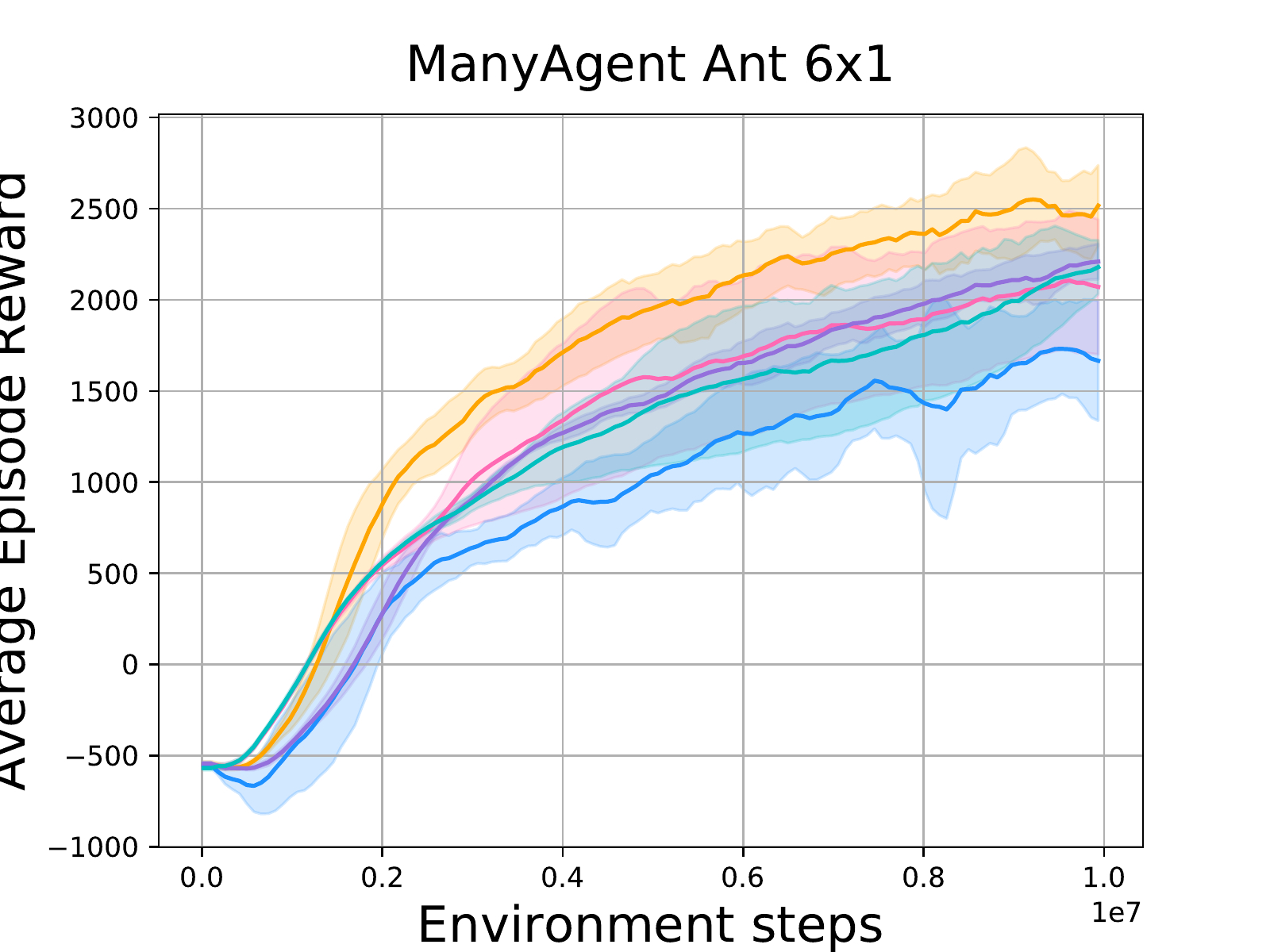}
\includegraphics[width=0.24\linewidth]{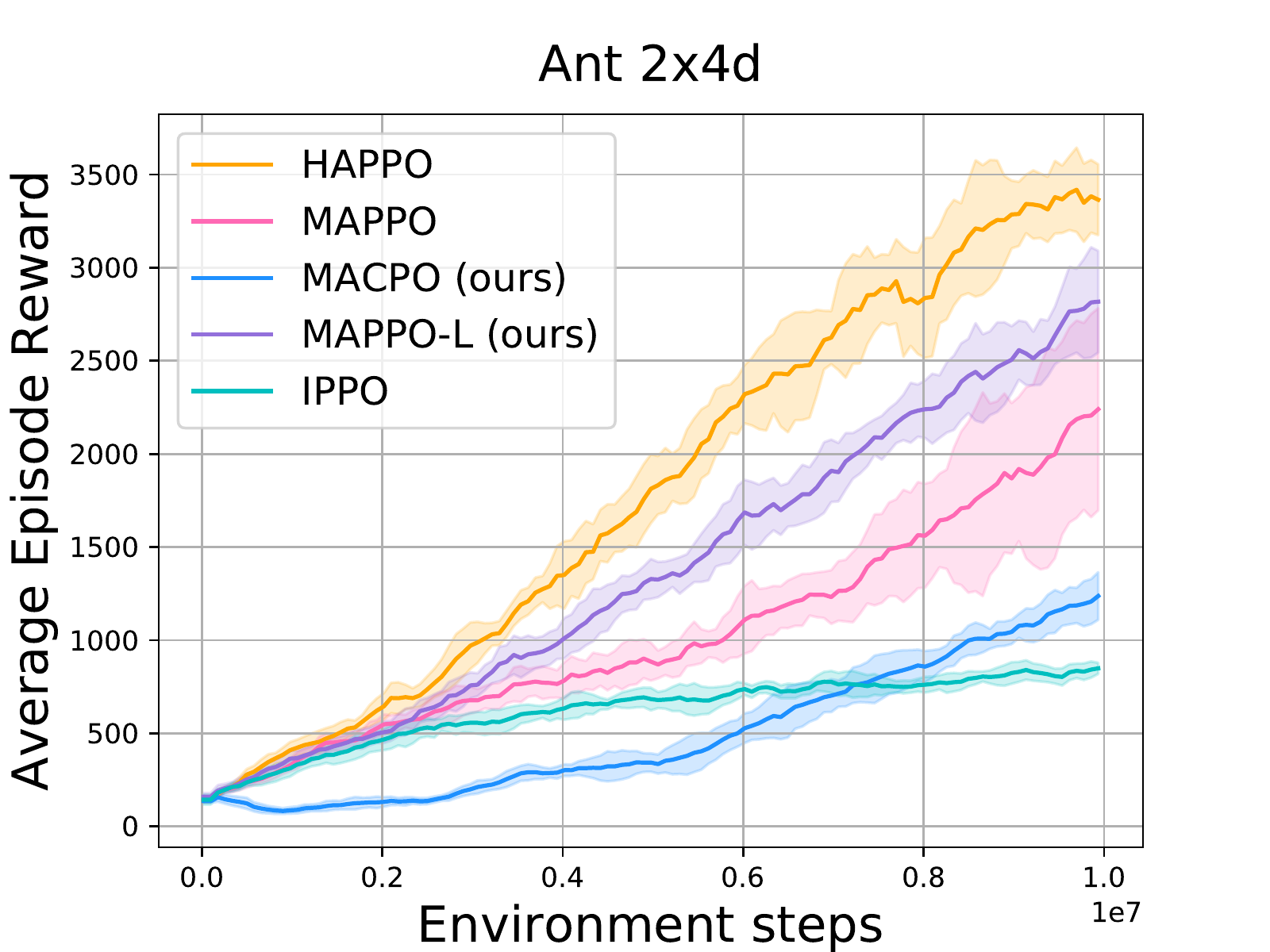}
}



\subcaptionbox{ 4x2-Agent (1st column), 2x3-Agent (2nd column), 3x2-Agent (3rd column), 6x1-Agent (4th column)}[1.\linewidth]
{
\includegraphics[width=0.24\linewidth]{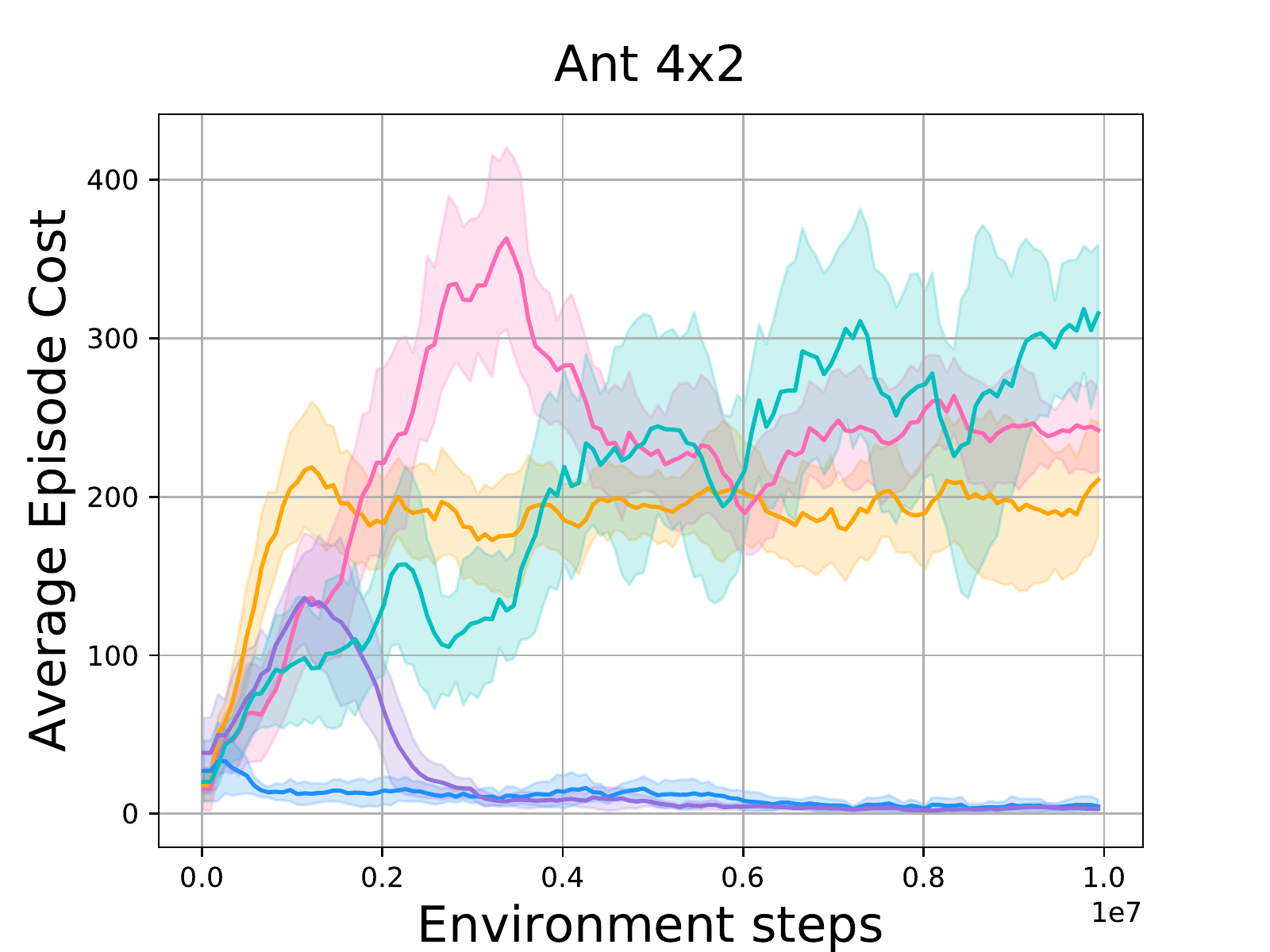}
\includegraphics[width=0.24\linewidth]{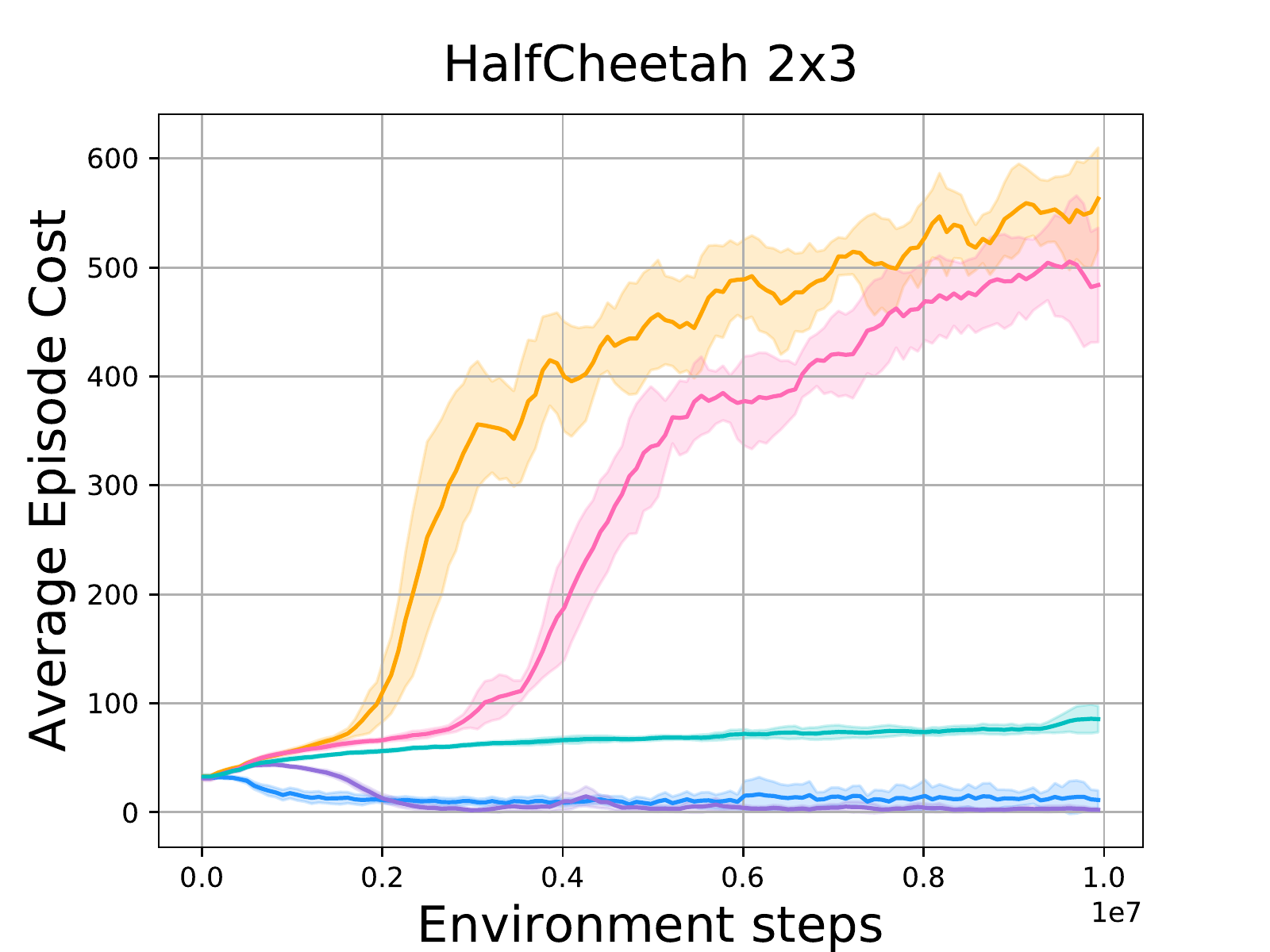}
\includegraphics[width=0.24\linewidth]{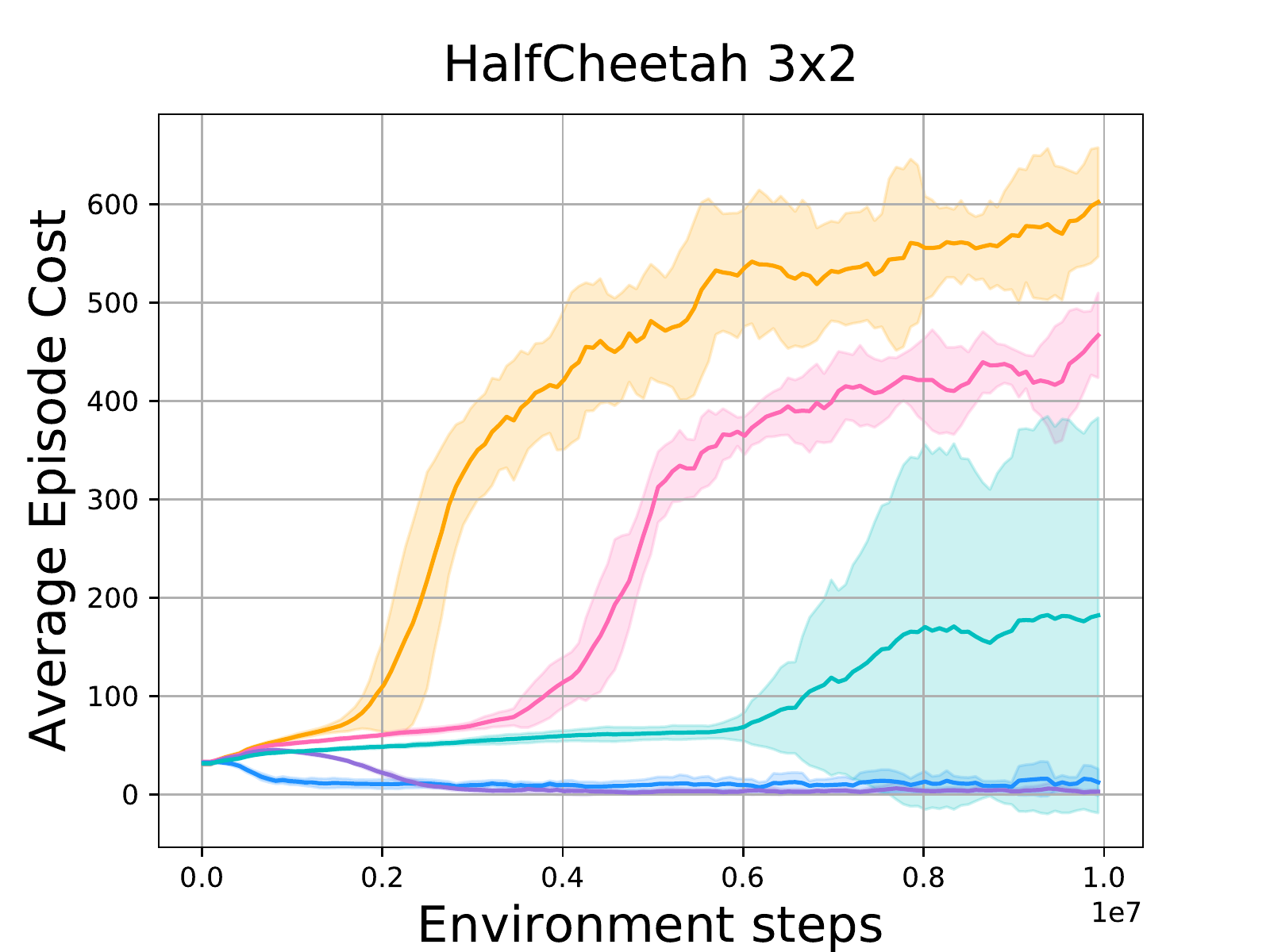}
\includegraphics[width=0.24\linewidth]{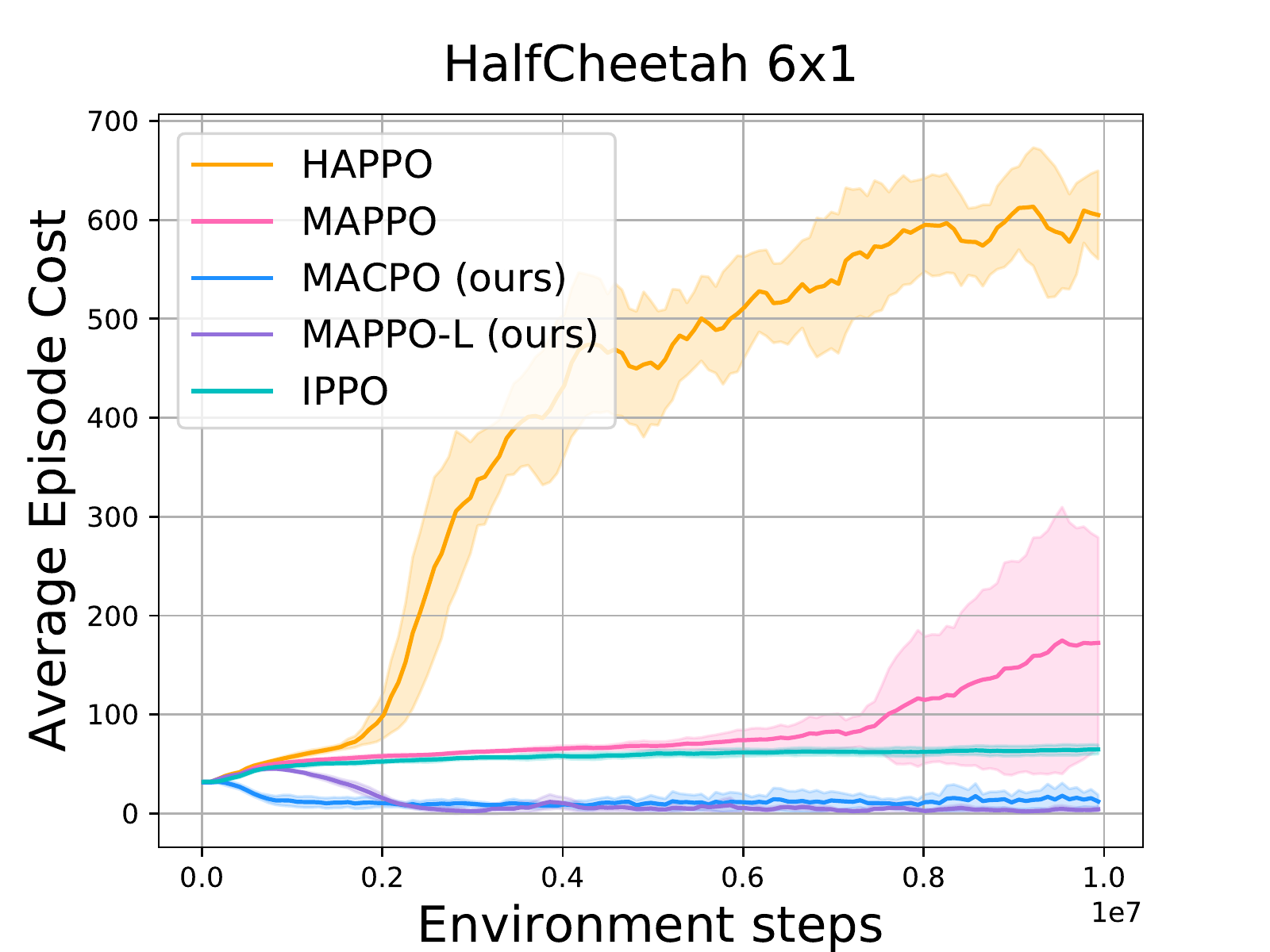}

\includegraphics[width=0.24\linewidth]{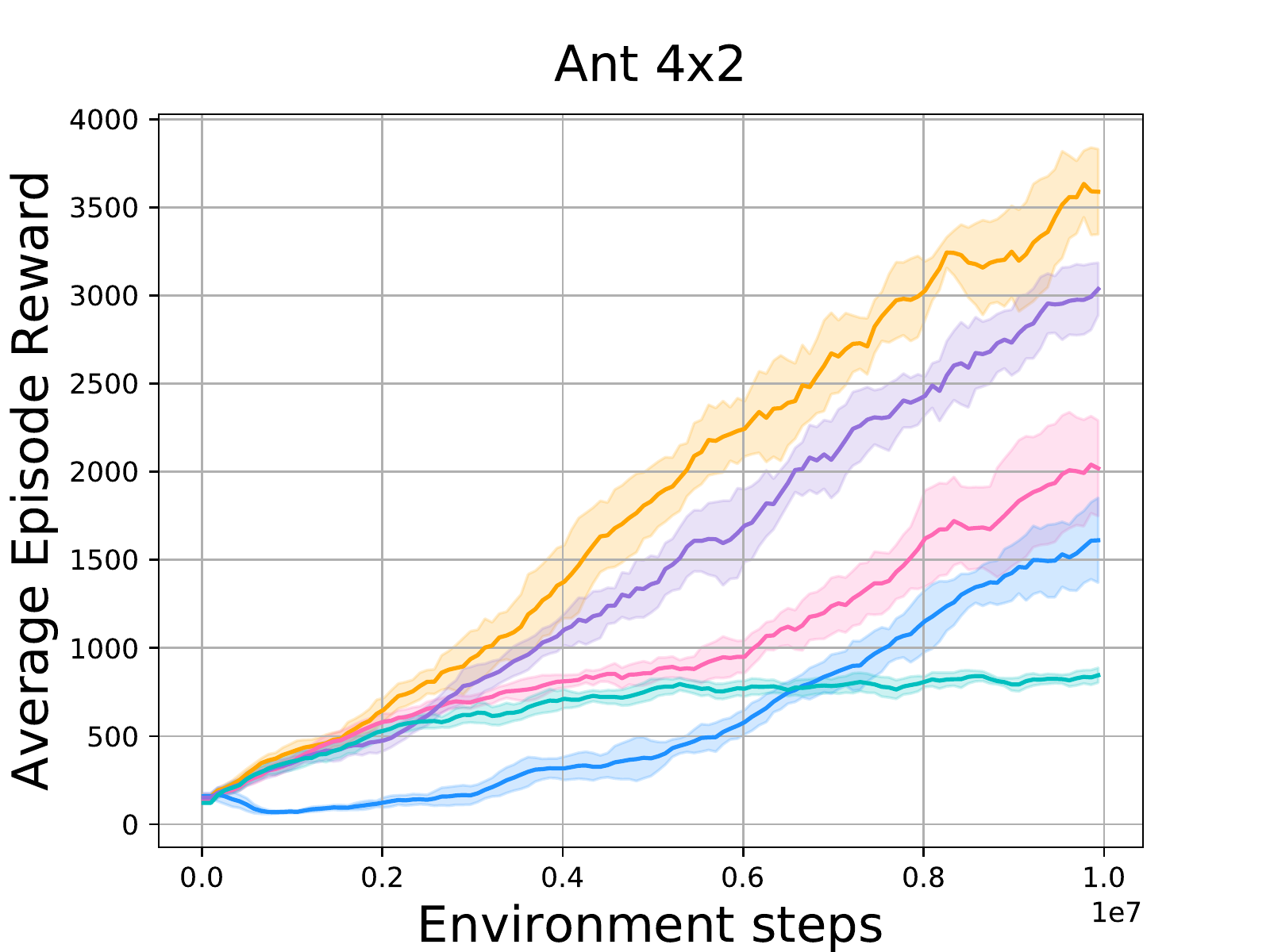}
\includegraphics[width=0.24\linewidth]{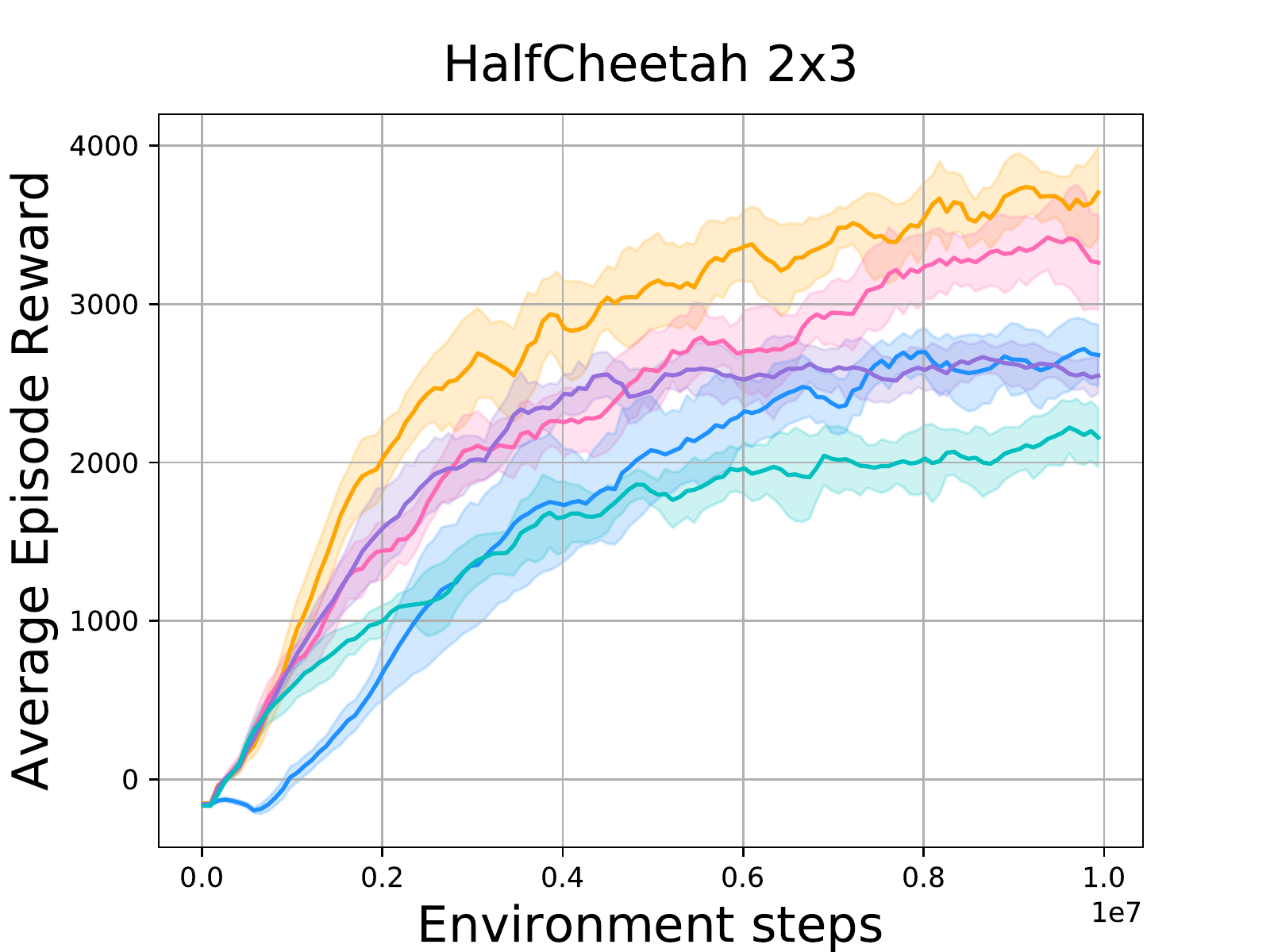}
\includegraphics[width=0.24\linewidth]{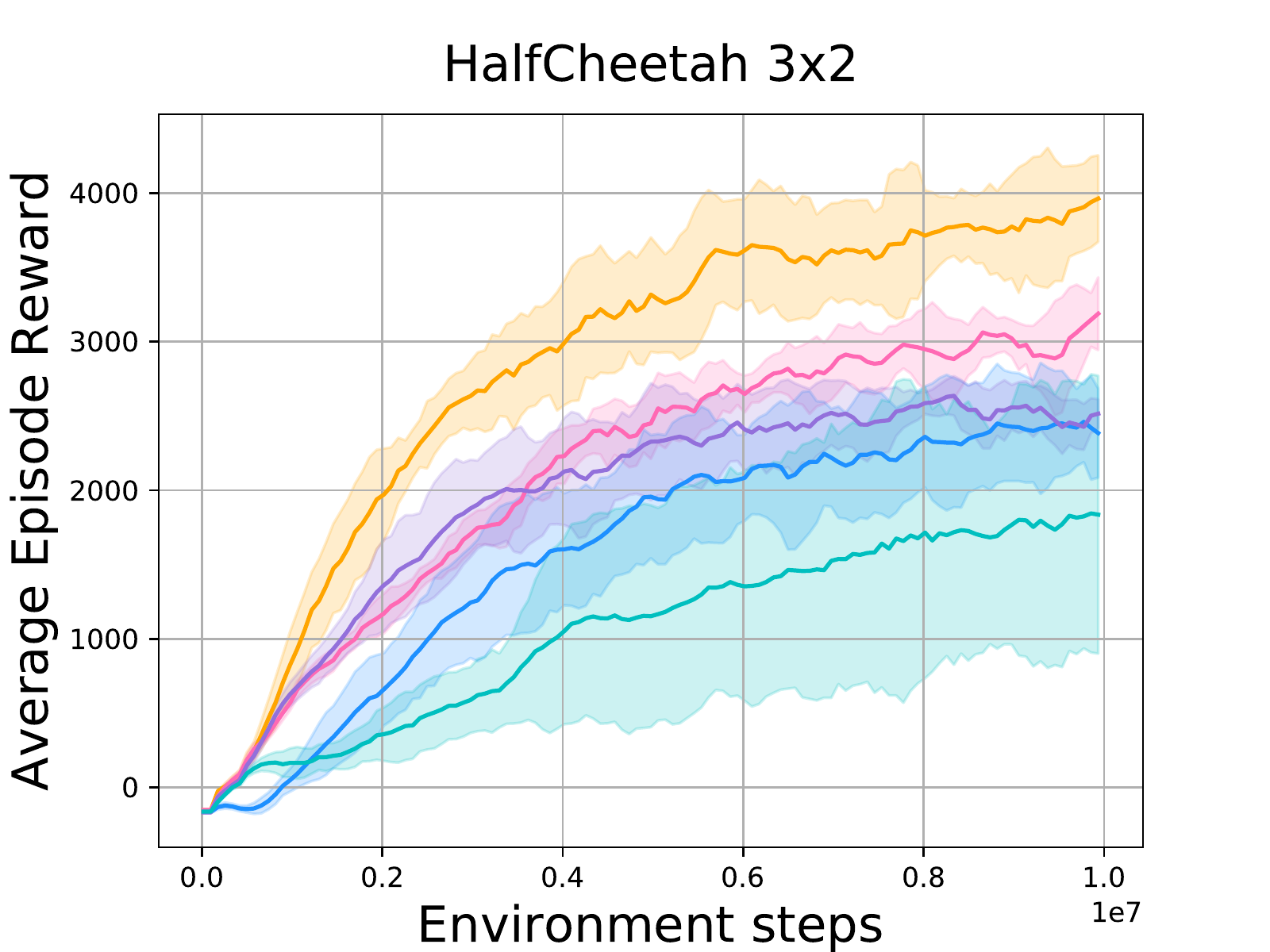}
\includegraphics[width=0.24\linewidth]{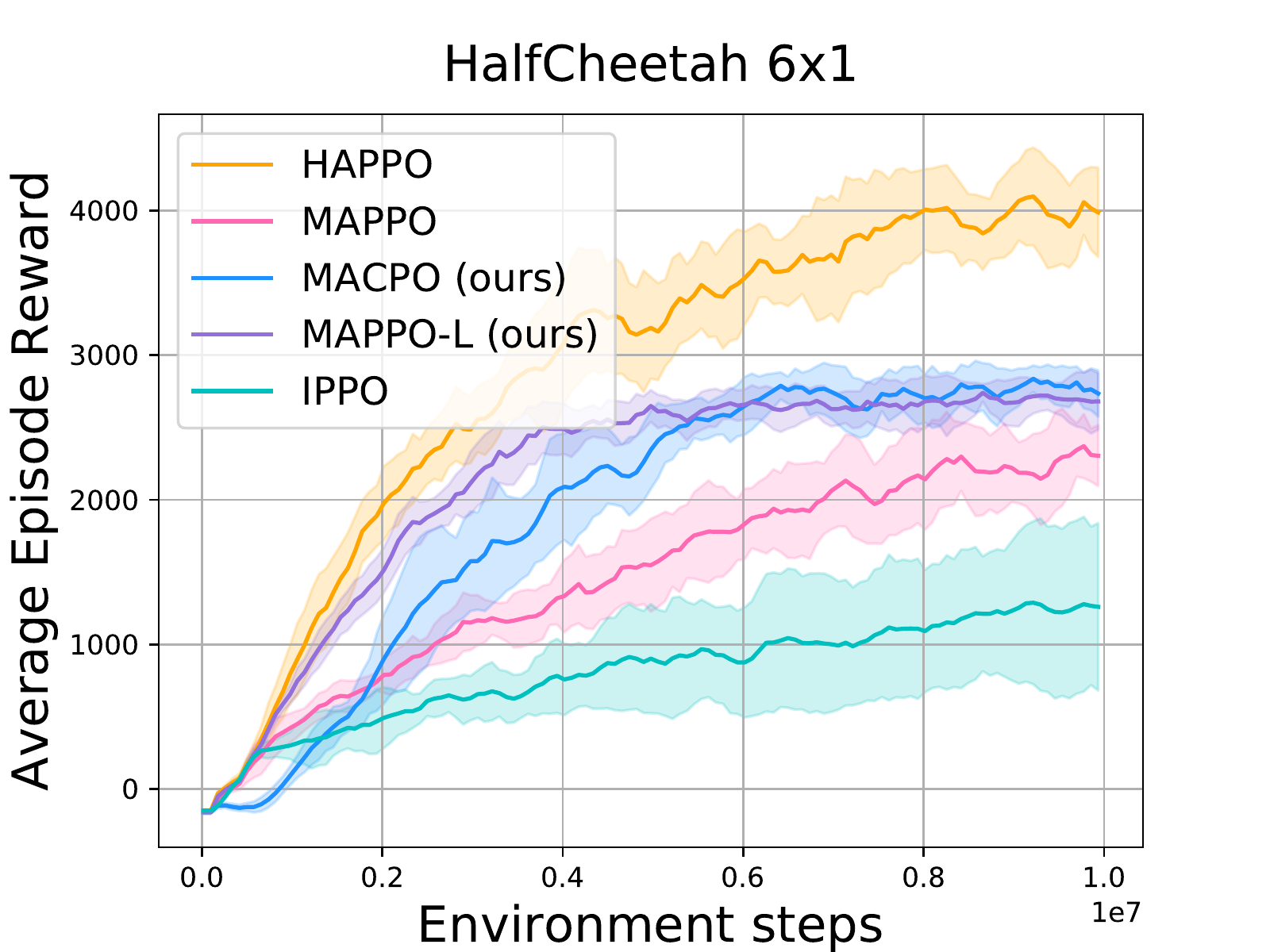}
}
 	
 	\caption{\normalsize Performance comparisons on tasks of Safe ManyAgent Ant, Ant, and HalfCheetah in terms of cost (the first row of each subfigure) and reward (the second row of each subfigure). 
 	Our methods consistently achieve  zero costs, thus satisfying safe constraints,  on all tasks. In terms of reward, our methods outperform IPPO and MAPPO on some tasks  but  underperform  HAPPO, which  is also an unsafe algorithm. We  show more results on the ManyAgent Ant  in Appendix~\ref{appendix:Experiments of-Different-Safe-ManyAgent-Ant-Environments}. 
 	} 
 	\label{fig:safe_mujoco_manyant_ant_halfCheetah}
 \end{figure*}

\section{Experiments}
Although MARL researchers have long had a variety of environments to test different algorithms,  such as StarCraftII \citep{samvelyan2019starcraft} and Multi-Agent MuJoCo \citep{peng2020facmac}, no public safe MARL benchmark has been proposed; this  impedes researchers from evaluating and benchmarking safety-aware multi-agent learning methods. 
As a side contribution of this paper, we introduce SMAMuJoCo and SMARobosuite, two safety-aware extensions of the MuJoCo~\citep{todorov2012mujoco} and Robosuite~\citep{zhu2020robosuite} environments, which are designed for safe MARL research. SMAMuJoCo and SMARobosuite are fully cooperative, continuous, and decentralised benchmarks considering constraints for safety. In many real-world applications, modular robots can be quickly paired and assembled for different tasks  \citep{althoff2019effortless}. 

 \begin{figure}[H]
	\centering
	\vspace{5pt}
	\begin{minipage}[t]{1\linewidth}
		\centering
		\includegraphics[width=.7\textwidth]{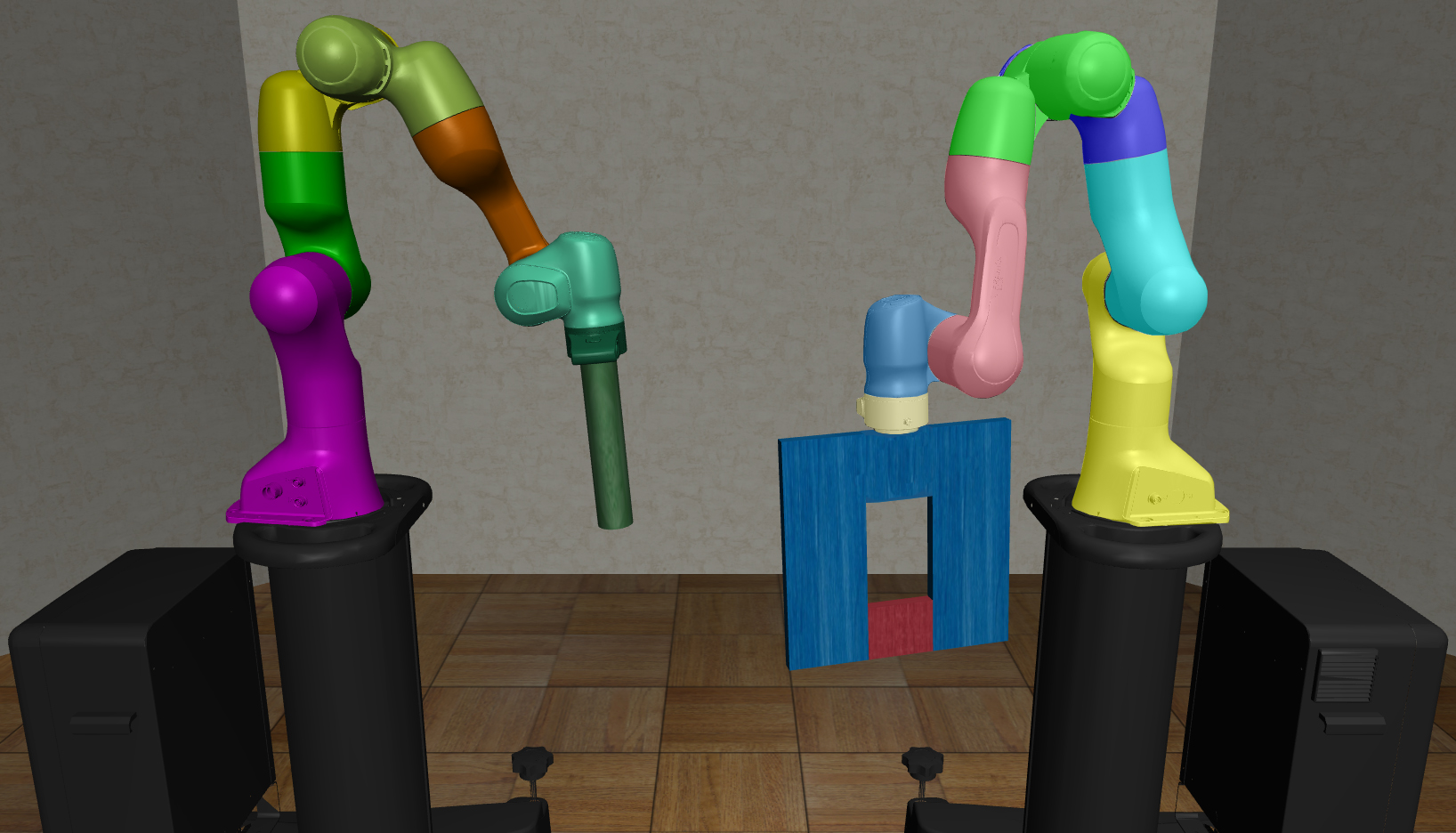} 
	\end{minipage}%
	
	\caption{  {An example task in Safe Multi-Agent Robosuite Environment. Body parts of different colours of robots are controlled by different agents. Agents cooperate to manipulate the robotic arm to achieve Peg-in-Hole task, at the same time, avoiding crashing into unsafe red areas.}}
	\label{fig:safety-robosuite-two-arm-PegInHole}
\end{figure}
\vspace{-10pt}

 \begin{figure}[H]
	\centering
	\begin{minipage}[t]{1\linewidth}
		\centering
		\includegraphics[width=.49\textwidth]{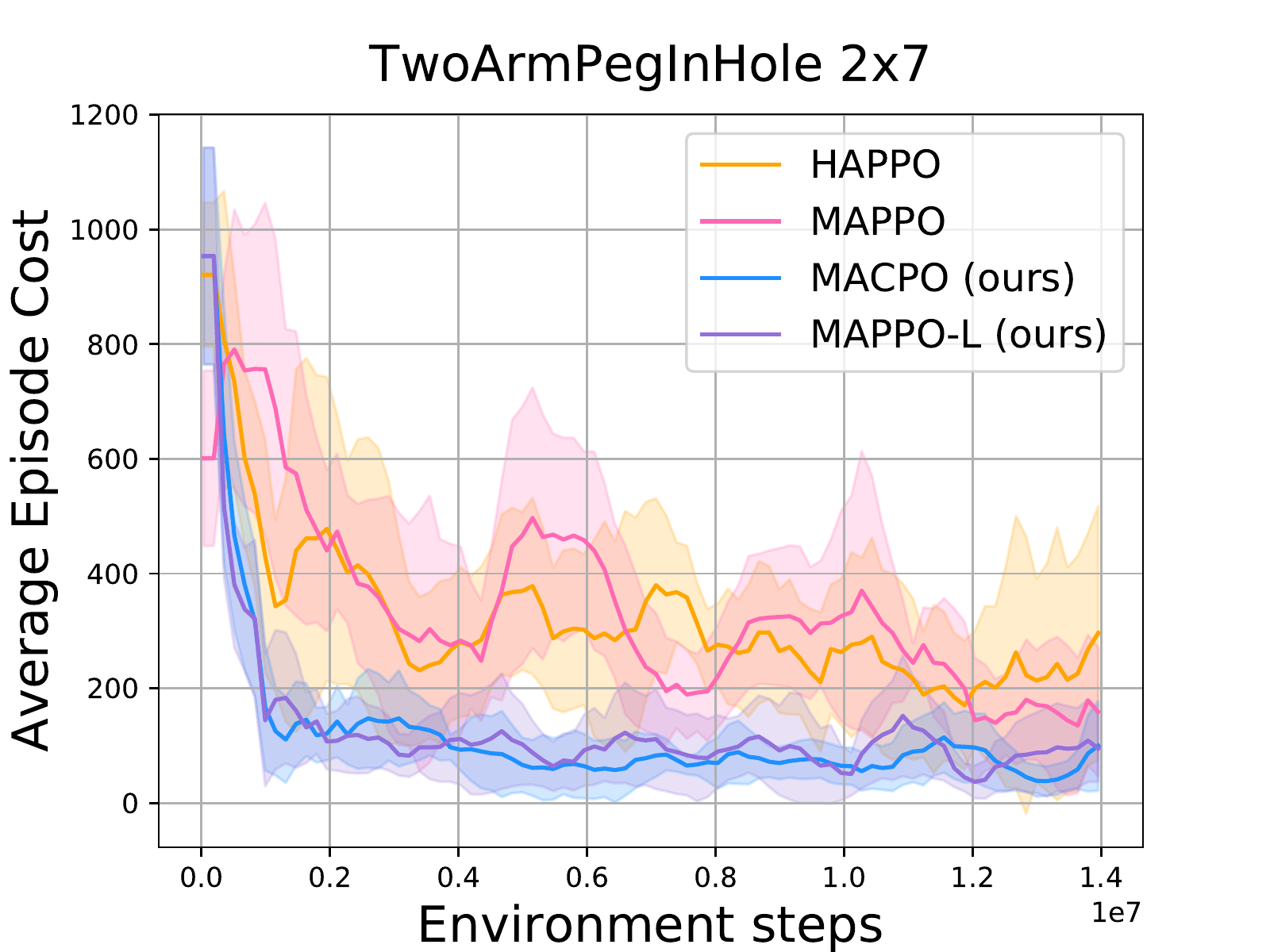}
		\includegraphics[width=.49\textwidth]{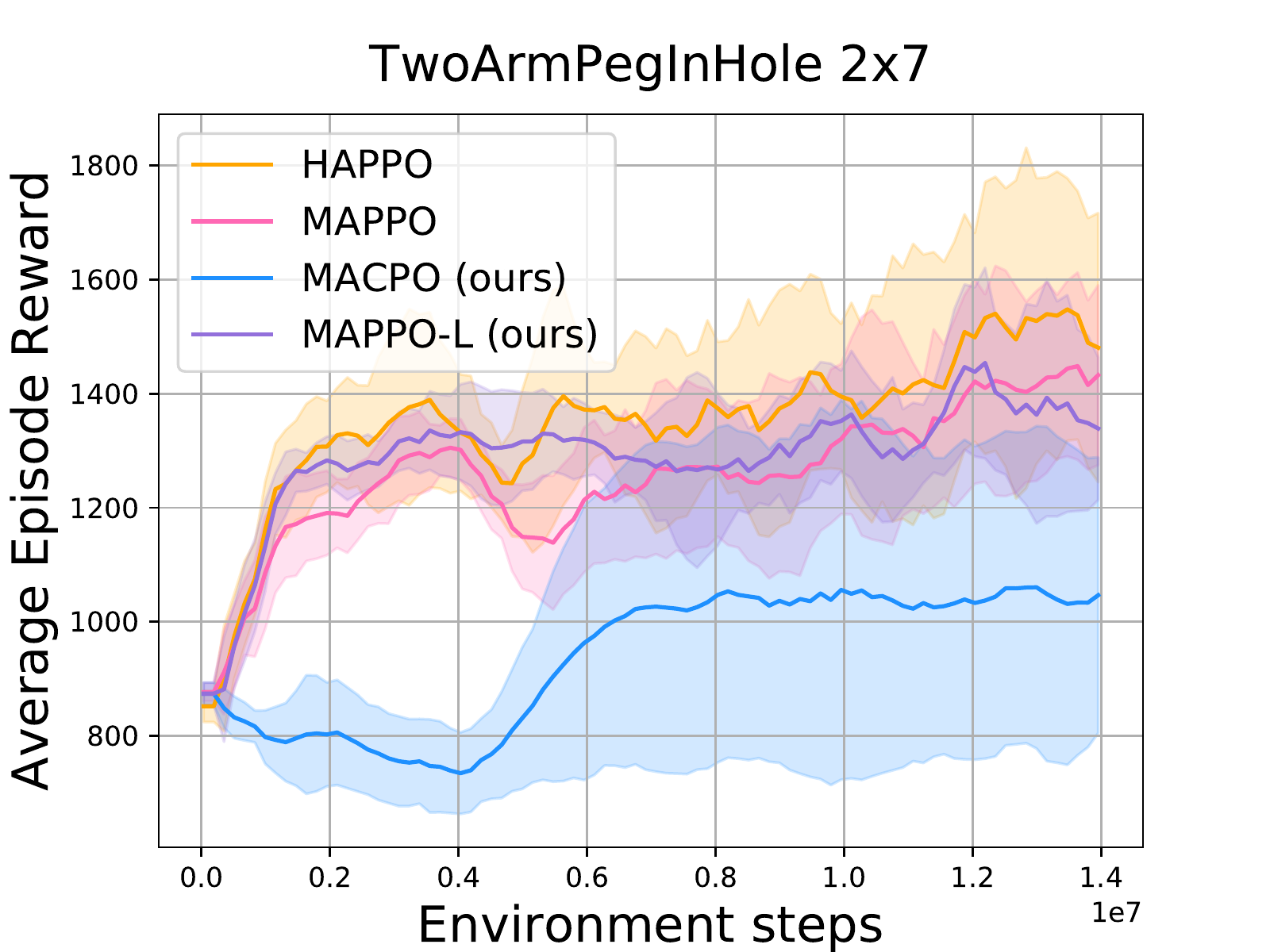}
	\end{minipage}%
	\caption{  {Performance comparisons on tasks of Safe TwoArmPegInHole in terms of cost (left) and reward (right).  Our methods consistently achieve almost zero costs, thus satisfying safe constraints.}}
 	\vspace{-5pt}
	\label{fig:experimental-results-robosuite-two-arm-PegInHole}
\end{figure}

We show example tasks in Figure \ref{fig:Safety-Ant-Environment} and \ref{fig:safety-robosuite-two-arm-PegInHole}, in our environment, safety-aware agents have to learn  not only skilful manipulations of a robot, but also to avoid crashing into unsafe obstacles and positions. W refer to
Appendix \ref{appendix:Introduction-Safe-MAMujoco} \& \ref{appendix:safe-multi-agent-robosuite} for more  descriptions of the two proposed environments.


We use SMAMuJoCo and SMARobosuite to examine if our proposed MACPO and MAPPO-Lagrangian agents can satisfy their safety constraints and cooperatively learn to achieve high rewards, compared to existing MARL algorithms. 
Notably, our proposed methods adopt  two different approaches for achieving safety.  MACPO reaches safety via \textsl{hard}  constraints and backtracking line search, while  MAPPO-Lagrangian maintains a rather \textsl{soft} safety awareness by performing gradient descents on the clip objective.  

Figure~\ref{fig:safe_mujoco_manyant_ant_halfCheetah}  \&   \ref{fig:experimental-results-robosuite-two-arm-PegInHole}  show the cost and the reward performance comparisons between MACPO, MAPPO-Lagrangian, MAPPO~\citep{yu2021surprising}, IPPO~\citep{de2020independent}, and HAPPO \citep{kuba2021trust} algorithms on series of continuous control tasks. Results are reported with $5$ random seeds.  
Detailed hyperparameter settings for each algorithm are described in Appendix \ref{appendix:Details-Settings-Experiments}. 
Figure   \ref{fig:safe_mujoco_manyant_ant_halfCheetah} \& \ref{fig:experimental-results-robosuite-two-arm-PegInHole}  should be interpreted at  three-folds; each subfigure represents a different robot, within each subfigure, three task setups in terms of multi-agent control are considered, for each task,  we plot the cost curves  (the lower the better) in the upper row, and plot the reward curves (the higher the better) in the bottom row.  
These results reveal that both MACPO and MAPPO-Lagrangian, even if initialised with \textbf{unsafe} policies, quickly learn to satisfy safety constraints, and keep their explorations within the feasible policy space. This stands in contrast to IPPO, MAPPO, and HAPPO which largely violate the constraints thus being unsafe. Notably, the cost curves reveal a subtle difference between MACPO and MAPPO-Lagrangian caused by their different approaches (hard/soft) to constrained optimisation. The hard restrictions of MACPO ascertain absolute stability in decreasing the cost value. In contrast, the fixed number of steps ($5$ in our experiments) of gradient ascent on a Lagrangian objective, performed by MAPPO-Lagrangian, can be insufficient to solve the constrained problem exactly at early stages of training. However, the method quickly stabilises and does not deviate from the safe region. If necessary, one could achieve this exactness by increasing the number of steps of the aforementioned gradient ascent.

Furthermore, our algorithms achieve comparable reward scores; both methods are often better than IPPO. In general, the performance (in terms of reward) of MAPPO-Lagrangian is better than of MACPO; moreover,  MAPPO-Lagrangian outperforms the unconstrained MAPPO on challenging \textsl{Ant} tasks. We note that on none of the tasks the reward of HAPPO was exceeded, though it is unsafe. 

\section{Conclusion}
In this paper, we tackled multi-agent policy optimisation problems with safety constraints. Central to  our findings is the safe multi-agent policy iteration procedure that attains theoretically-justified monotonic improvement guarantee and constraints satisfaction guarantee at every iteration during training. Based on this,  
we proposed two safe MARL algorithms: MACPO and MAPPO-Lagrangian.  To demonstrate their effectiveness, we introduced two new benchmark suites of SMAMuJoCo and SMARobosuite, and compared our methods against strong MARL baselines. Results show that both of our methods can significantly outperform existing  state-of-the-art methods such as IPPO, MAPPO and HAPPO in terms of safety, meanwhile maintaining comparable performance in terms of reward. 

To our knowledge, our solutions are the first of its kind in the safe MARL domain. In the future, we plan to test the performance of our algorithms in physical settings, thus taking a step towards the deployment of sage, artificial intelligence techniques in practical robotics.

\normalem 







\nocite{langley00}

\bibliography{example_paper}

\begin{thebibliography}{46}
\providecommand{\natexlab}[1]{#1}
\providecommand{\url}[1]{\texttt{#1}}
\expandafter\ifx\csname urlstyle\endcsname\relax
  \providecommand{\doi}[1]{doi: #1}\else
  \providecommand{\doi}{doi: \begingroup \urlstyle{rm}\Url}\fi

\bibitem[Achiam et~al.(2017)Achiam, Held, Tamar, and
  Abbeel]{achiam2017constrained}
Achiam, J., Held, D., Tamar, A., and Abbeel, P.
\newblock Constrained policy optimization.
\newblock In \emph{International Conference on Machine Learning}, pp.\  22--31.
  PMLR, 2017.

\bibitem[Althoff et~al.(2019)Althoff, Giusti, Liu, and
  Pereira]{althoff2019effortless}
Althoff, M., Giusti, A., Liu, S.~B., and Pereira, A.
\newblock Effortless creation of safe robots from modules through
  self-programming and self-verification.
\newblock \emph{Science Robotics}, 4\penalty0 (31), 2019.

\bibitem[Altman(1999)]{altman1999constrained}
Altman, E.
\newblock \emph{Constrained Markov decision processes}, volume~7.
\newblock CRC Press, 1999.

\bibitem[Ames et~al.(2016)Ames, Xu, Grizzle, and Tabuada]{ames2016control}
Ames, A.~D., Xu, X., Grizzle, J.~W., and Tabuada, P.
\newblock Control barrier function based quadratic programs for safety critical
  systems.
\newblock \emph{IEEE Transactions on Automatic Control}, 62\penalty0
  (8):\penalty0 3861--3876, 2016.

\bibitem[Amodei et~al.(2016)Amodei, Olah, Steinhardt, Christiano, Schulman, and
  Man{\'e}]{amodei2016concrete}
Amodei, D., Olah, C., Steinhardt, J., Christiano, P., Schulman, J., and
  Man{\'e}, D.
\newblock Concrete problems in ai safety.
\newblock \emph{arXiv preprint arXiv:1606.06565}, 2016.

\bibitem[Borrmann et~al.(2015)Borrmann, Wang, Ames, and
  Egerstedt]{borrmann2015control}
Borrmann, U., Wang, L., Ames, A.~D., and Egerstedt, M.
\newblock Control barrier certificates for safe swarm behavior.
\newblock \emph{IFAC-PapersOnLine}, 48\penalty0 (27):\penalty0 68--73, 2015.

\bibitem[Brockman et~al.(2016)Brockman, Cheung, Pettersson, Schneider,
  Schulman, Tang, and Zaremba]{1606.01540}
Brockman, G., Cheung, V., Pettersson, L., Schneider, J., Schulman, J., Tang,
  J., and Zaremba, W.
\newblock Openai gym, 2016.

\bibitem[Chow et~al.(2017)Chow, Ghavamzadeh, Janson, and Pavone]{chow2017risk}
Chow, Y., Ghavamzadeh, M., Janson, L., and Pavone, M.
\newblock Risk-constrained reinforcement learning with percentile risk
  criteria.
\newblock \emph{The Journal of Machine Learning Research}, 18\penalty0
  (1):\penalty0 6070--6120, 2017.

\bibitem[Chow et~al.(2018)Chow, Nachum, Duenez-Guzman, and
  Ghavamzadeh]{chow2018lyapunov}
Chow, Y., Nachum, O., Duenez-Guzman, E., and Ghavamzadeh, M.
\newblock A lyapunov-based approach to safe reinforcement learning.
\newblock \emph{arXiv preprint arXiv:1805.07708}, 2018.

\bibitem[Chow et~al.(2019)Chow, Nachum, Faust, Duenez-Guzman, and
  Ghavamzadeh]{chow2019lyapunov}
Chow, Y., Nachum, O., Faust, A., Duenez-Guzman, E., and Ghavamzadeh, M.
\newblock Lyapunov-based safe policy optimization for continuous control.
\newblock \emph{arXiv preprint arXiv:1901.10031}, 2019.

\bibitem[de~Witt et~al.(2020)de~Witt, Gupta, Makoviichuk, Makoviychuk, Torr,
  Sun, and Whiteson]{de2020independent}
de~Witt, C.~S., Gupta, T., Makoviichuk, D., Makoviychuk, V., Torr, P.~H., Sun,
  M., and Whiteson, S.
\newblock Is independent learning all you need in the starcraft multi-agent
  challenge?
\newblock \emph{arXiv preprint arXiv:2011.09533}, 2020.

\bibitem[Deng et~al.(2021)Deng, Li, Mguni, Wang, and Yang]{deng2021complexity}
Deng, X., Li, Y., Mguni, D.~H., Wang, J., and Yang, Y.
\newblock On the complexity of computing markov perfect equilibrium in
  general-sum stochastic games.
\newblock \emph{arXiv preprint arXiv:2109.01795}, 2021.

\bibitem[Foerster et~al.(2018)Foerster, Farquhar, Afouras, Nardelli, and
  Whiteson]{foerster2018counterfactual}
Foerster, J., Farquhar, G., Afouras, T., Nardelli, N., and Whiteson, S.
\newblock Counterfactual multi-agent policy gradients.
\newblock In \emph{Proceedings of the AAAI Conference on Artificial
  Intelligence}, volume~32, 2018.

\bibitem[Garc{\i}a \& Fern{\'a}ndez(2015)Garc{\i}a and
  Fern{\'a}ndez]{garcia2015comprehensive}
Garc{\i}a, J. and Fern{\'a}ndez, F.
\newblock A comprehensive survey on safe reinforcement learning.
\newblock \emph{Journal of Machine Learning Research}, 16\penalty0
  (1):\penalty0 1437--1480, 2015.

\bibitem[Kuba et~al.(2021{\natexlab{a}})Kuba, Chen, Wen, Wen, Sun, Wang, and
  Yang]{kuba2021trust}
Kuba, J.~G., Chen, R., Wen, M., Wen, Y., Sun, F., Wang, J., and Yang, Y.
\newblock Trust region policy optimisation in multi-agent reinforcement
  learning.
\newblock \emph{arXiv preprint arXiv:2109.11251}, 2021{\natexlab{a}}.

\bibitem[Kuba et~al.(2021{\natexlab{b}})Kuba, Wen, Yang, Meng, Gu, Zhang,
  Mguni, and Wang]{kuba2021settling}
Kuba, J.~G., Wen, M., Yang, Y., Meng, L., Gu, S., Zhang, H., Mguni, D.~H., and
  Wang, J.
\newblock Settling the variance of multi-agent policy gradients.
\newblock \emph{arXiv preprint arXiv:2108.08612}, 2021{\natexlab{b}}.

\bibitem[Langley(2000)]{langley00}
Langley, P.
\newblock Crafting papers on machine learning.
\newblock In Langley, P. (ed.), \emph{Proceedings of the 17th International
  Conference on Machine Learning (ICML 2000)}, pp.\  1207--1216, Stanford, CA,
  2000. Morgan Kaufmann.

\bibitem[Lillicrap et~al.(2015)Lillicrap, Hunt, Pritzel, Heess, Erez, Tassa,
  Silver, and Wierstra]{lillicrap2015continuous}
Lillicrap, T.~P., Hunt, J.~J., Pritzel, A., Heess, N., Erez, T., Tassa, Y.,
  Silver, D., and Wierstra, D.
\newblock Continuous control with deep reinforcement learning.
\newblock \emph{arXiv preprint arXiv:1509.02971}, 2015.

\bibitem[Liu et~al.(2021)Liu, Geng, Aggarwal, Lan, Yang, and Xu]{liu2021cmix}
Liu, C., Geng, N., Aggarwal, V., Lan, T., Yang, Y., and Xu, M.
\newblock Cmix: Deep multi-agent reinforcement learning with peak and average
  constraints.
\newblock In \emph{Joint European Conference on Machine Learning and Knowledge
  Discovery in Databases}, pp.\  157--173. Springer, 2021.

\bibitem[Lu et~al.(2021)Lu, Zhang, Chen, Basar, and
  Horesh]{lu2021decentralized}
Lu, S., Zhang, K., Chen, T., Basar, T., and Horesh, L.
\newblock Decentralized policy gradient descent ascent for safe multi-agent
  reinforcement learning.
\newblock In \emph{Proceedings of the AAAI Conference on Artificial
  Intelligence}, volume~35, pp.\  8767--8775, 2021.

\bibitem[Moldovan \& Abbeel(2012)Moldovan and Abbeel]{moldovan2012safe}
Moldovan, T.~M. and Abbeel, P.
\newblock Safe exploration in markov decision processes.
\newblock In \emph{Proceedings of the 29th International Coference on
  International Conference on Machine Learning}, pp.\  1451--1458, 2012.

\bibitem[Peng et~al.(2020)Peng, Rashid, de~Witt, Kamienny, Torr, B{\"o}hmer,
  and Whiteson]{peng2020facmac}
Peng, B., Rashid, T., de~Witt, C. A.~S., Kamienny, P.-A., Torr, P.~H.,
  B{\"o}hmer, W., and Whiteson, S.
\newblock Facmac: Factored multi-agent centralised policy gradients.
\newblock \emph{arXiv preprint arXiv:2003.06709}, 2020.

\bibitem[Pollard(2000)]{pollard2000asymptopia}
Pollard, D.
\newblock Asymptopia: an exposition of statistical asymptotic theory. 2000.
\newblock \emph{URL http://www. stat. yale. edu/pollard/Books/Asymptopia},
  2000.

\bibitem[Qin et~al.(2020)Qin, Zhang, Chen, Chen, and Fan]{qin2020learning}
Qin, Z., Zhang, K., Chen, Y., Chen, J., and Fan, C.
\newblock Learning safe multi-agent control with decentralized neural barrier
  certificates.
\newblock In \emph{International Conference on Learning Representations}, 2020.

\bibitem[Rashid et~al.(2018)Rashid, Samvelyan, Schroeder, Farquhar, Foerster,
  and Whiteson]{rashid2018qmix}
Rashid, T., Samvelyan, M., Schroeder, C., Farquhar, G., Foerster, J., and
  Whiteson, S.
\newblock Qmix: Monotonic value function factorisation for deep multi-agent
  reinforcement learning.
\newblock In \emph{International Conference on Machine Learning}, pp.\
  4295--4304. PMLR, 2018.

\bibitem[Ray et~al.(2019)Ray, Achiam, and Amodei]{ray2019benchmarking}
Ray, A., Achiam, J., and Amodei, D.
\newblock Benchmarking safe exploration in deep reinforcement learning.
\newblock \emph{arXiv preprint arXiv:1910.01708}, 7, 2019.

\bibitem[Samvelyan et~al.(2019)Samvelyan, Rashid, De~Witt, Farquhar, Nardelli,
  Rudner, Hung, Torr, Foerster, and Whiteson]{samvelyan2019starcraft}
Samvelyan, M., Rashid, T., De~Witt, C.~S., Farquhar, G., Nardelli, N., Rudner,
  T.~G., Hung, C.-M., Torr, P.~H., Foerster, J., and Whiteson, S.
\newblock The starcraft multi-agent challenge.
\newblock \emph{arXiv preprint arXiv:1902.04043}, 2019.

\bibitem[Schulman et~al.(2015)Schulman, Levine, Abbeel, Jordan, and
  Moritz]{schulman2015trust}
Schulman, J., Levine, S., Abbeel, P., Jordan, M., and Moritz, P.
\newblock Trust region policy optimization.
\newblock In \emph{International conference on machine learning}, pp.\
  1889--1897. PMLR, 2015.

\bibitem[Schulman et~al.(2017)Schulman, Wolski, Dhariwal, Radford, and
  Klimov]{schulman2017proximal}
Schulman, J., Wolski, F., Dhariwal, P., Radford, A., and Klimov, O.
\newblock Proximal policy optimization algorithms.
\newblock \emph{arXiv preprint arXiv:1707.06347}, 2017.

\bibitem[Shalev-Shwartz et~al.(2016)Shalev-Shwartz, Shammah, and
  Shashua]{shalev2016safe}
Shalev-Shwartz, S., Shammah, S., and Shashua, A.
\newblock Safe, multi-agent, reinforcement learning for autonomous driving.
\newblock \emph{arXiv preprint arXiv:1610.03295}, 2016.

\bibitem[Silver et~al.(2016)Silver, Huang, Maddison, Guez, Sifre, Van
  Den~Driessche, Schrittwieser, Antonoglou, Panneershelvam, Lanctot,
  et~al.]{silver2016mastering}
Silver, D., Huang, A., Maddison, C.~J., Guez, A., Sifre, L., Van Den~Driessche,
  G., Schrittwieser, J., Antonoglou, I., Panneershelvam, V., Lanctot, M.,
  et~al.
\newblock Mastering the game of go with deep neural networks and tree search.
\newblock \emph{nature}, 529\penalty0 (7587):\penalty0 484--489, 2016.

\bibitem[Silver et~al.(2017)Silver, Schrittwieser, Simonyan, Antonoglou, Huang,
  Guez, Hubert, Baker, Lai, Bolton, et~al.]{silver2017mastering}
Silver, D., Schrittwieser, J., Simonyan, K., Antonoglou, I., Huang, A., Guez,
  A., Hubert, T., Baker, L., Lai, M., Bolton, A., et~al.
\newblock Mastering the game of go without human knowledge.
\newblock \emph{nature}, 550\penalty0 (7676):\penalty0 354--359, 2017.

\bibitem[Sunehag et~al.(2018)Sunehag, Lever, Gruslys, Czarnecki, Zambaldi,
  Jaderberg, Lanctot, Sonnerat, Leibo, Tuyls, et~al.]{sunehag2018value}
Sunehag, P., Lever, G., Gruslys, A., Czarnecki, W.~M., Zambaldi, V., Jaderberg,
  M., Lanctot, M., Sonnerat, N., Leibo, J.~Z., Tuyls, K., et~al.
\newblock Value-decomposition networks for cooperative multi-agent learning
  based on team reward.
\newblock In \emph{Proceedings of the 17th International Conference on
  Autonomous Agents and MultiAgent Systems}, pp.\  2085--2087, 2018.

\bibitem[Sutton et~al.(1999)Sutton, McAllester, Singh, Mansour,
  et~al.]{sutton1999policy}
Sutton, R.~S., McAllester, D.~A., Singh, S.~P., Mansour, Y., et~al.
\newblock Policy gradient methods for reinforcement learning with function
  approximation.
\newblock In \emph{NIPs}, volume~99, pp.\  1057--1063. Citeseer, 1999.

\bibitem[Todorov et~al.(2012)Todorov, Erez, and Tassa]{todorov2012mujoco}
Todorov, E., Erez, T., and Tassa, Y.
\newblock Mujoco: A physics engine for model-based control.
\newblock In \emph{2012 IEEE/RSJ International Conference on Intelligent Robots
  and Systems}, pp.\  5026--5033. IEEE, 2012.

\bibitem[Vinyals et~al.(2019)Vinyals, Babuschkin, Czarnecki, Mathieu, Dudzik,
  Chung, Choi, Powell, Ewalds, Georgiev, et~al.]{vinyals2019grandmaster}
Vinyals, O., Babuschkin, I., Czarnecki, W.~M., Mathieu, M., Dudzik, A., Chung,
  J., Choi, D.~H., Powell, R., Ewalds, T., Georgiev, P., et~al.
\newblock Grandmaster level in starcraft ii using multi-agent reinforcement
  learning.
\newblock \emph{Nature}, 575\penalty0 (7782):\penalty0 350--354, 2019.

\bibitem[Wei et~al.(2021)Wei, Liu, and Ying]{wei2021provably}
Wei, H., Liu, X., and Ying, L.
\newblock A provably-efficient model-free algorithm for constrained markov
  decision processes.
\newblock \emph{arXiv preprint arXiv:2106.01577}, 2021.

\bibitem[Xu et~al.(2021)Xu, Liang, and Lan]{xu2021crpo}
Xu, T., Liang, Y., and Lan, G.
\newblock Crpo: A new approach for safe reinforcement learning with convergence
  guarantee.
\newblock In \emph{International Conference on Machine Learning}, pp.\
  11480--11491. PMLR, 2021.

\bibitem[Yang \& Wang(2020)Yang and Wang]{yang2020overview}
Yang, Y. and Wang, J.
\newblock An overview of multi-agent reinforcement learning from game
  theoretical perspective.
\newblock \emph{arXiv preprint arXiv:2011.00583}, 2020.

\bibitem[Yang et~al.(2017)Yang, Yu, Bai, Wang, Zhang, Wen, and
  Yu]{yang2017study}
Yang, Y., Yu, L., Bai, Y., Wang, J., Zhang, W., Wen, Y., and Yu, Y.
\newblock A study of ai population dynamics with million-agent reinforcement
  learning.
\newblock \emph{arXiv preprint arXiv:1709.04511}, 2017.

\bibitem[Yang et~al.(2020)Yang, Wen, Wang, Chen, Shao, Mguni, and
  Zhang]{yang2020multi}
Yang, Y., Wen, Y., Wang, J., Chen, L., Shao, K., Mguni, D., and Zhang, W.
\newblock Multi-agent determinantal q-learning.
\newblock In \emph{International Conference on Machine Learning}, pp.\
  10757--10766. PMLR, 2020.

\bibitem[Yu et~al.(2021)Yu, Velu, Vinitsky, Wang, Bayen, and
  Wu]{yu2021surprising}
Yu, C., Velu, A., Vinitsky, E., Wang, Y., Bayen, A., and Wu, Y.
\newblock The surprising effectiveness of mappo in cooperative, multi-agent
  games.
\newblock \emph{arXiv preprint arXiv:2103.01955}, 2021.

\bibitem[Zanger et~al.(2021)Zanger, Daaboul, and Zöllner]{zanger2021safe}
Zanger, M.~A., Daaboul, K., and Zöllner, J.~M.
\newblock Safe continuous control with constrained model-based policy
  optimization, 2021.

\bibitem[Zhang et~al.(2019)Zhang, Bastani, and Kumar]{zhang2019mamps}
Zhang, W., Bastani, O., and Kumar, V.
\newblock Mamps: Safe multi-agent reinforcement learning via model predictive
  shielding.
\newblock \emph{arXiv preprint arXiv:1910.12639}, 2019.

\bibitem[Zhou et~al.(2020)Zhou, Luo, Villela, Yang, Rusu, Miao, Zhang, Alban,
  Fadakar, Chen, et~al.]{zhou2020smarts}
Zhou, M., Luo, J., Villela, J., Yang, Y., Rusu, D., Miao, J., Zhang, W., Alban,
  M., Fadakar, I., Chen, Z., et~al.
\newblock Smarts: Scalable multi-agent reinforcement learning training school
  for autonomous driving.
\newblock \emph{arXiv e-prints}, pp.\  arXiv--2010, 2020.

\bibitem[Zhu et~al.(2020)Zhu, Wong, Mandlekar, and
  Mart{\'\i}n-Mart{\'\i}n]{zhu2020robosuite}
Zhu, Y., Wong, J., Mandlekar, A., and Mart{\'\i}n-Mart{\'\i}n, R.
\newblock robosuite: A modular simulation framework and benchmark for robot
  learning.
\newblock \emph{arXiv preprint arXiv:2009.12293}, 2020.

\end{thebibliography}
\bibliographystyle{icml2022}

\newpage
\appendix
\onecolumn



\section{Proofs of preliminary results}
\vspace{-5pt}
\label{appendix:preliminaries}

\maadlemma*

\begin{proof}
We write the multi-agent advantage as in its definition, and expand it in a telescoping sum.
\begin{align}
    A_{\boldsymbol{\pi}}^{i_{1: h}}\left(s, \va^{i_{1: h}}\right) &= Q_{\boldsymbol{\pi}}^{i_{1: h}}\left(s, \va^{i_{1: h}}\right) - V_{\boldsymbol{\pi}}\left(s\right)
    \nonumber\\
    &= \sum_{j=1}^{h} \left[ Q_{\boldsymbol{\pi}}^{i_{1: j}}\left(s, \va^{i_{1: j}}\right) - Q_{\boldsymbol{\pi}}^{i_{1: j-1}}\left(s, \va^{i_{1: j-1}}\right)\right]
    \nonumber\\
    &= \sum_{j=1}^{h} A_{\boldsymbol{\pi}}^{i_j}\left(s, \va^{i_{1: j-1}}, a^{i_j}\right). \nonumber
\end{align}
\end{proof}

\surrogatecostlemma*

\begin{proof}
From the proof of Theorem 1 from \cite{schulman2015trust} (in particular, equations (41)-(45)), applied to joint policies $\boldsymbol{\pi}$ and $\boldsymbol{\bar{\pi}}$, we conclude that
\begin{align}
    &J^i_j(\boldsymbol{\bar{\pi}}) \leq J^i_j(\boldsymbol{\pi}) + \E_{\rs\sim\rho_{\boldsymbol{\pi}}, \rva\sim\boldsymbol{\bar{\pi}} }
    \left[ A^i_{j, \boldsymbol{\pi}}(\rs, \ra^i) \right] + \frac{4\alpha^2\gamma\max_{s, a^i} |A^i_{j, \boldsymbol{\pi}}(s, a^i)|}{(1-\gamma)^2}, \nonumber\\
    &\text{where} \ \ \ \ \  \alpha = D^{\text{max}}_{\text{TV}}\left( \boldsymbol{\pi}, \boldsymbol{\bar{\pi}}\right) =\max_s D_{\text{TV}}\left( \boldsymbol{\pi}(\cdot|s), \boldsymbol{\bar{\pi}}(\cdot|s)\right).\nonumber
\end{align}
Using the inequality $D_{\text{TV}}(p, q)^2 \leq D_{\text{KL}}(p, q)$ \citep{pollard2000asymptopia, schulman2015trust}, we obtain
\begin{align}
    &J^i_j(\boldsymbol{\bar{\pi}}) \leq J^i_j(\boldsymbol{\pi}) + \E_{\rs\sim\rho_{\boldsymbol{\pi}}, \rva\sim\boldsymbol{\bar{\pi}} }
    \left[ A^i_{j, \boldsymbol{\pi}}(\rs, \ra^i) \right] + \frac{4\gamma\max_{s, a^i} |A^i_{j, \boldsymbol{\pi}}(s, a^i)|}{(1-\gamma)^2}D^{\text{max}}_{\text{KL}}(\boldsymbol{\pi}, \boldsymbol{\bar{\pi}}), \nonumber\\
    &\text{where} \ \ \ \ \  D^{\text{max}}_{\text{KL}}(\boldsymbol{\pi}, \boldsymbol{\bar{\pi}}) = \max_s D_{\text{KL}}\left(\boldsymbol{\pi}(\cdot|s), \boldsymbol{\bar{\pi}}(\cdot|s)\right). \nonumber 
\end{align}
Notice now that we have $\E_{\rs\sim\rho_{\boldsymbol{\pi}}, \rva\sim\boldsymbol{\bar{\pi}} }
    \left[ A^i_{j, \boldsymbol{\pi}}(\rs, \ra^i) \right] 
    = \E_{\rs\sim\rho_{\boldsymbol{\pi}}, \ra^i\sim\bar{\pi}^i }
    \left[ A^i_{j, \boldsymbol{\pi}}(\rs, \ra^i) \right]$, and
\begin{align}
    \label{eq:kl-inequality}
    &D_{\text{KL}}^{\text{max}}(\boldsymbol{\pi}, 
    \boldsymbol{\bar{\pi}}) =\max_{s}D_{\text{KL}}\left(\boldsymbol{\pi}(\cdot|s), \boldsymbol{\bar{\pi}}(\cdot|s)\right) = 
    \max_s\left( \sum_{l=1}^{n}D_{\text{KL}}\left( \pi^l(\cdot|s), \bar{\pi}^l(\cdot|s) \right) \right)\nonumber\\
    &\leq \sum_{l=1}^{n}\max_{s}D_{\text{KL}}\left( \pi^l(\cdot|s), \bar{\pi}^l(\cdot|s) \right) = \sum_{l=1}^{n}D^{\text{max}}_{\text{KL}}\left( \pi^l, \bar{\pi}^l\right).
\end{align}
Setting $\nu^i_j = \frac{4\gamma\max_{s, a^i} |A^i_{j, \boldsymbol{\pi}}(s, a^i)|}{(1-\gamma)^2}$, we finally obtain
\begin{align}
    J^i_j(\boldsymbol{\bar{\pi}}) \leq J^i_j(\boldsymbol{\pi}) + L^i_{j, \boldsymbol{\pi}}\left( \bar{\pi}^i \right) + \nu^i_j\sum_{l=1}^{n}D^{\text{max}}_{\text{KL}}\left( \pi^l, \bar{\pi}^l\right).\nonumber
\end{align}

\end{proof}

\clearpage
\section{Auxiliary Results for Algorithm \ref{algorithm:theoretical-safe-matrpo}}
\label{appendix:results-safe-matrpo}

\begin{remark}
\label{remark:safe-updates}
In Algorithm \ref{algorithm:theoretical-safe-matrpo}, we compute the size of KL constraint as 
\begin{align}
    \delta^{i_h} = \min\Bigg\{ \min_{l \leq h-1}\min_{1\leq j \leq m^l}\frac{ c^{i_l}_j - J^{i_l}_j(\boldsymbol{\pi}_k) 
    - L^{i_l}_{j, \boldsymbol{\pi}_k}(\pi^{i_l}_{k+1})
    -\nu^{i_l}_j\sum_{u=1}^{h-1}D^{\text{max}}_{\text{KL}}(\pi^{u}_k, \pi^{u}_{k+1}) }{ \nu^{i_l}_j }, \nonumber\\
    \min_{l \geq h+1}\min_{1\leq j \leq m^l}\frac{ c^{i_l}_j - J^{i_l}_j(\boldsymbol{\pi}_k) - \nu^{i_l}_j\sum_{u=1}^{h-1}D^{\text{max}}_{\text{KL}}(\pi^{u}_k, \pi^{u}_{k+1}) }{ \nu^{i_l}_j }\Bigg\}.\nonumber
\end{align}
Note that $\delta^{i_1}$ (i.e., $h=1$) is guaranteed to be non-negative if $\boldsymbol{\pi}_k$ satisfies safety constraints; that is because then $c^{i_l}_j \geq J^{i_l}_j(\boldsymbol{\pi}_k)$ for all $l$ and $j$, and the set $\{l \ | \ l\textless h\}$ is empty.

This formula for $\delta^{i_h}$, combined with Lemma \ref{lemma:surrogate-cost}, assures that the policies $\pi^{i_h}$ within $\delta^{i_h}$ max-KL distance from $\pi^{i_h}_k$ will not violate other agents' safety constraints, as long as the base joint policy $\boldsymbol{\pi}_k$ did not violate them (which assures $\delta^{i_1}\geq 0$). To see this, notice that for every $l= 1, \dots, h-1$, and $j=1, \dots, m^l$,
\begin{align}
    &D^{\text{max}}_{\text{KL}}(\pi^{i_h}_k, \pi^{i_h}) \leq \delta^{i_h} \leq \frac{ c^{i_l}_j - J^{i_l}_j(\boldsymbol{\pi}_k)
    - L^{i_l}_{j, \boldsymbol{\pi}_k}(\pi^{i_l}_{k+1})
    -\nu^{i_l}_j\sum_{u=1}^{h-1}D^{\text{max}}_{\text{KL}}(\pi^{u}_k, \pi^{u}_{k+1}) }{ \nu^{i_l}_j },\nonumber\\
    &\text{implies} \ \ J^{i_l}_j(\boldsymbol{\pi}_k) + L^{i_l}_{j, \boldsymbol{\pi}_k}(\pi^{i_l}_{k+1}) + \nu^{i_l}_j\sum_{u=1}^{h-1}D^{\text{max}}_{\text{KL}}(\pi^{u}_k, \pi^{u}_{k+1}) + \nu^{i_l}_j D^{\text{max}}_{\text{KL}}(\pi^{i_h}_k, \pi^{i_h}) \leq c^{i_l}_j.\nonumber
\end{align}
By Lemma \ref{lemma:surrogate-cost}, the left-hand side of the above inequality is an upper bound of {\small $J^{i_l}_j(\boldsymbol{\pi}_{k+1}^{i_{1:h-1}},\pi^{i_h},\boldsymbol{\pi}^{-i_{1:h}}_k)$}, which implies that the update of agent $i_h$ doesn't violate the constraint of $J^{i_l}_j$. The fact that the constraints of $J^{i_l}_j$ for $l\geq h+1$ are not violated, i.e., 
\begin{align}
    \label{eq:future-safety}
    J^{i_l}_j(\boldsymbol{\pi}_k) + \nu^{i_l}_j\sum_{u=1}^{h-1}D_{\text{KL}}^{\text{max}}(\pi^u_k, \pi^u_{k+1}) + \nu^{i_l}_j D_{\text{KL}}^{\text{max}}(\pi^{i_h}_k, \pi^{i_h}) \leq c^{i_l}_j,
\end{align}
is analogous.
\end{remark}


\thsafematrpo*

\begin{proof}
Safety constraints are assured to be met by Remark \ref{remark:safe-updates}. It suffices to show the monotonic improvement property. Notice that at every iteration $k$ of Algorithm \ref{algorithm:theoretical-safe-matrpo}, $\pi^{i_h}_k \in \overline{\Pi}^{i_h}$. Clearly $D^{\text{max}}_{\text{KL}}(\pi^{i_h}_k, \pi^{i_h}_k) = 0 \leq \delta^{i_h}$. Moreover,
\begin{align}
    J^{i_h}_j(\boldsymbol{\pi}_k) + L^{i_h}_{j, \boldsymbol{\pi}_k}(\pi^{i_h}_k) + \nu^{i_h}_j D^{\text{max}}_{\text{KL}}(\pi^{i_h}_k, \pi^{i_h}_k) = J^{i_h}_j(\boldsymbol{\pi}_k) \leq c^{i_h}_j - \nu^{i_h}_j \sum_{l=1}^{h-1}D^{\text{max}}_{\text{KL}}(\pi^{i_l}_{k}, \pi^{i_l}_{k+1}),\nonumber
\end{align}
where the inequality is guaranteed by updates of previous agents, as described in Remark \ref{remark:safe-updates} (Inequality \ref{eq:future-safety}). By Theorem 1 from \cite{schulman2015trust}, we have
\begin{align}
        \label{eq:sad-trpo-monotonic}
        &J(\boldsymbol{\pi}_{k+1}) \geq
        J(\boldsymbol{\pi}_k) +
        \E_{\rs\sim\rho_{\boldsymbol{\pi}_k}, \rva\sim\boldsymbol{\pi}_{k+1}}\left[ A_{\boldsymbol{\pi}_k}(\rs, \rva)\right]- \nu D_{\text{KL}}^{\text{max}}(\boldsymbol{\pi}_{k}, \boldsymbol{\pi}_{k+1}),\nonumber\\
        &\text{which by Equation \ref{eq:kl-inequality} is lower-bounded by}\nonumber\\
        &\geq J(\boldsymbol{\pi}_k) +
        \E_{\rs\sim\rho_{\boldsymbol{\pi}_k}, \rva\sim\boldsymbol{\pi}_{k+1}}\left[ A_{\boldsymbol{\pi}_k}(\rs, \rva)\right] - \sum_{h=1}^{n}\nu D_{\text{KL}}^{\text{max}}(\pi^{i_{h}}_{k}, \pi^{i_{h}}_{k+1})\nonumber\\
        &\text{which by Lemma \ref{lemma:maad} equals}\nonumber\\
        &=  J(\boldsymbol{\pi}_k) +
        \sum_{h=1}^{n}\E_{\rs\sim\rho_{\boldsymbol{\pi}_k}, \rva^{i_{1:h}}\sim\boldsymbol{\pi}^{i_{1:h}}_{k+1}}\left[ A^{i_h}_{\boldsymbol{\pi}_k}(\rs, \rva^{i_{1:h-1}}, \ra^{i_h})\right] - \sum_{h=1}^{n}\nu D_{\text{KL}}^{\text{max}}(\pi^{i_{h}}_{k}, \pi^{i_{h}}_{k+1})\nonumber\\
        &= J(\boldsymbol{\pi}_{k}) + 
        \sum_{h=1}^{n}\left( L^{i_{1:h}}_{\boldsymbol{\pi}_{k}}(\boldsymbol{\pi}^{i_{1:h-1}}_{k+1},\pi^{i_{h}}_{k+1}) 
        - \nu D_{\text{KL}}^{\text{max}}(\pi^{i_{h}}_{k}, \pi^{i_{h}}_{k+1})\right),\\
        &\text{and as for every $h$, $\pi^{i_{h}}_{k+1}$ is the argmax, this is lower-bounded by}\nonumber\\
        &\geq J(\boldsymbol{\pi}_{k}) + 
        \sum_{h=1}^{n}\left( L^{i_{1:h}}_{\boldsymbol{\pi}_{k}}(\boldsymbol{\pi}^{i_{1:h-1}}_{k+1},\pi^{i_{h}}_{k}) 
        - \nu D_{\text{KL}}^{\text{max}}(\pi^{i_{h}}_{k}, \pi^{i_h}_{k})\right),\nonumber\\
        &\text{which, as follows from Definition \ref{definition:localsurrogate}, equals}\nonumber\\
        &= \mathcal{J}(\boldsymbol{\pi_{k}}) + \sum_{h=1}^{n}0 =
        J(\boldsymbol{\pi}_{k}), \ \text{which finishes the proof.}\nonumber
    \end{align}
\end{proof}

\clearpage

\section{Auxiliary Results for Implementation of MACPO}
\label{appendix:auxiliary-results-implementation}

\begin{theorem}
The solution to the following problem

\begin{align}
\vp*=\min_{x}  \vg^{T} \vx \nonumber \\
\text { s.t. } \vb^{T} \vx+c \leq 0 \nonumber\\
 \vx^{T} \mH \vx \leq \delta, \nonumber
\end{align}

where $\vg, \vb, \vx \in \mathbb{R}^{n}, c, \delta \in \mathbb{R}, \delta>0, \mH \in \mathbb{S}^{n}$, and $\mH \succ 0 .$ When there is at least one strictly feasible point, the optimal point $\vx^{*}$ satisfies:
\begin{equation}
\vx^{*}=-\frac{1}{\lambda_{*}} \mH^{-1}\left(\vg^{T}+ v_{*} \vb\right)\nonumber
\end{equation}

where $\lambda_{*}$ and $v_{*}$ are defined by

\begin{align}
v_{*} &=\left(\frac{\lambda_{*} c- r}{s}\right)_{+} \nonumber\\
\lambda_{*} &=\arg \max _{\lambda \geq 0}\left\{\begin{array}{ll}
f_{a}(\lambda) \triangleq \frac{1}{2 \lambda}\left(\frac{r^2}{s}-q\right)+\frac{\lambda}{2}\left(\frac{c^2}{s}-\delta\right)-\frac{rc}{s} & \text { if } \lambda c-r>0 \nonumber\\
f_{b}(\lambda) \triangleq-\frac{1}{2}\left(\frac{q}{\lambda}+\lambda \delta\right) & \text { otherwise }
\end{array}\right.\nonumber
\end{align}

where $\vq=\vg^{T} \mH^{-1} \vg$, $\vr=\vg^{T} \mH^{-1} \vb$, and $\vs=\vb^{T} \mH^{-1} \vb$.

Furthermore, let $\Lambda_{a} \triangleq\{\lambda \mid \lambda c-r>0, \lambda \geq 0\}$, and $\Lambda_{b} \triangleq \{\lambda \mid \lambda c - r \leq 0, \lambda \geq 0\} .$ The value of $\lambda_{*}$ satisfies

\begin{align}
\lambda_{*} \in\left\{\lambda^{a}_{*} \triangleq \operatorname{Proj}\left(\sqrt{\frac{q-r^{2} / s}{\delta-c^{2} / s}}, \Lambda_a\right), \lambda^{b}_{*} \triangleq \operatorname{Proj}\left(\sqrt{\frac{q}{\delta}}, \Lambda_{b}\right)\right\}
\end{align}

where $\lambda_{*}=\lambda^{a}_{*}$ if $f_{a}\left(\lambda^{a}_{*}\right)>f_{b}\left(\lambda^{b}_{*}\right)$ and $\lambda_{*}=\lambda_{*}$ otherwise, and $\operatorname{Proj}(a, \mS)$ is the projection of a point $x$ on to a set $\mS .$ Note the projection of a point $x \in \mathbb{R}$ onto a convex segment of $\mathbb{R},[a, b]$, has value $\operatorname{Proj}(x,[a, b])=\max (a, \min (b, x))$.

\end{theorem}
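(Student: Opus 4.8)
The problem is a convex program: the objective $\vg^{T}\vx$ is linear, the constraint $\vb^{T}\vx + c \leq 0$ is affine, and since $\mH \succ 0$ the set $\{\vx : \vx^{T}\mH\vx \leq \delta\}$ is a bounded ellipsoid, hence convex. Because a strictly feasible point is assumed to exist, Slater's condition holds, strong duality is in force, and the KKT conditions are both necessary and sufficient for optimality. The plan is therefore to solve the program through its Lagrangian dual, mirroring the single-agent derivation of \citet{achiam2017constrained}.

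First I would form the Lagrangian
\begin{align}
L(\vx, \lambda, v) = \vg^{T}\vx + v\big(\vb^{T}\vx + c\big) + \frac{\lambda}{2}\big(\vx^{T}\mH\vx - \delta\big), \nonumber
\end{align}
with $\lambda \geq 0$ and $v \geq 0$. For any informative dual point we must have $\lambda > 0$ (otherwise $L$ is affine in $\vx$ and its infimum is $-\infty$ unless $\vg + v\vb = 0$), so $L$ is then strictly convex in $\vx$ and the inner minimisation is solved by $\nabla_\vx L = \vg + v\vb + \lambda\mH\vx = 0$, giving the claimed closed form $\vx^{*} = -\tfrac{1}{\lambda}\mH^{-1}(\vg + v\vb)$. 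Substituting this back and abbreviating the scalars $q = \vg^{T}\mH^{-1}\vg$, $r = \vg^{T}\mH^{-1}\vb$, $s = \vb^{T}\mH^{-1}\vb$, the dual function collapses (after routine algebra) to
\begin{align}
D(\lambda, v) = -\frac{1}{2\lambda}\big(q + 2vr + v^{2}s\big) + vc - \frac{\lambda\delta}{2}. \nonumber
\end{align}

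Next I would maximise $D$ over $v \geq 0$ for fixed $\lambda$. Since $D$ is concave quadratic in $v$, the stationarity condition $\partial_v D = -\tfrac{1}{\lambda}(r + vs) + c = 0$ yields $v = (\lambda c - r)/s$, and enforcing $v \geq 0$ gives $v_{*} = \big((\lambda c - r)/s\big)_{+}$. This splits the analysis into two regimes: when $\lambda c - r > 0$ the positive root is active and back-substitution (completing the square in $v$) reduces $D$ to $f_a(\lambda)$, while when $\lambda c - r \leq 0$ we have $v_{*} = 0$ and $D$ reduces to $f_b(\lambda)$. Each branch has the one-dimensional shape $A/\lambda + B\lambda + \mathrm{const}$ with $A = \tfrac12(r^{2}/s - q) \leq 0$ and (for $f_a$) $B = \tfrac12(c^{2}/s - \delta) \leq 0$, whose unconstrained maximiser over $\lambda > 0$ is $\sqrt{A/B}$; this produces $\sqrt{(q - r^{2}/s)/(\delta - c^{2}/s)}$ for $f_a$ and $\sqrt{q/\delta}$ for $f_b$. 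Because each candidate must additionally lie in its defining region, I would clip it with $\mathrm{Proj}(\cdot, \Lambda_a)$ or $\mathrm{Proj}(\cdot, \Lambda_b)$, and finally take $\lambda_{*}$ to be whichever branch attains the larger dual value, comparing $f_a(\lambda_{*}^{a})$ with $f_b(\lambda_{*}^{b})$.

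The routine parts are the two substitutions and the single-variable calculus. The genuinely delicate step will be the case analysis around the sign of $\lambda c - r$ together with the projections onto $\Lambda_a$ and $\Lambda_b$: I must check that the piecewise dual is continuous across the boundary $\lambda c = r$ (so the max over branches is well defined), and that the square-root maximisers are real. The latter follows because $\mH \succ 0$ gives the Cauchy--Schwarz bound $r^{2} \leq qs$ (so $q - r^{2}/s \geq 0$), while strict feasibility forces $c^{2}/s \leq \delta$, since the minimum of $\vb^{T}\vx$ over the ellipsoid is $-\sqrt{\delta s}$ (so $\delta - c^{2}/s \geq 0$). Closing the loop, I would verify that the recovered $(\lambda_{*}, v_{*})$ paired with $\vx^{*}$ satisfy complementary slackness, which by strong duality certifies $\vx^{*}$ as the global optimum.
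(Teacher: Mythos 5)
Your proposal is correct and follows essentially the same route as the paper's proof, which simply defers to Appendix~10.2 of \citet{achiam2017constrained}: form the Lagrangian with multipliers $(\lambda, v)$, solve the inner minimisation in closed form, reduce the dual to the two scalar branches $f_a$ and $f_b$ via the sign of $\lambda c - r$, and maximise each one-dimensional function with a projection onto $\Lambda_a$ or $\Lambda_b$. The only loose end is your claim that strict feasibility forces $\delta - c^{2}/s \geq 0$: this holds when $c \geq 0$, but for $c < 0$ one can have $c^{2}/s > \delta$ while the problem remains strictly feasible (the half-space then contains the whole ellipsoid, the linear constraint is inactive, and the optimum comes from the $f_b$ branch), an edge case the cited appendix handles separately.
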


\begin{proof}
See \citep[Appendix 10.2]{achiam2017constrained}.
\end{proof}

\clearpage

\section{MACPO}
\label{appendix:mactrpo}
\begin{algorithm}[th!]
    \caption{MACPO}
    \label{algo:safe-MATRPO}
    \begin{algorithmic}[1]
    
    \STATE \textbf{Input:} Stepsize $\alpha$, batch size $B$, number of: agents $n$, episodes $K$, steps per episode $T$, possible steps in line search $L$.\\
    \STATE \textbf{Initialize:} Actor networks $\{\theta^{i}_{0}, \ \forall i\in \mathcal{N}\}$, Global V-value network $\{\phi_{0}\}$, Individual $V^i$-cost networks $\{\phi_{j,0}^i\}_{i=1:n, j=1:m}$, Replay buffer $\mathcal{B}$\\
    
    \FOR{$k = 0,1,\dots,K-1$}
        \STATE Collect a set of trajectories by running the joint policy $\boldsymbol{\pi}_{\vtheta_{k}} =(\pi^{1}_{\theta^{1}_{k}}, \dots,  \pi^{n}_{\theta^{n}_{k}})$.
        \STATE Push transitions $\{(o^i_{t},a^i_{t},o^i_{t+1},r_t), \forall i\in \mathcal{N},t\in T\}$ into $\mathcal{B}$.
        \STATE Sample a random minibatch of $M$ transitions from $\mathcal{B}$.
        \STATE Compute advantage function $\hat{A}(\rs, \rva)$ based on global V-value network with GAE.
        \STATE Compute cost-advantage functions $\hat{A}^i_j(\rs, \ra^i)$ based on individual $V^i$-cost critics with GAE.
        \STATE Draw a random permutation of agents $i_{1:n}$.
        \STATE Set $M^{i_{1}}(\rs, \rva) = \hat{A}(\rs, \rva)$.
        \FOR{agent $i_{h} = i_{1}, \dots, i_{n}$}
            \STATE Estimate the gradient of the agent's maximisation objective\\
            \begin{center}
                $\hat{\vg}^{i_{h}}_{k} =  \frac{1}{B}\sum\limits^{B}_{b=1}\sum\limits^{T}_{t=1}\nabla_{\theta^{i_{h}}_{k} }\log\pi^{i_{h}}_{\theta^{i_{h}}_k}\left( a^{i_{h}}_{t}\mid o_{t}^{i_{h}} \right)M^{i_{1:h}}(s_{t}, \va_{t})$.
            \end{center}
            \FOR{$j=1, \dots, m^{i_h}$}
                \STATE Estimate the gradient of the agent's $j^{\text{th}}$ cost
                \begin{center}
                    $\hat{\vb}^{i_h}_j = \frac{1}{B}\sum\limits_{b=1}^{B}\sum\limits_{t=1}^{T}\nabla_{\theta^{i_h}_k}\log\pi^{i_h}_{\theta^{i_h}_k}\left(a^{i_h}_t|o^{i_h}_t\right)\hat{A}^{i_h}_j(s_t, a^{i_h}_t)$.
                \end{center}
            \ENDFOR
            
            \STATE Set $\hat{\mB}^{i_h} = \left[ \hat{\vb}^{i_h}_1, \dots, \hat{\vb}^{i_h}_m \right]$.
            
            \STATE Compute $\hat{\mH}^{i_h}_k$, the Hessian of the average KL-divergence\\
            \begin{center}
                $\frac{1}{BT}\sum\limits_{b=1}^{B}\sum\limits_{t=1}^{T}D_{\text{KL}}\left(\pi^{i_h}_{\theta^{i_h}_k}(\cdot|o^{i_h}_t), \pi^{i_h}_{\theta^{i_h}}(\cdot|o^{i_h}_t)\right)$.
            \end{center}
            
            \STATE Solve the dual (\ref{eq:dual-parameterized-policy-equation}) for $\lambda_*^{i_h}$, $\textbf{v}^{i_h}_*$.
            
            Use the conjugate gradient algorithm to compute the update direction
            \begin{center}
                $\vx^{i_h}_k = (\hat{\mH}^{i_h}_k)^{-1} \left(\vg^{i_h}_k - \hat{\mB}^{i_h}\textbf{v}^{i_h}_* \right)$,
            \end{center}
           
            \STATE Update agent $i_h$'s policy by\\
            \begin{center}
                $ \theta^{i_h}_{k+1} = \theta^{i_h}_k  + \frac{\alpha^j} {\lambda^{i_h}_*} \hat{\vx}^{i_h}_k$,
            \end{center}
            where $j\in\{0, 1, \dots, L\}$ is the smallest such $j$ which improves the sample loss, and satisfies the sample constraints, found by the backtracking line search.
            \IF{the approximate is not feasible}
            \STATE Use equation  ~(\ref{eq:no-proof-recover-policy-from-unfeasible-point}) to recover policy $\theta^{i_h}_{k+1}$ from unfeasible points.
            \ENDIF 
            \STATE Compute $M^{i_{1:h+1}}(\rs, \rva) = \frac{\pi^{i_{h}}_{\theta^{i_{h}}_{k+1}}\left(\ra^{i_{h}} \mid \ro^{i_{h}}\right)}{\pi^{i_{h}}_{\theta^{i_{h}}_{k}}\left(\ra^{i_{h}} \mid \ro^{i_{h}}\right)} M^{i_{1:h}}(\rs_{t}, \rva_{t})$.  // {Unless $h=n$.}
        \ENDFOR
        
        \STATE Update V-value network by following formula:
        \STATE $\phi_{k+1}=\arg \min _{\phi} \frac{1}{\mathcal{N}}\frac{1}{T} \sum\limits^{\mathcal{N}}_{n =1 } \sum\limits_{t=0}^{T}\left(V_{\phi}(s_{t}) - \hat{R_{t}}\right)^{2}$
    \ENDFOR
    \end{algorithmic}
\end{algorithm}


\clearpage

\section{MAPPO-Lagrangian}
\label{appendix:mappo-lagrangian}
\begin{algorithm}[H]
    \caption{MAPPO-Lagrangian}
    \label{MAPPO-Lagrangian}
    \begin{algorithmic}[1]
    
    \STATE \textbf{Input:} Stepsizes $\alpha_\theta, \alpha_\lambda$, batch size $B$, number of: agents $n$, episodes $K$, steps per episode $T$, discount factor $\gamma$.\\
    \STATE \textbf{Initialize:} Actor networks $\{\theta^{i}_{0}, \ \forall i\in \mathcal{N}\}$, Global V-value network $\{\phi_{0}\}$,\\ V-cost networks $\{ \phi^{i}_{j, 0}\}^{i\in\mathcal{N}}_{1\leq j\leq m^i}$, Replay buffer $\mathcal{B}$.\\
    
    \FOR{$k = 0,1,\dots,K-1$}
        \STATE Collect a set of trajectories by running the joint policy $\boldsymbol{\pi}_{\vtheta_{k}} =(\pi^{1}_{\theta^{1}_{k}}, \dots,  \pi^{n}_{\theta^{n}_{k}})$.
        \STATE Push transitions $\{(o^i_{t},a^i_{t},o^i_{t+1},r_t), \forall i\in \mathcal{N},t\in T\}$ into $\mathcal{B}$.
        \STATE Sample a random minibatch of $B$ transitions from $\mathcal{B}$.
        \STATE Compute advantage function $\hat{A}(\rs, \rva)$ based on global V-value network with GAE.
        \STATE Compute cost advantage functions $\hat{A}^i_j(\rs, \ra^i)$ for all agents and costs, \\
        based on V-cost networks with GAE.
        \STATE Draw a random permutation of agents $i_{1:n}$.
        \STATE Set $M^{i_{1}}(\rs, \rva) = \hat{A}(\rs, \rva)$.
        \FOR{agent $i_{h} = i_{1}, \dots, i_{n}$}
            \STATE Initialise a policy parameter $\theta^{i_h} = \theta^{i_h}_k$, \\
            and Lagrangian multipliers $\lambda^{i_h}_j=0$, $\forall j= 1, \dots, m^{i_h}$.
            \STATE Make the Lagrangian modification step of objective construction\\
            \begin{center}
                $M^{i_h, (\lambda)}(\rs_t, \rva_t) = M^{i_h}(\rs_t, \rva_t) - \sum\limits_{j=1}^{n}\lambda^{i_h}_j \hat{A}^{i_h}_j(\rs_t, \ra^{i_h}_t)$.
            \end{center}
            \FOR{$e=1, \dots, e_{\text{PPO}}$}
                \STATE Differentiate the Lagrangian PPO-Clip objective\\
                $\Delta_{\theta^{i_h}} = \nabla_{\theta^{i_h}} \frac{1}{B} \sum\limits^{B}_{b =1} \sum\limits_{t=0}^{T} \min \left( \frac{\pi^{i_h}_{\theta^{i_h}}\left(a^{i_h}_t \mid o^{i_h}_t\right)}{\pi^{i_h}_{\theta^{i_h}_k}\left(a^{i_h}_t \mid o^{i_h}_t\right)}  M^{i_h, (\lambda)}(s_t, \va_t), \ 
                \text{clip}\left( \frac{\pi^{i_h}_{\theta^{i_h}}\left(a^{i_h}_t \mid o^{i_h}_t\right)}{\pi^{i_h}_{\theta^{i_h}_k}\left(a^{i_h}_t \mid o^{i_h}_t\right)}, 1\pm\epsilon\right) M^{i_h, (\lambda)}(s_t, \va_t)\right)$.
                \STATE Update temprorarily the actor paramaters
                \begin{center}
                    $\theta^{i_h} \gets \theta^{i_h} + \alpha_\theta \Delta_{\theta^{i_h}}$.
                \end{center}
                \FOR{$j=1, \dots, m^{i_h}$}
                \STATE Approximate the constraint violation\\
                \begin{center}
                    $d^{i_h}_j = \frac{1}{BT}\sum\limits_{b=1}^{B}\sum\limits_{t=1}^{T} \hat{V}^{i_h}_{j}(s_t) - c^{i_h}_j$.
                \end{center}
                \STATE Differentiate the constraint\\
                \begin{center}
                    $\Delta{\lambda^{i_h}_j} = 
                     \frac{-1}{B}\sum\limits_{b=1}^{B}\left( d^{i_h}_j(1-\gamma)
                    + \sum\limits_{t=0}^{T} \frac{\pi^{i_h}_{\theta^{i_h}}(a^{i_h}_t|o^{i_h}_t)}{\pi^{i_h}_{\theta^{i_h}_k}(a^{i_h}_t|o^{i_h}_t)}\hat{A}^{i_h}_{j}(s_t, a^{i_h}_t)\right)$.
                \end{center}
                \ENDFOR
                \FOR{$j=1, \dots, m^{i_h}$}
                    \STATE Update temporarily the Lagrangian multiplier\\
                    \begin{center}
                        $\lambda^{i_h}_j \gets
                         \text{ReLU}\left(\lambda^{i_h}_j - \alpha_\lambda \Delta\lambda^{i_h}_j\right)$.
                     \end{center}
                \ENDFOR
            \ENDFOR
            \STATE Update the actor parameter $\theta^{i_h}_{k+1} = \theta^{i_h}$.
            \STATE Compute $M^{i_{h+1}}(\rs, \rva) = \frac{\pi^{i_{h}}_{\theta^{i_{h}}_{k+1}}\left(\ra^{i_{h}} \mid \ro^{i_{h}}\right)}{\pi^{i_{h}}_{\theta^{i_{h}}_{k}}\left(\ra^{i_{h}} \mid \ro^{i_{h}}\right)} M^{i_{h}}(\rs, \rva)$.  // {Unless $h=n$.}  
        \ENDFOR
        
        \STATE Update V-value network (and V-cost networks analogously) by following formula:
        \STATE $\phi_{k+1}=\arg \min _{\phi} \frac{1}{BT} \sum\limits^{B}_{b =1 } \sum\limits_{t=0}^{T}\left(V_{\phi}(s_{t}) - \hat{R_{t}}\right)^{2}$
    \ENDFOR
    \end{algorithmic}
\end{algorithm}

\section{Safe Multi-Agent MuJoCo}
\label{appendix:Introduction-Safe-MAMujoco}
Safe Multi-Agent MuJoCo (SMAMuJoCo) is an extension of Multi-Agent MuJoCo \citep{peng2020facmac}. In particular, the background environment, agents, physics simulator, and the reward function are preserved. However, as oppose to its predecessor, SMAMuJoCo environments come with obstacles, like walls or bombs. Furthermore, with the increasing risk of an agent stumbling upon an obstacle, the environment emits cost \citep{1606.01540}. According to the scheme from \cite{zanger2021safe}, we characterise the cost functions for each task below.


\begin{figure*}[htbp!]
 \centering
\subcaptionbox{}
{
\includegraphics[width=0.44\linewidth]{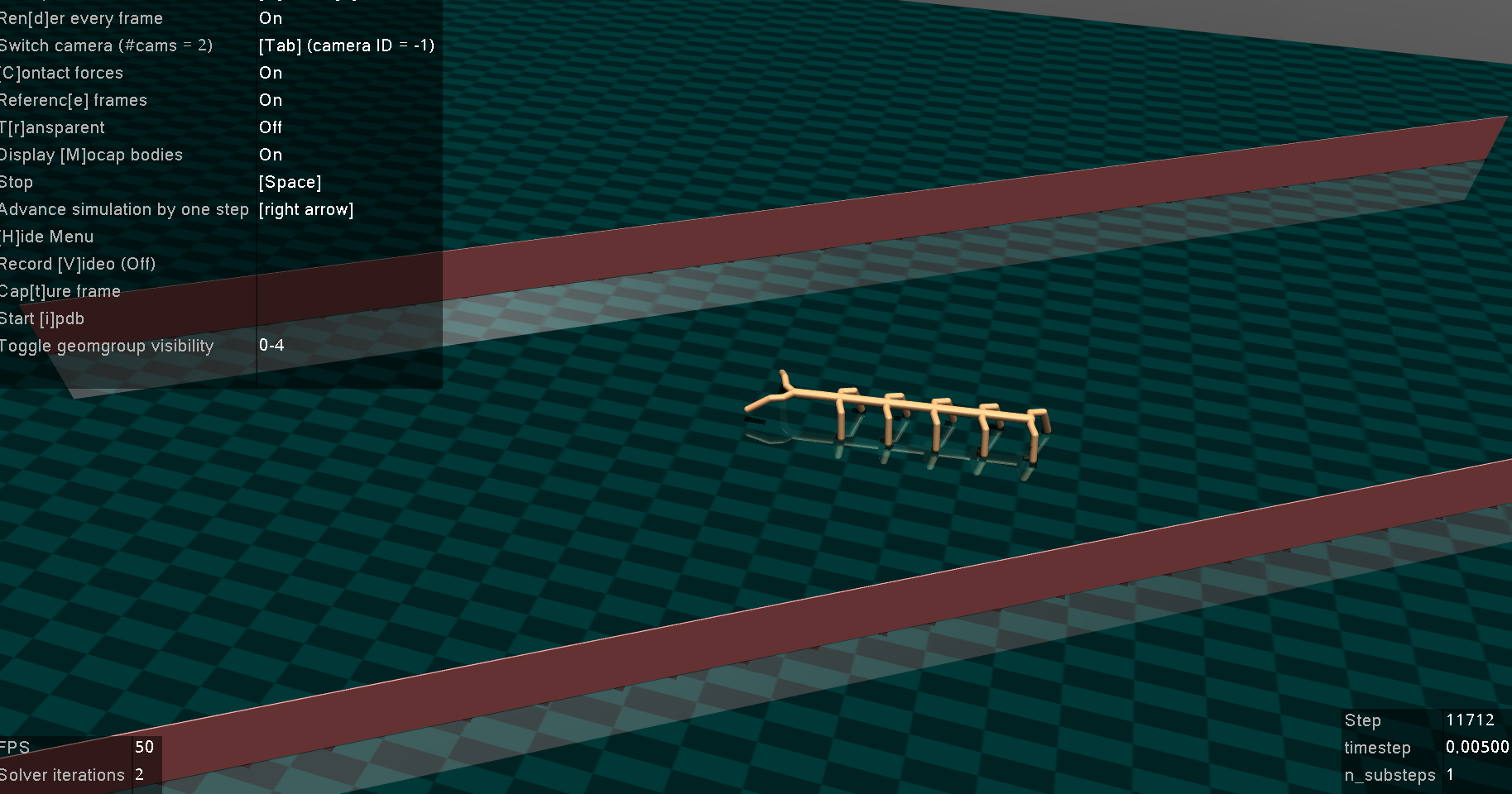}
}\quad \quad \quad
\subcaptionbox{}
{
\hspace{-23pt}
\includegraphics[width=0.44\linewidth]{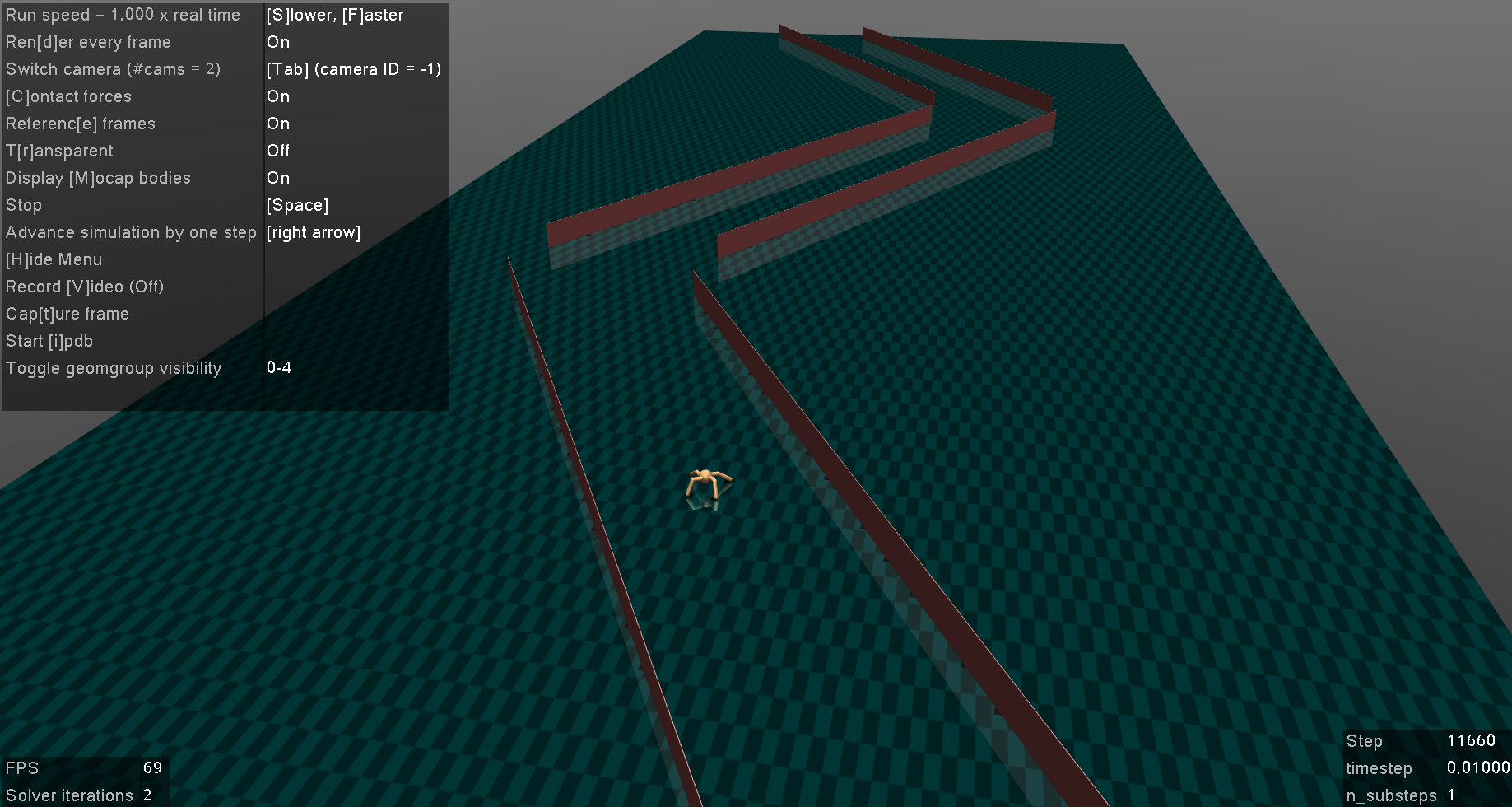}
}\quad \quad \quad
\subcaptionbox{}
{
\includegraphics[width=0.44\linewidth]{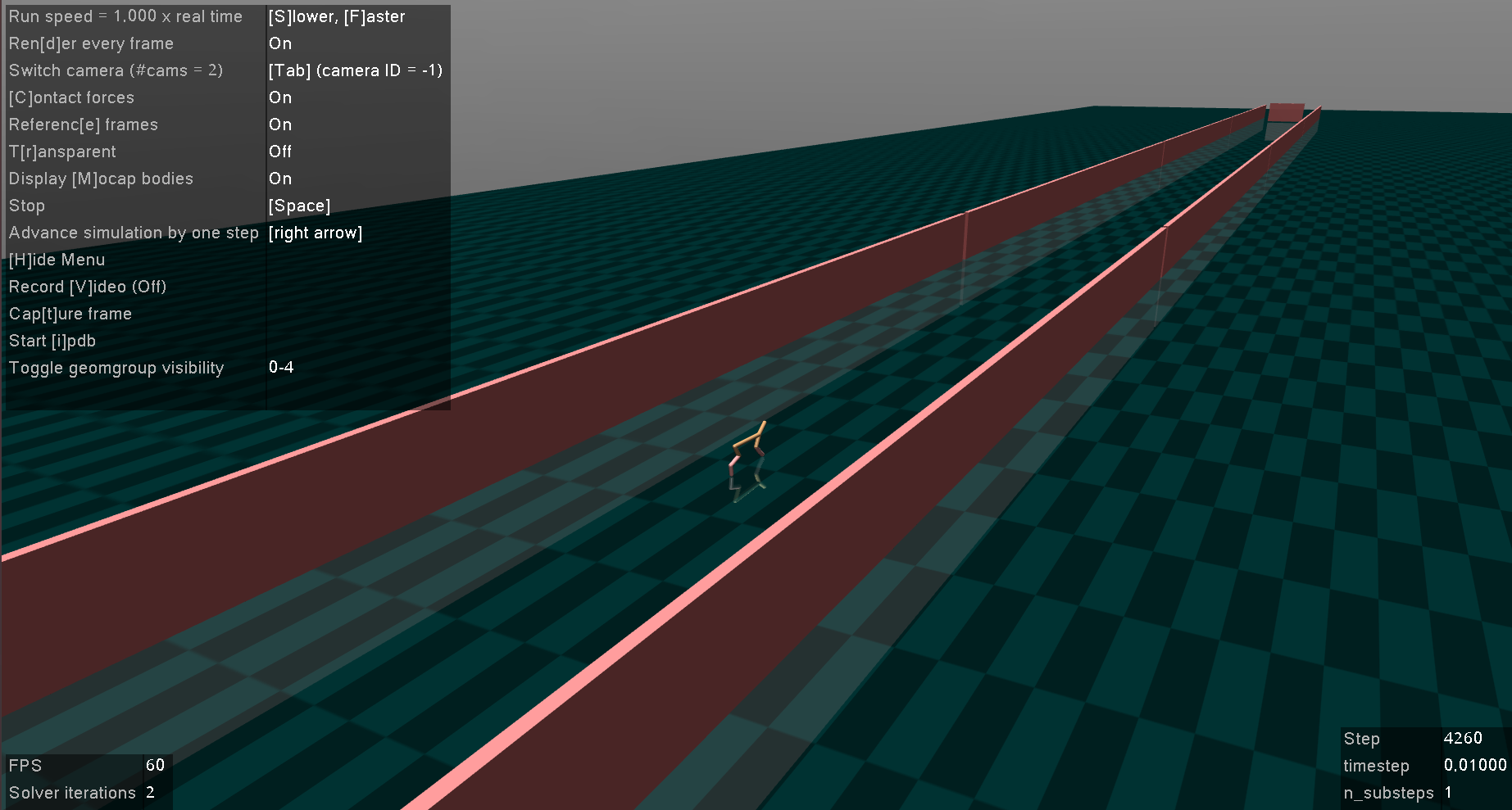}
}
\subcaptionbox{}
{\hspace{6pt}
\includegraphics[width=0.45\linewidth]{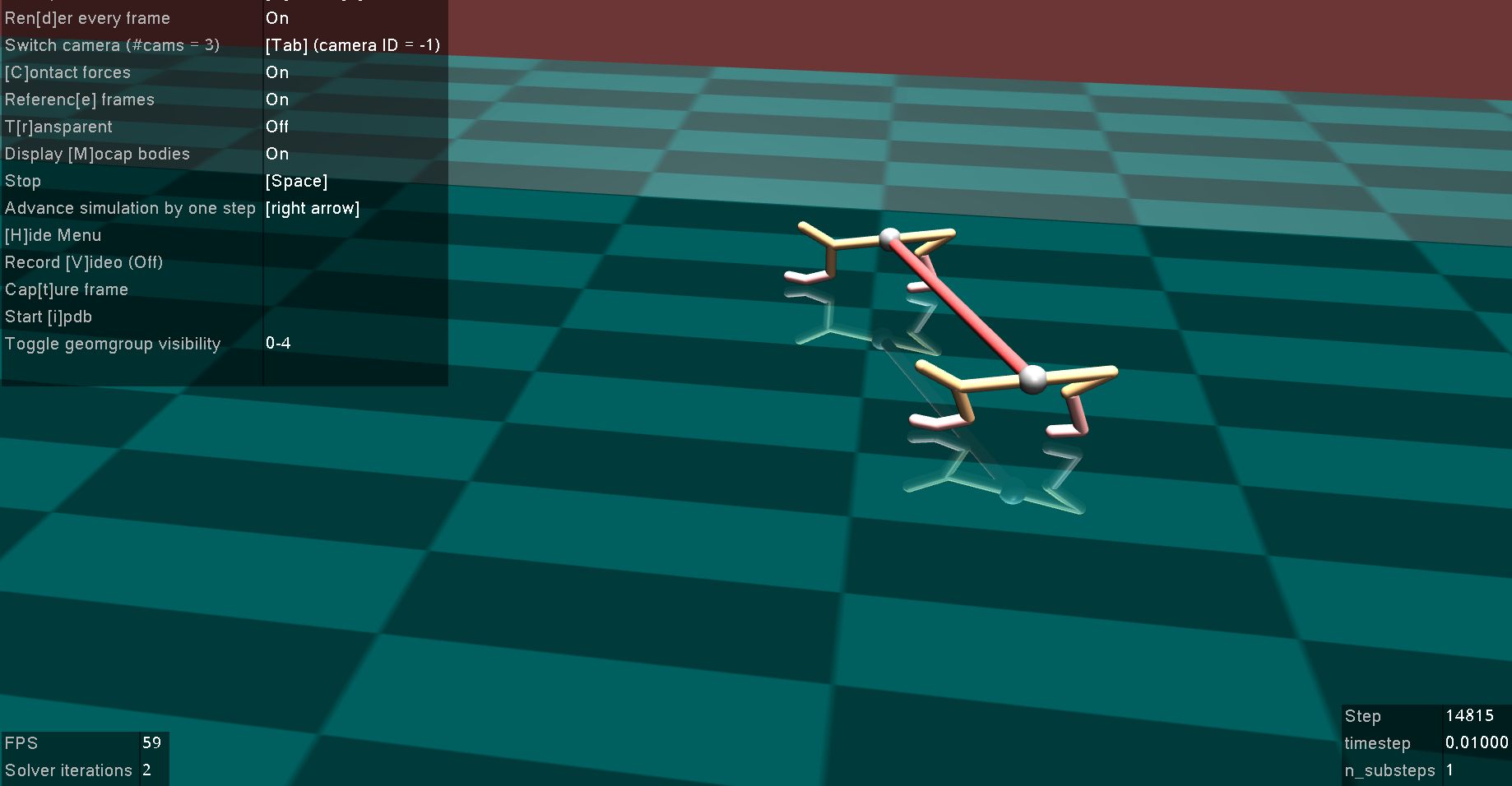}
}
 	\caption{\normalsize Example tasks in SMAMuJoCo Environment. (a): Safe 2x3-ManyAgent Ant with a corridor, (b): Safe 4x2-Ant with three corridors, (c): Safe 2x3-HalfCheetah with a moving obstacle, (d): Safe p1p-CoupleHalfCheetah with a moving obstacle. 
 	} 
 	\label{appendix-fig:Safety-mujoco-all-Environment}
 \end{figure*} 

\subsubsection*{ManyAgent Ant \& Ant}
The width of the corridor set by two walls is $9m$(ManyAgent Ant), The width of the corridor set by three  folding line walls with an angle of 30 degrees is $10 m$(Ant). The environment emits the cost of $1$ for an agent, if the distance between the robot and the wall is less than $1.8 m$, or when the robot topples over (see Figure \ref{appendix-fig:Safety-mujoco-all-Environment} (a) for ManyAgent-Ant task, (b) for Ant task). This can be described as
\begin{align}
\rc_t= \begin{cases}0, & \text { for } \quad 0.2 \leq \boldsymbol{z}_{\text {torso }, t+1} \leq 1.0  \text { and } \left\|\boldsymbol{x}_{\text {torso }, t+1}-\boldsymbol{x}_{\text {wall }}\right\|_{2} \geq 1.8 \nonumber \\ 1, & \text { otherwise } \nonumber.\end{cases}
\end{align}
 where $\boldsymbol{z}_{\text {torso }, t+1}$ is the robot's torso's $z$-coordinate, and $\boldsymbol{x}_{\text {torso }, t+1}$ is the robot's torso's $x$-coordinate, at time $t+1$; $\boldsymbol{x}_{\text {wall }}$ is the $x$-coordinate of the wall.

\subsubsection*{HalfCheetah \& Couple HalfCheetah}
In these tasks, the agents move inside a corridor (which constraints their movement, but does not induce costs). Together with them, there are bombs moving inside the corridor. If an agent finds itself too close to a bomb, the distance between an agent and a bomb is less than $9m$, a cost of $1$ will be emitted (see Figure \ref{appendix-fig:Safety-mujoco-all-Environment} (c) for HalfCheetah task, (d) for CoupleHalfCheetah task).


\begin{align}
\rc_t= \begin{cases}0, & \text { for } \quad  \left\|\boldsymbol{y}_{\text {torso }, t+1}-\boldsymbol{y}_{\text {obstacle}}\right\|_{2} \geq 9 \nonumber \\ 1, & \text { otherwise } \nonumber.\end{cases}
\end{align}
where $\boldsymbol{y}_{\text {torso }, t+1}$ is the $y$-coordinate of the robot's torso,  and $\boldsymbol{y}_{\text {obstacle }}$ is the $y$-coordinate of the moving obstacle.

\clearpage

\section{Safe Multi-Agent Robosuite}
\label{appendix:safe-multi-agent-robosuite}

Safe Multi-Agent Robosuite (SMARobosuite) is developed for MARL community, which we hope it can promote the progress of safe MARL research, and it's on the basis of Robosuite \cite{zhu2020robosuite} which is a popular robotic arm benchmark for single-agent reinforcement learning. SMARobosuite is a fully cooperative, continuous, and decentralised benchmark considering constraints of robot safety, where we use Franka robots to achieve each task, each agent can observe  partial environmental information (such as the velocity and position), SMARobosuite can be easily used for modular robots, make robots have good robustness and scalability. When communication bandwidth is limited, or some joints of robotic arms are broken which causes malfunctional communication, SMARobosuite can still work well. The reward setting is the same as Robosuite \cite{zhu2020robosuite}. The following is the 
cost design for each task.

\begin{figure*}[htbp!]
 \centering
\subcaptionbox{}
{
\includegraphics[width=0.302\linewidth]{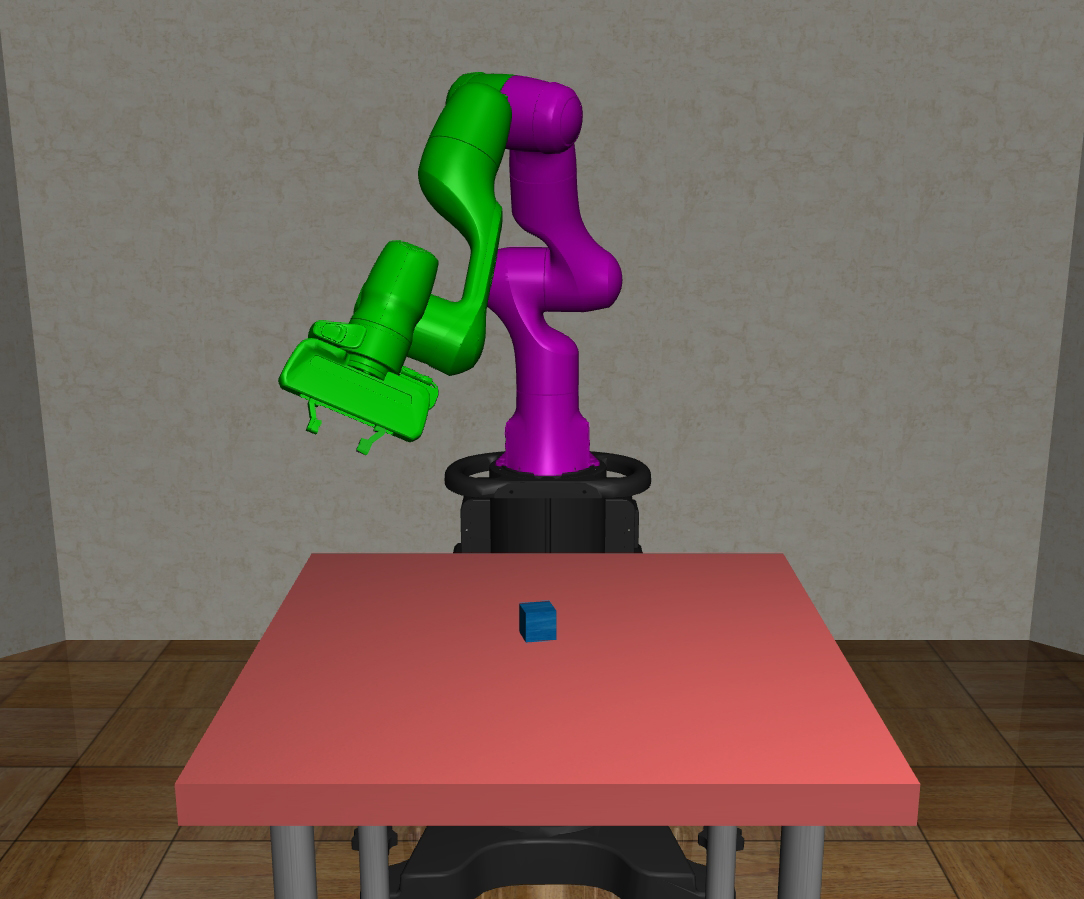}
}\quad \quad \quad
\subcaptionbox{}
{
\hspace{-27pt}
\includegraphics[width=0.273\linewidth]{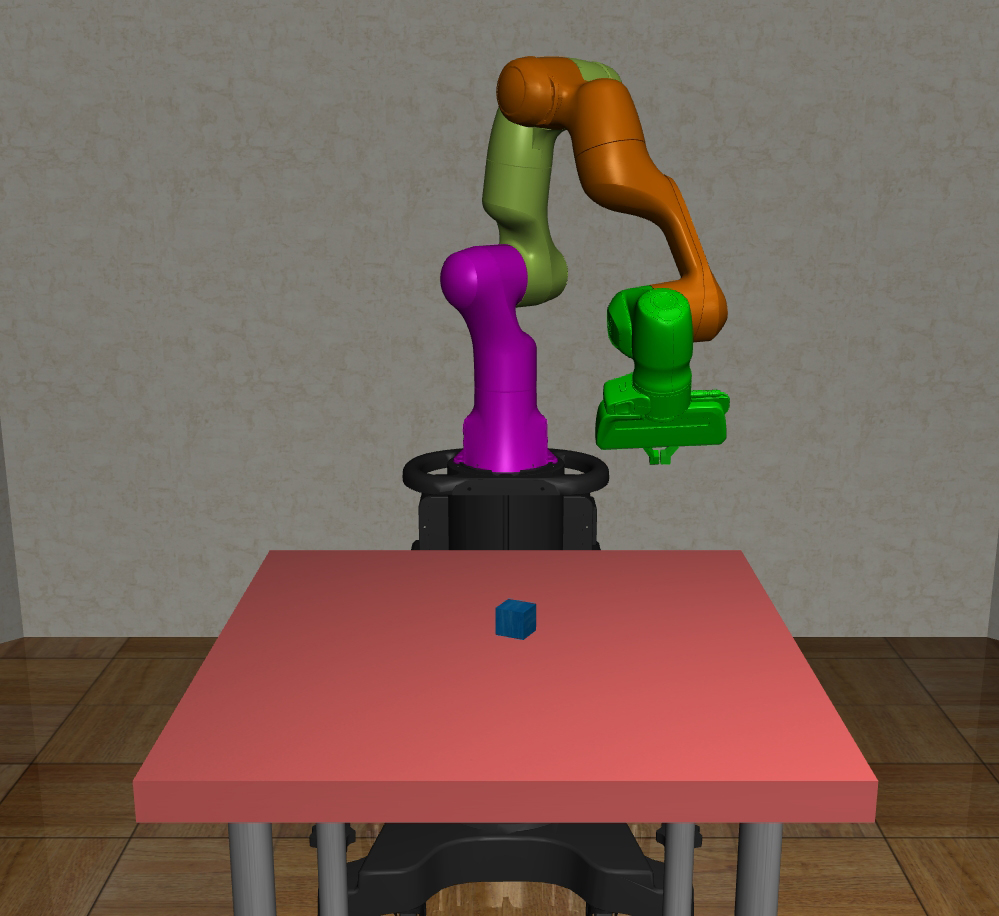}
}\quad \quad \quad
\subcaptionbox{}
{
\hspace{-27pt}
\includegraphics[width=0.302\linewidth]{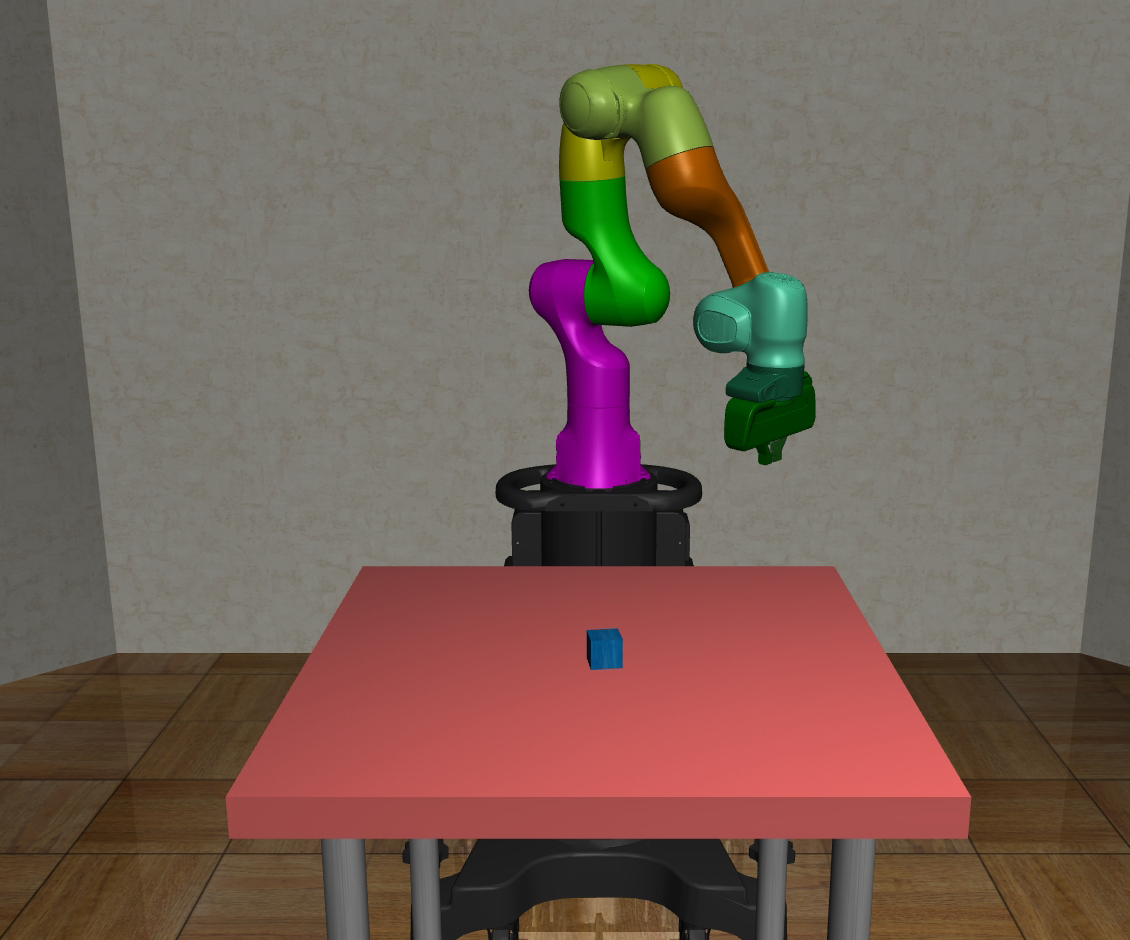}
}

\subcaptionbox{}
{
\includegraphics[width=0.278\linewidth]{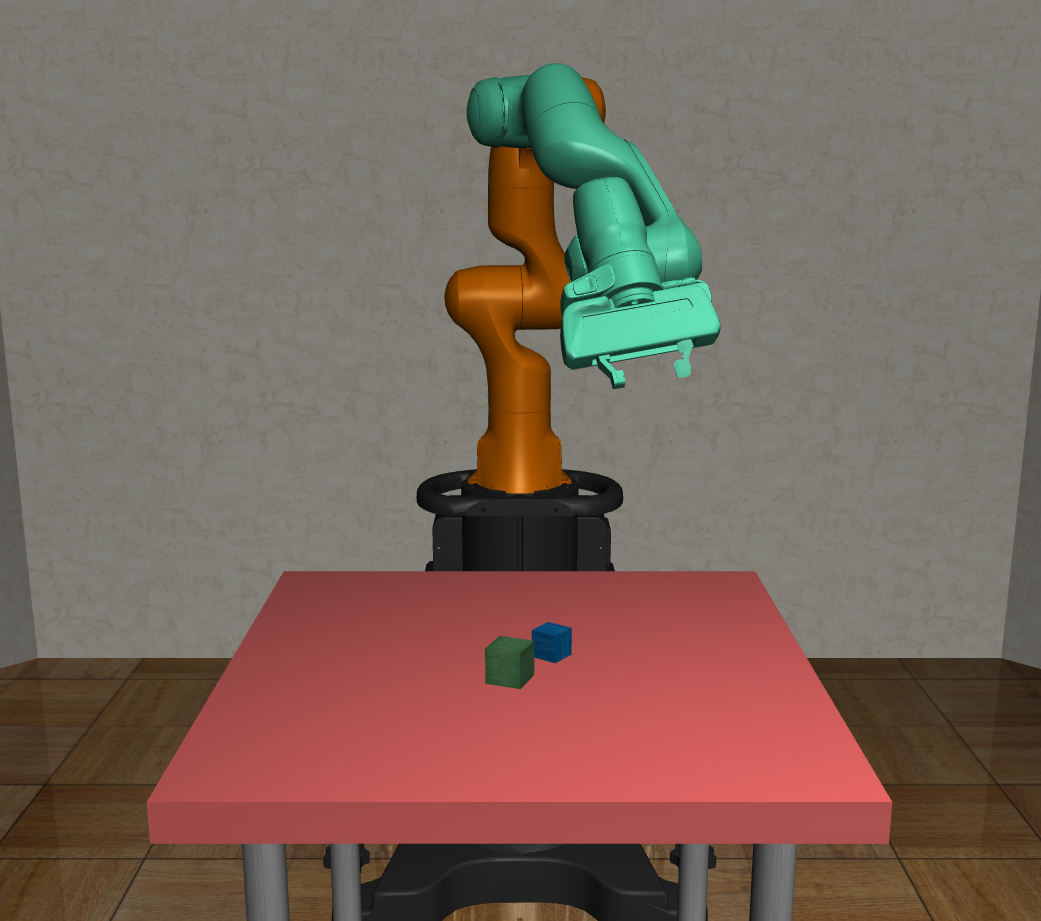}
}\quad \quad \quad
\subcaptionbox{}
{
\hspace{-27pt}
\includegraphics[width=0.31\linewidth]{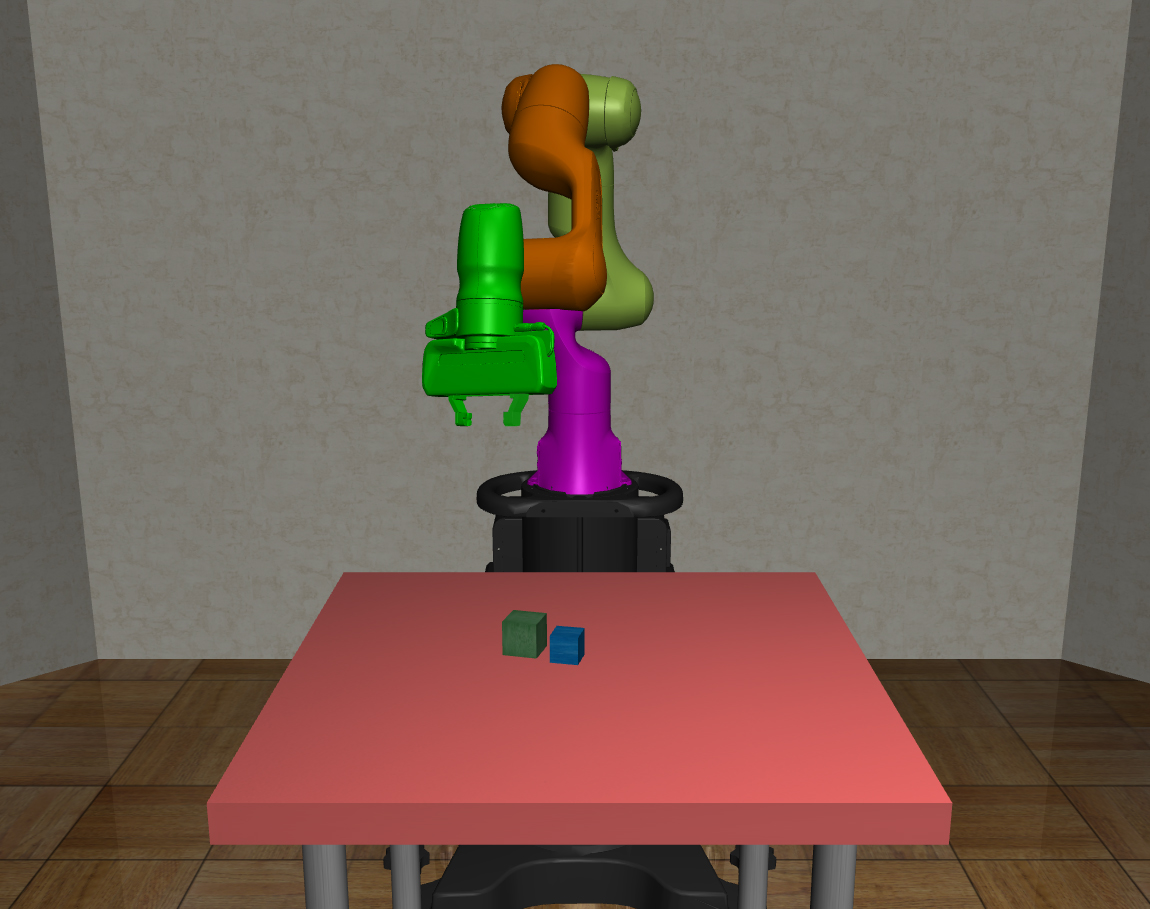}
}\quad \quad \quad
\subcaptionbox{}
{
\hspace{-27pt}
\includegraphics[width=0.294\linewidth]{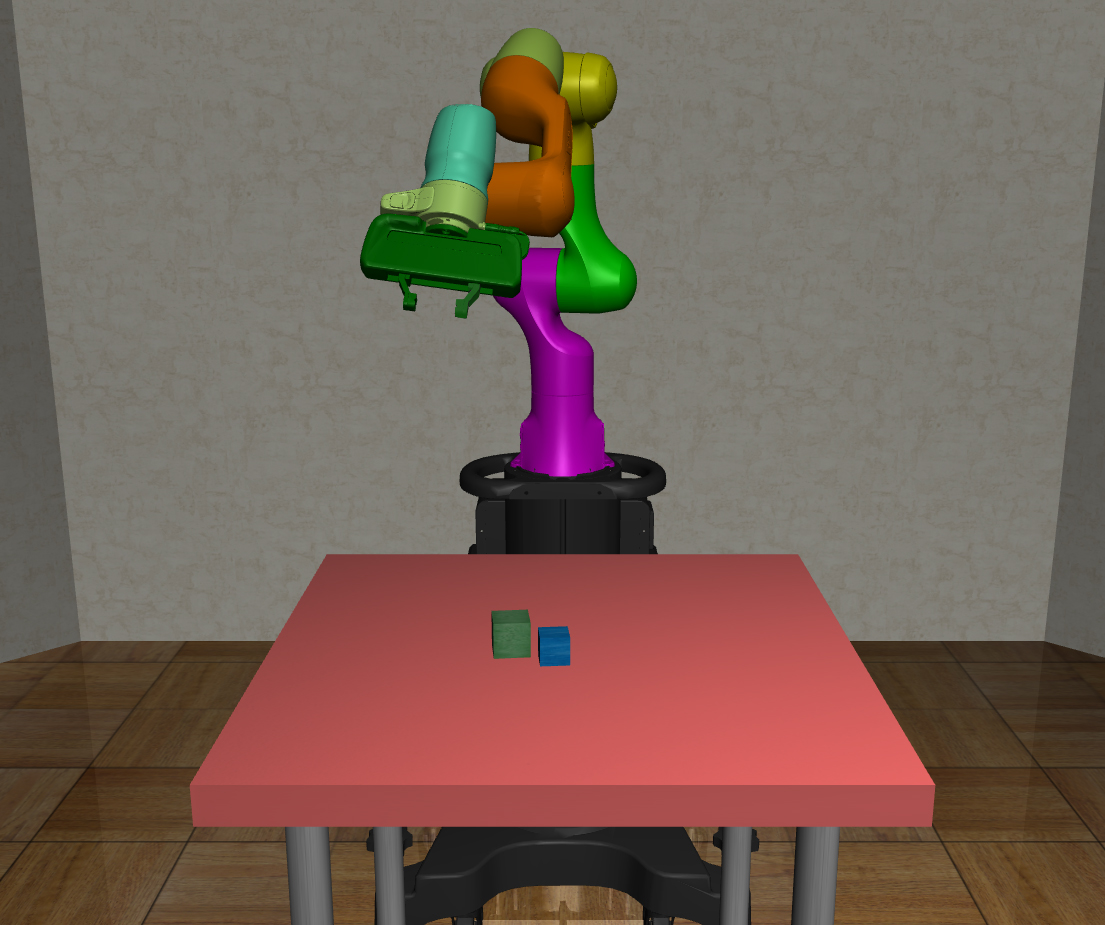}
}
\subcaptionbox{}
{
\includegraphics[width=0.387\linewidth]{Figures/panda-tasks/example-panda/robosuite-twoarmpeginhole-14-color-one-obstacle.jpg}
}
\subcaptionbox{}
{
\includegraphics[width=0.376\linewidth]{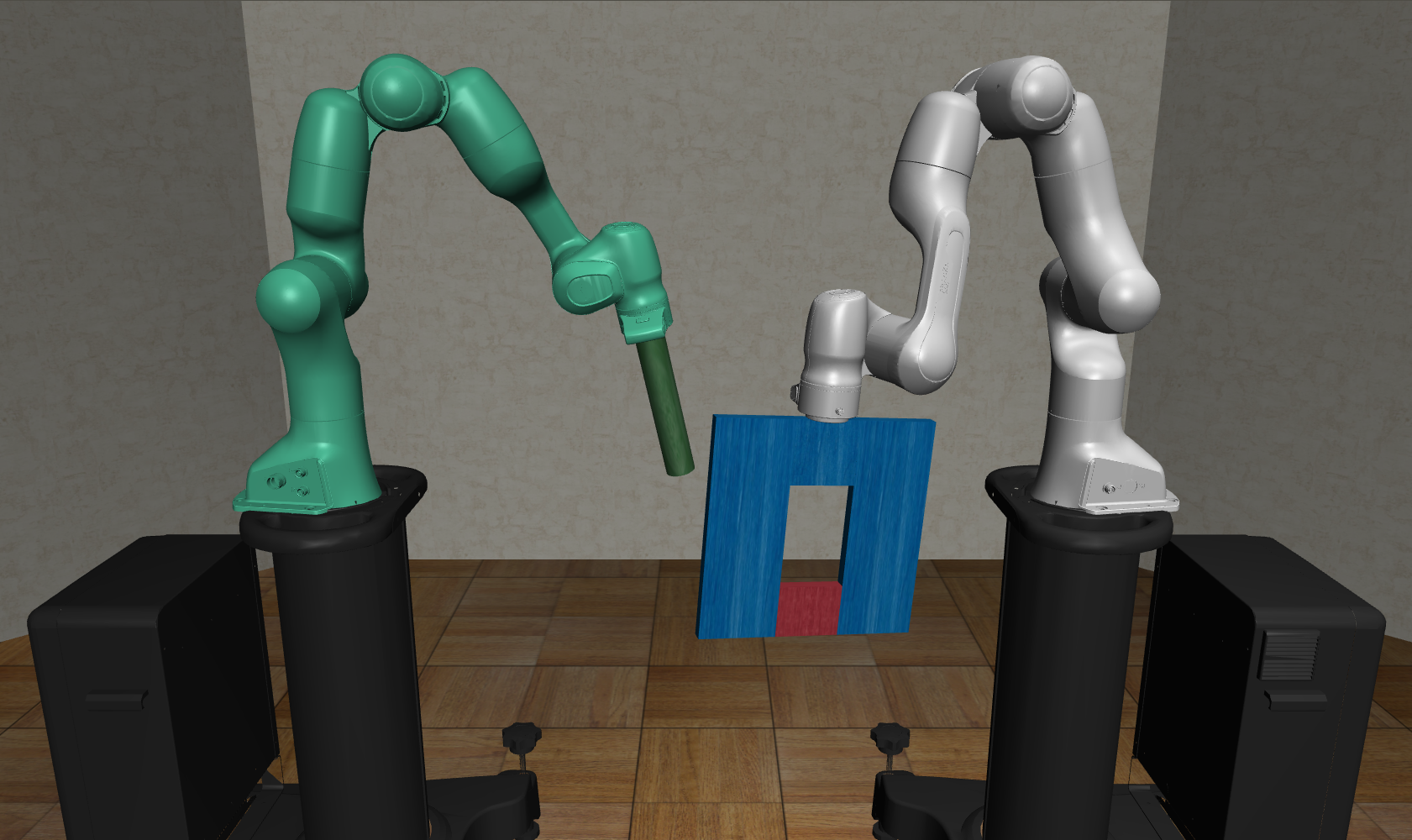}
}
 	\caption{\normalsize Example tasks in SMARobosuite Environment. (a): Safe 2x4-Lift, (b): Safe 4x2-Lift,  (c): Safe 8x1-Lift, (d): Safe 2x4-Stack, (e): Safe 4x2-Stack,  (f): Safe 8x1-Stack, (g): Safe 14x1-TwoArmPegInHole, (h): Safe 2x7-TwoArmPegInHole. Body parts of different colours of robots are controlled by different agents. Agents jointly learn to manipulate the robot, while avoiding crashing into unsafe red areas. 
 	} 
 	\label{appendix-fig:Safety-robosuite-lift-twoarmpeginhole}
 \end{figure*} 

\subsubsection*{Lift \& Stack}
In the tasks, the agents fully cooperatively learn to manipulate robots to grasp  and lift the blue object a certain a distance (\textit{e.g.} at least $0.1~m$) between table and blue object, at the same time, the robot end gripper must keep a certain distance (\textit{e.g.} $0.02~m$) from the table and avoid touch the red table, or it will emit a cost of 1 (see Figure \ref{appendix-fig:Safety-robosuite-lift-twoarmpeginhole} (a), (b), and (c) for Lift, (d), (e), and (f) for Stack).

\begin{align}
\rc_t= \begin{cases}1.0, & \text { for } \quad  |\boldsymbol{p}_{\text {eef }, t+1}-\boldsymbol{z}_{\text {obstacle}}| \leq 0.02 \nonumber \\ 0, & \text { otherwise } \nonumber.\end{cases}
\end{align}
where $\boldsymbol{p}_{\text {eef }, t+1}$ is the $z$-coordinate of the robot's end gripper ,  and $\boldsymbol{z}_{\text {obstacle }}$ is the $z$-coordinate of the  obstacle.

\subsubsection*{TwoArmPegInHole}
 Robots learn to achieve Peg-In-Hole task, specifically, agents need to learn to cooperate fully insert the peg into the
hole, when the peg touches the red areas or the distance between peg and the red areas are less than a certain distance, specifically, if the parallel distance between the robot's peg and robot's hole centre is less than $- 0.11~m$, we think peg reaches the unsafe areas, and it will cause a cost of 1 (see Figure \ref{appendix-fig:Safety-robosuite-lift-twoarmpeginhole} (g) and (h)).


\begin{align}
\rc_t= \begin{cases}1.0, & \text { for } \quad  \boldsymbol{p}_{\text {pd}, t+1}\leq -0.11 \nonumber \\ 0, & \text { otherwise } \nonumber.\end{cases}
\end{align}
where $\boldsymbol{p}_{\text {pd }, t+1}$ is the parallel distance between the robot's peg and robot's hole centre.

\clearpage

\section{Experiments on Safe Many-Agent Ant Environments}
\label{appendix:Experiments of-Different-Safe-ManyAgent-Ant-Environments}

We provide additional results on the Safe Many-Agent ant tasks.

The width of the corridor is $12 m$; its walls fold at the angle of 30 degrees. The environment emits the cost of $1$ for an agent, if the distance between the robot and the wall is less than $1.8 m$, or when the robot topples over. This can be described as
\begin{align}
\rc_t= \begin{cases}0, & \text { for } \quad 0.2 \leq \boldsymbol{z}_{\text {torso }, t+1} \leq 1.0  \text { and } \left\|\boldsymbol{x}_{\text {torso }, t+1}-\boldsymbol{x}_{\text {wall }}\right\|_{2} \geq 1.8 \nonumber \\ 1, & \text { otherwise } \nonumber.\end{cases}
\end{align}
 where $\boldsymbol{z}_{\text {torso }, t+1}$ is the robot's torso's $z$-coordinate, and $\boldsymbol{x}_{\text {torso }, t+1}$ is the robot's torso's $x$-coordinate, at time $t+1$; $\boldsymbol{x}_{\text {wall }}$ is the $x$-coordinate of the wall.

  \begin{figure*}[htbp!]
 \centering
{
\includegraphics[width=0.48\linewidth]{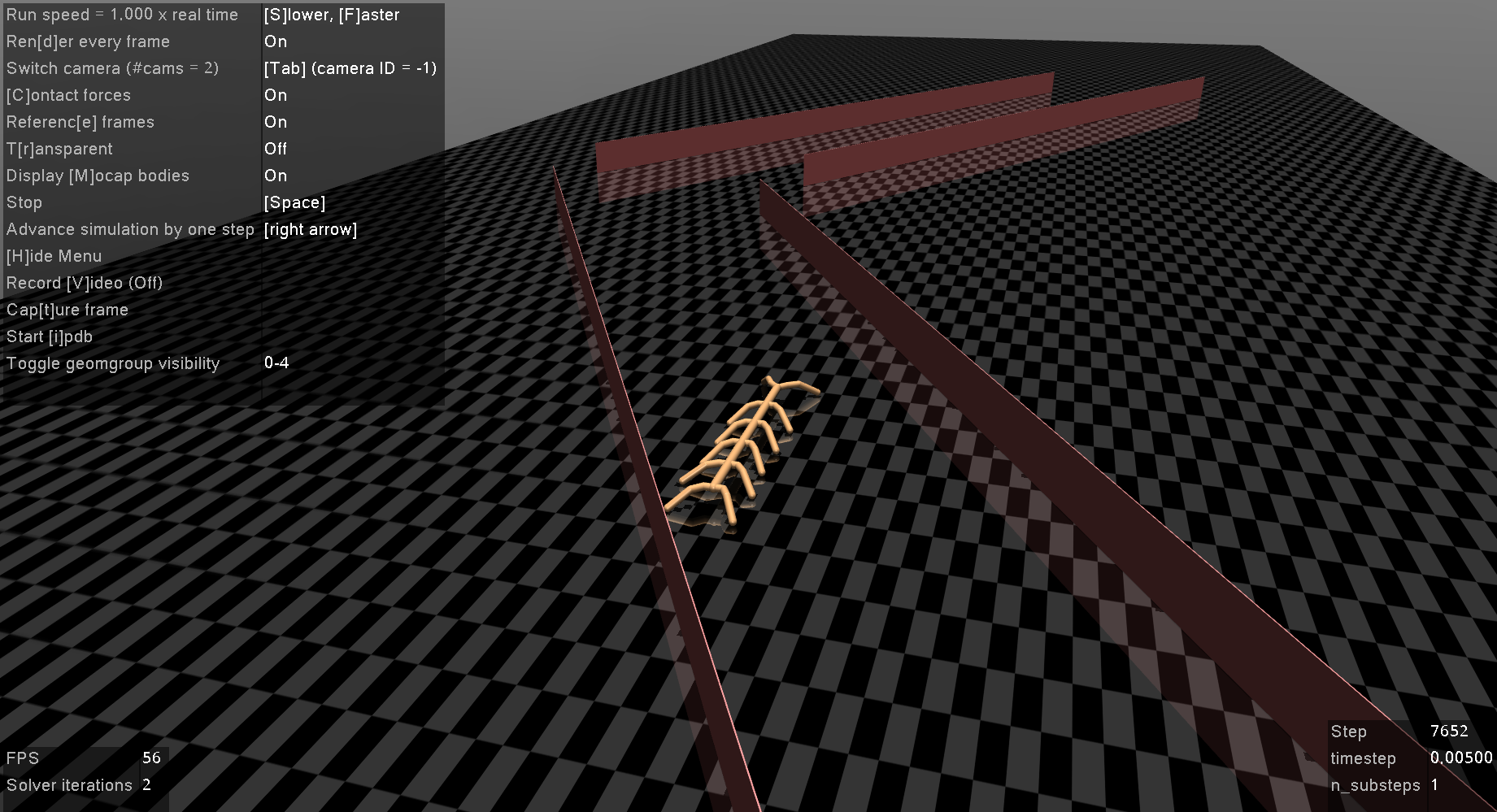}
}
    \vspace{-0pt}
 	\caption{\normalsize Many-Agent Ant 3x2 with two folding line walls. 
 	} 
 	\label{fig:Experiments of-Different-manyagent-ant}
 \end{figure*}

 \begin{figure*}[htbp]
 \centering
{
\includegraphics[width=0.32\linewidth]{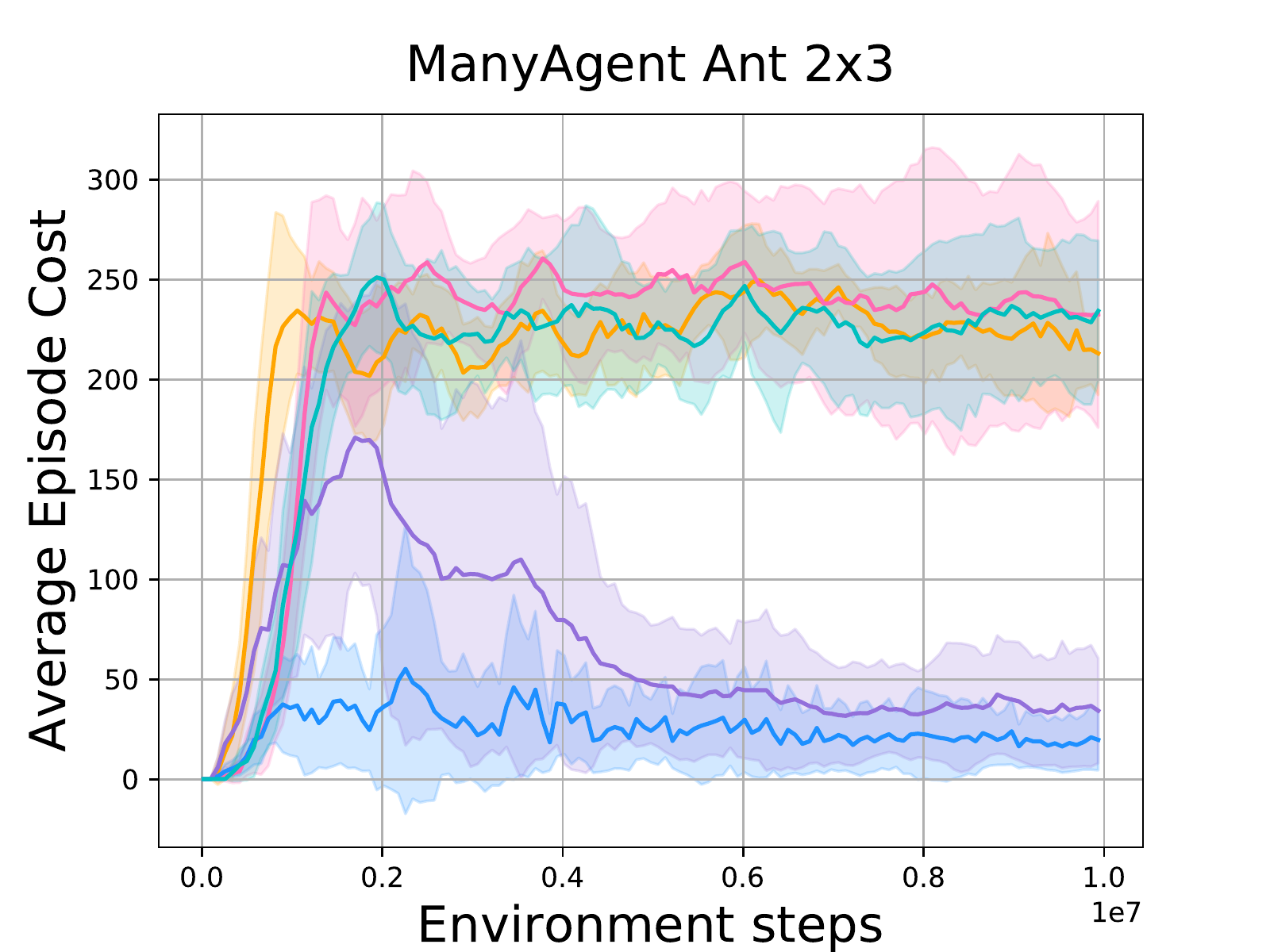}
\includegraphics[width=0.32\linewidth]{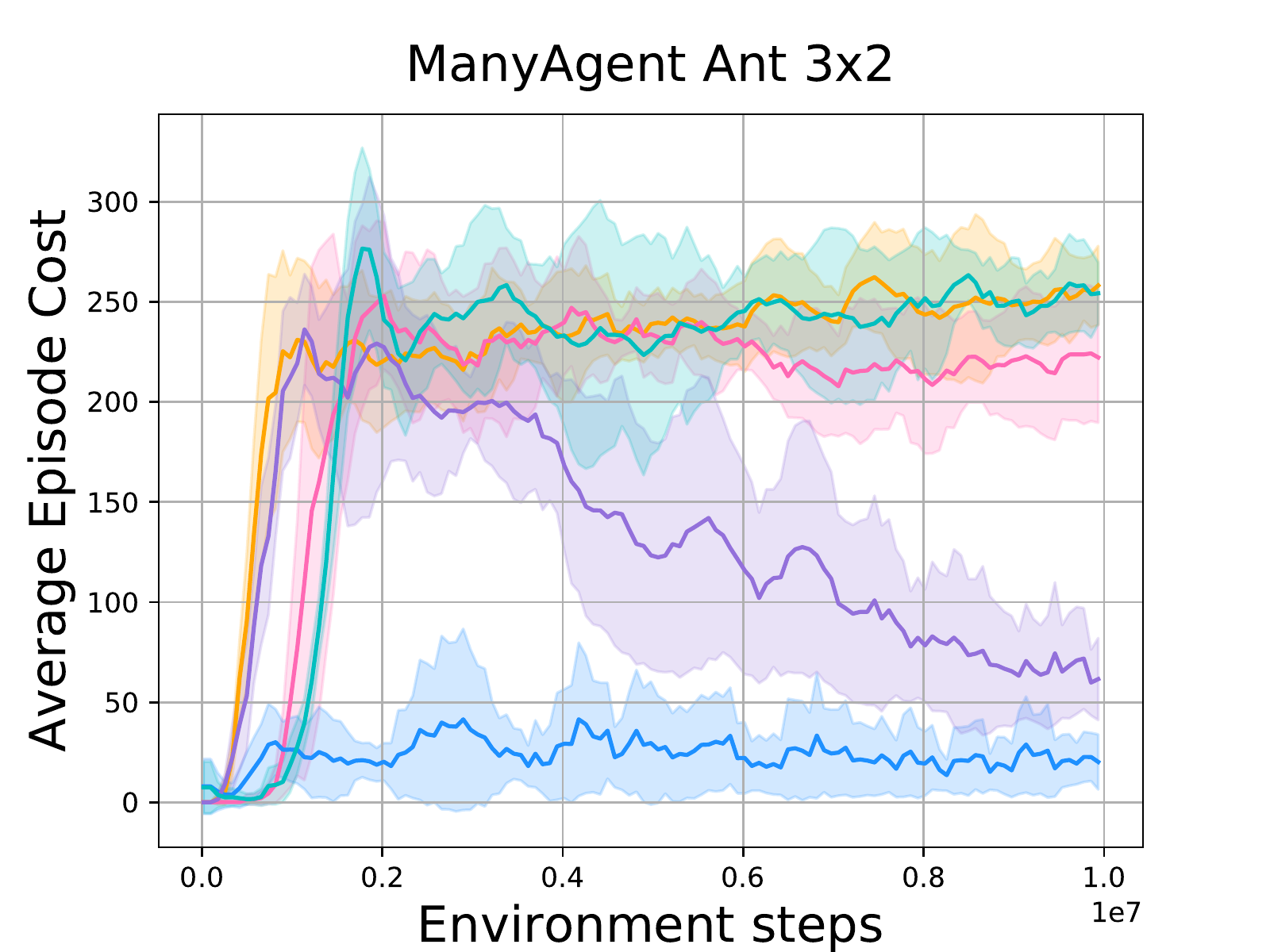}
\includegraphics[width=0.32\linewidth]{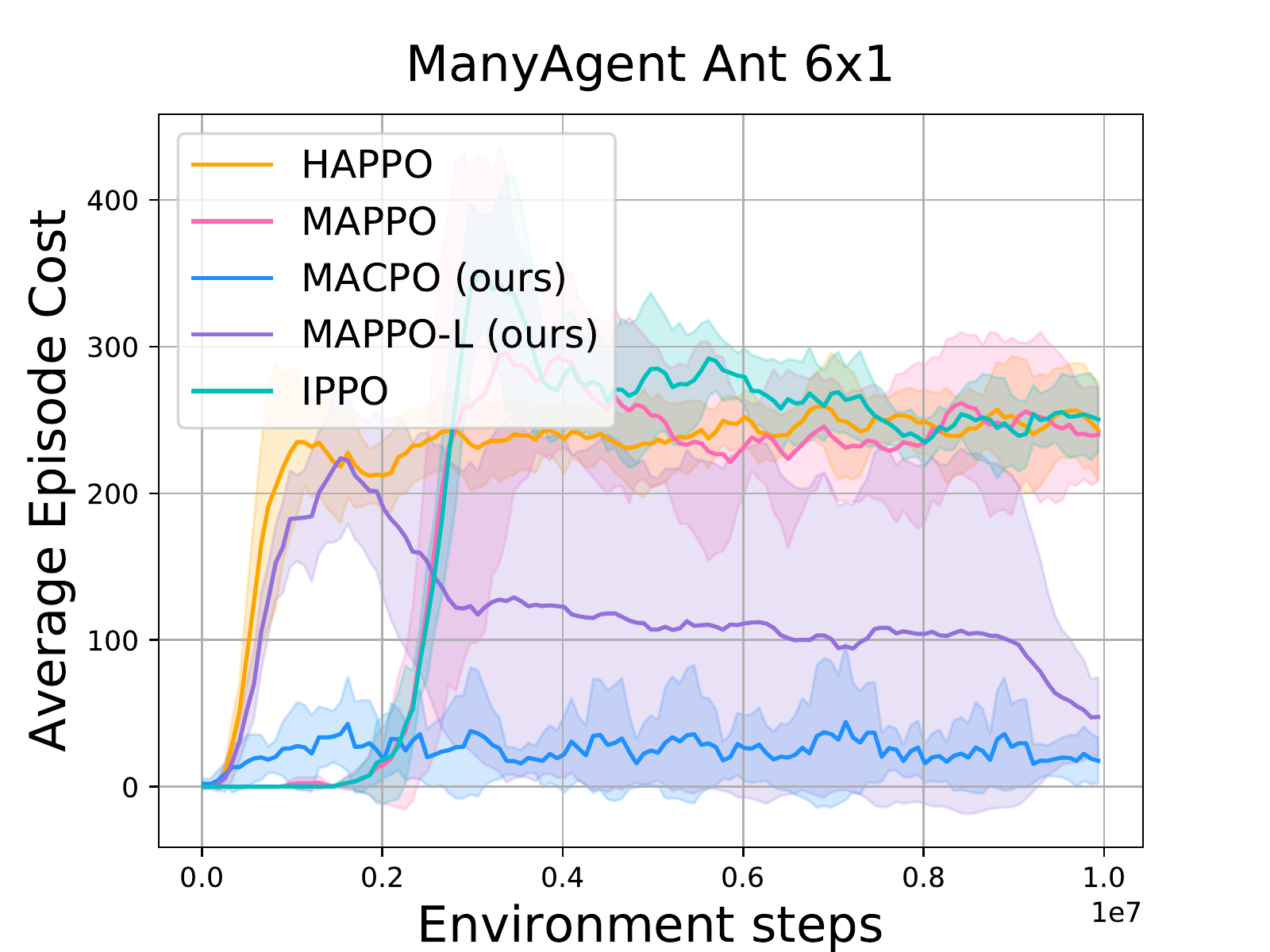}

\includegraphics[width=0.32\linewidth]{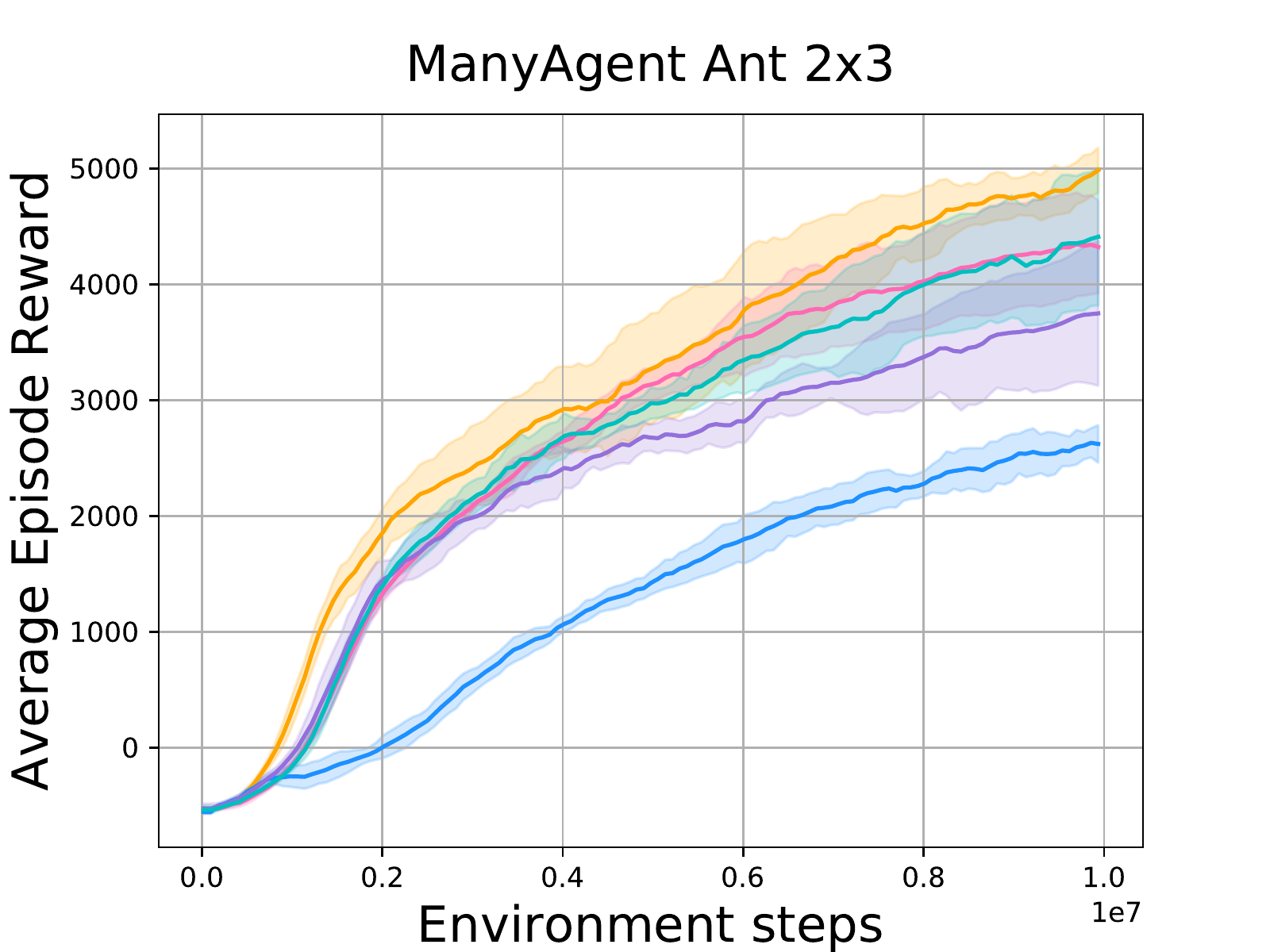}
\includegraphics[width=0.32\linewidth]{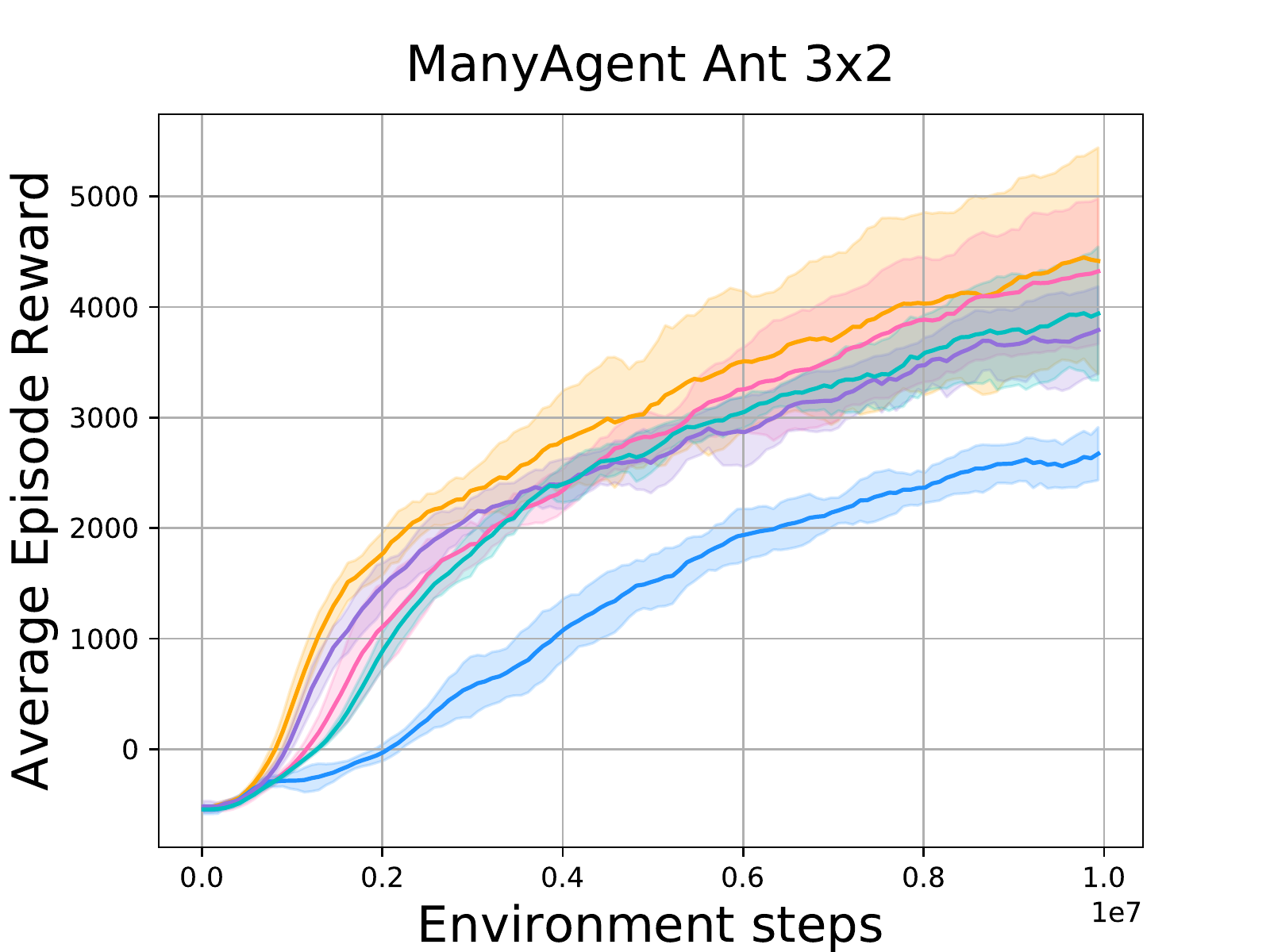}
\includegraphics[width=0.32\linewidth]{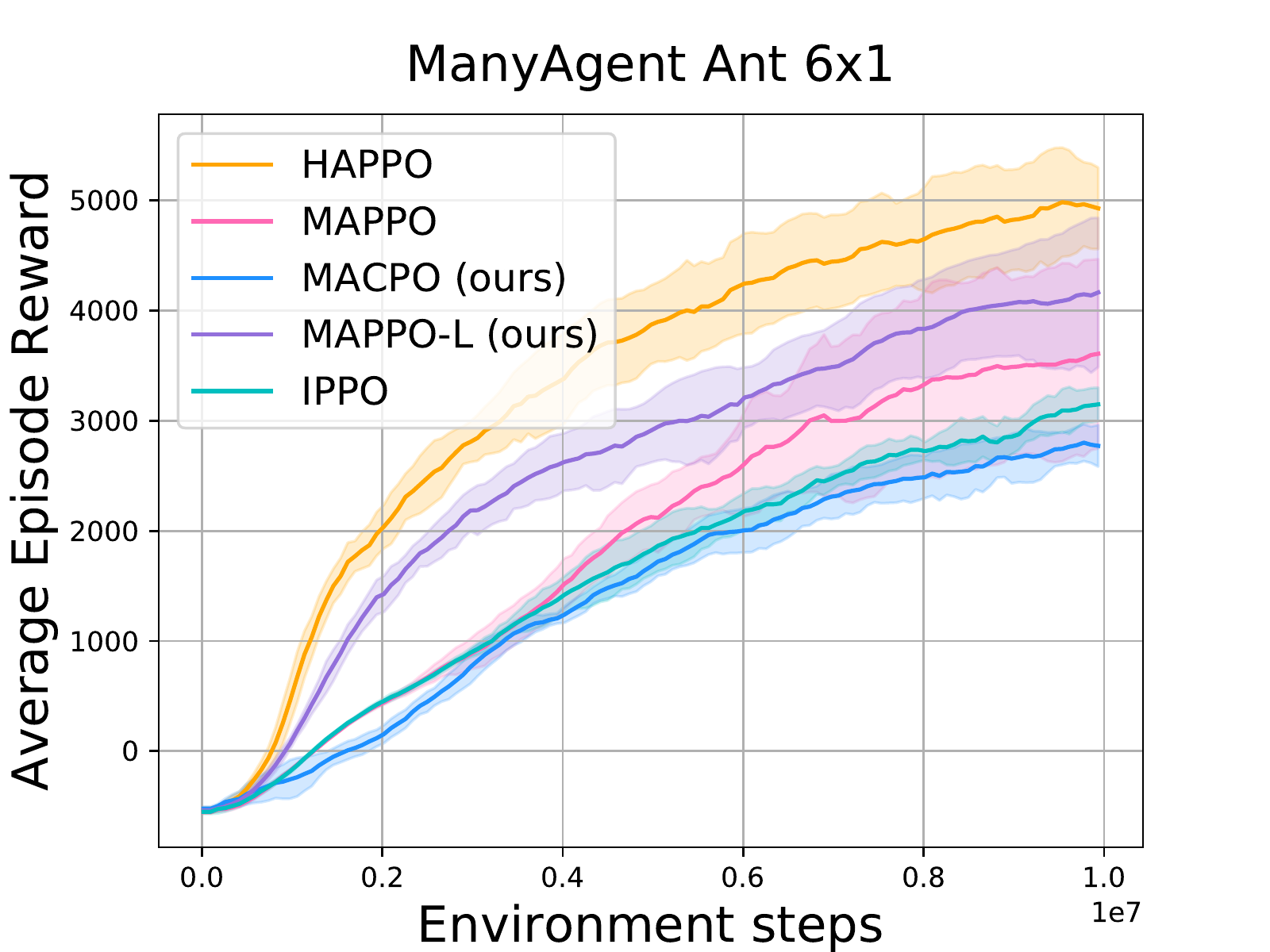}

}

    \vspace{-0pt}
 	\caption{\normalsize Performance comparisons on tasks of Safe ManyAgent Ant in terms of cost (the first row) and reward (the second row). The  safety constraint  values is set to $10$. Our algorithms are the only ones that learn the safety constraints, while achieving satisfying performance in terms of the reward.
 	} 
 	\label{fig:Results-of-Experiments of-Different-manyagent-ant}
 \end{figure*}

\clearpage

\section{Details of Settings for Experiments}
\label{appendix:Details-Settings-Experiments}


In this section, we introduce the details of settings for our experiments. The code is available at \url{https://github.com/chauncygu/Multi-Agent-Constrained-Policy-Optimisation.git}




\begin{table}[htbp]
 \renewcommand{\arraystretch}{1.2}
  \centering
  \begin{threeparttable}
  \label{table3}
    \begin{tabular}{cc|cc|cc}
    \toprule
    hyperparameters & value & hyperparameters & value & hyperparameters & value\\
    \midrule
     critic lr &  5e-3    &    optimizer & Adam       &  num mini-batch & 40 \\
      gamma & 0.99       &       optim eps & 1e-5    &     batch size & 16000\\
      gain & 0.01        &        hidden layer & 1    &     training threads & 4 \\
    std y coef & 0.5    &         actor network & mlp    &     rollout threads & 16\\
  std x coef & 1        &       eval episodes & 32    &   episode length & [1000, 2000]\\
    activation & ReLU      &     hidden layer dim & 64  & max grad norm & 10  \\
    \bottomrule
    \end{tabular}
    \end{threeparttable}
\caption{Common hyperparameters used  for MAPPO-Lagrangian, MAPPO, HAPPO, IPPO, and MACPO in the Safe Multi-Agent MuJoCo and Safe Multi-Agent Robosuite domains (episode length: 1000 used for SMAMuJoCo,2000 used for SMARobosuite ).}
\end{table}

\begin{table}[!htb]
 \renewcommand{\arraystretch}{1.2}
  \centering
  \begin{threeparttable}
  
  \label{table4}
    \begin{tabular}{c|ccccc}
    \toprule
    Algorithms & MAPPO-Lagrangian & MAPPO & HAPPO & IPPO & MACPO\\
    \midrule
    actor lr&  9e-5 &  9e-5 & 9e-5 & 9e-5 & /\\
    ppo epoch & 5& 5 & 5 & 5 & / \\
    kl-threshold & /& / & / & / & [0.0065, 1e-3] \\
    ppo-clip & 0.2 & 0.2 & 0.2 & 0.2 & / \\
    Lagrangian coef & 0.78 & / & / & / & / \\
    Lagrangian lr & 1e-3 & / & / & / & / \\
    fraction & / & / & / & / & 0.5 \\
    fraction coef & / & / & / &  / & 0.27 \\
    \bottomrule
    \end{tabular}
    
    \end{threeparttable}
\caption{Different hyperparameters used for MAPPO-Lagrangian, MAPPO, HAPPO, IPPO, and MACPO (kl-threshold: 0.0065 used for SMAMuJoCo, 1e-3  used for SMARobosuite)  in the SMAMuJoCo and SMARobosuite domains.}
\end{table}

  

    

\begin{table}[!htbp]
 \renewcommand{\arraystretch}{1.2}
  \centering
  \begin{threeparttable}
  
  \label{table6}
    \begin{tabular}{cc|cc|cc}
    \toprule
    task & value & task & value & task & value\\
    \midrule
    Ant(2x4) & 0.2             &       Ant(4x2) & 0.2                  &    Ant(2x4d) & 0.2   \\
    HalfCheetah(2x3) &  5     &       HalfCheetah(3x2) & 5          &  HalfCheetah(6x1)& 5  \\
    ManyAgent Ant(2x3) & 1          &       ManyAgent Ant(3x2) & 1              &     ManyAgent Ant(6x1) & 1 \\

    \bottomrule
    \end{tabular}
    
    \end{threeparttable}
\caption{Safety bound used for MACPO in the SMAMuJoCo  and SMARobosuite (30 used for TWoArmPegInHole) domains.}
\end{table}

\end{document}